\DeclareMathOperator{\diag}{diag}
\newcommand{\Mat}{\boldsymbol}
\newcommand{\Set}{\mathcal}
\newcommand{\real}{\mathbb{R}}
\newcommand{\integer}{\mathbb{N}}
\theoremstyle{plain}
\newtheorem{theorem}{Theorem}[section]
\newtheorem{proposition}[theorem]{Proposition}
\newtheorem{lemma}[theorem]{Lemma}
\theoremstyle{definition}
\theoremstyle{remark}
\newtheorem{remark}[theorem]{Remark}
\icmltitlerunning{Rethinking Addressing in Language Models via Contextualized Equivariant Positional Encoding}
\begin{document}

\twocolumn[
\icmltitle{Rethinking Addressing in Language Models via Contextualized \\ Equivariant Positional Encoding}

\icmlsetsymbol{equal}{*}

\begin{icmlauthorlist}
\icmlauthor{Jiajun Zhu}{equal,ut,zju}
\icmlauthor{Peihao Wang}{equal,ut}
\icmlauthor{Ruisi Cai}{ut}
\icmlauthor{Jason D. Lee}{princeton}
\icmlauthor{Pan Li}{gatech}
\icmlauthor{Zhangyang Wang}{ut}
\end{icmlauthorlist}

\begin{center}
\faGithub~ \url{https://github.com/VITA-Group/TAPE}
\end{center}

\icmlaffiliation{ut}{University of Texas at Austin}
\icmlaffiliation{zju}{Zhejiang University}
\icmlaffiliation{princeton}{Princeton University}
\icmlaffiliation{gatech}{Georgia Tech}

\icmlcorrespondingauthor{Jiajun Zhu}{jiajunzhu@utexas.edu}

\icmlkeywords{Machine Learning, ICML}

\vskip 0.3in
]

\printAffiliationsAndNotice{\textsuperscript{*}Equal contribution. Work was partially done when J. Zhu was an undergraduate at ZJU.} %

\begin{abstract}
Transformers rely on both content-based and position-based addressing mechanisms to make predictions, but existing positional encoding techniques often diminish the effectiveness of position-based addressing. Many current methods enforce rigid patterns in attention maps, limiting the ability to model long-range dependencies and adapt to diverse tasks. Additionally, most positional encodings are learned as general biases, lacking the specialization required for different instances within a dataset.
To address this, we propose con\textbf{T}extualized equivari\textbf{A}nt \textbf{P}osition \textbf{E}ncoding (\textbf{TAPE}), 
a novel framework that enhances positional embeddings by incorporating sequence content across layers. TAPE introduces dynamic, context-aware positional encodings, overcoming the constraints of traditional fixed patterns. 
We show that TAPE can provably facilitate LLM reasoning ability by emulating a broader class of algorithms.
By enforcing permutation and orthogonal equivariance, TAPE ensures the stability of positional encodings during updates, improving long-context ability.
Our method can be easily integrated into pre-trained transformers, offering parameter-efficient fine-tuning with minimal overhead.
Extensive experiments show that TAPE achieves superior performance in language modeling, arithmetic reasoning, and long-context retrieval tasks compared to existing positional embedding techniques.
\end{abstract}

\section{Introduction}
\label{sec:intro}

Attention mechanisms are a core component of many modern deep learning architectures, enabling models to selectively focus on relevant information within a given context. Transformer models ~\citep{vaswani2017attention} and their numerous variants ~\citep{carion2020end, doso2021vit, zhao2021point}, which are fundamentally driven by attention, have revolutionized tasks involving sequential and spatial data, such as text~\citep{kitaev2020reformer}, image ~\citep{doso2021vit}, and point cloud ~\citep{zhao2021point}. More recently, large transformer models have become dominant in natural language understanding, language generation, and complex reasoning~\citep{brown2020language}.

Delving into attention's underlying paradigm, the prediction made for each token is expressed as a weighted aggregation over the representations of other tokens.
Due to the softmax function, attention often generates a sparse mask, extracting a limited subset of tokens for interaction.
Through this interpretation, attention can be understood as an \textit{addressing} mechanism \citep{hopfield1982neural, pagiamtzis2006content} that searches the context, locating and retrieving token representations deemed most relevant or important.

Since the attention score is computed upon token features and positions (see Sec. \ref{sec:prelim}),  transformers' addressing ability can be further decomposed into two fundamental mechanisms: \textit{content-based} addressing and \textit{position-based} addressing.
Content-based addressing recognizes relevant tokens through feature similarity, while position-based addressing is facilitated by positional encoding techniques, designed to (ideally) enable random access along the sequence via indexing.
It is important to let the two mechanisms cooperate to tackle more complex tasks, such as in-context retrieval \citep{hinton2014parallel, ba2016using}, arithmetic \citep{lee2023teaching, mcleish2024transformers}, counting \citep{golovneva2024contextual}, logical computation \citep{liu2024exposing}, and reasoning \citep{wei2022chain, rajani2019explain, dziri2024faith}.
However, we contend that the role of position-based addressing is limited, if not diminishing, in modern transformer architectures \citep{ebrahimi2024your}.

It has not escaped our notice that most existing positional encodings weakens the position-based addressing capability.
Recent works \citep{press2021train, su2024roformer, chi2022dissecting, sun2022length} impose a fixed and somewhat artisanal pattern on attention maps, typically adopting a decaying pattern in relation to relative distances, thereby enforcing a locality bias.
This rigidity limits the ability of attention to model long-range dependencies and operate on a flexible group of tokens.
Although some positional encodings have trainable parameters \citep{vaswani2017attention, shaw2018self, chi2022kerple, li2023functional}, the hypothesis space is often excessively constrained.

Perhaps more crucially, most existing positional encodings are handcrafted and learned as a general bias across the entire dataset, lacking specialization and adaptability to specific instances informed by the context.
In more advanced tasks, such as general-purpose algorithmic reasoning, LLMs rely on attention to internalize the representation of a computation graph, which must vary according to the problem specified in the context \citep{elhage2021mathematical, ameisen2025circuit}.
However, transformers without context-aware position-based addressing have been theoretically shown to represent only a restricted family of algorithms \citep{merrill2023parallelism}.

To unleash the power of position-based addressing, we endeavor to design a more universal and generic position encoding for language transformers.
We introduce \textit{Contextualized Equivariant Positional Encoding} (\textbf{TAPE}), a novel framework designed to contextualize positional embeddings by incorporating sequence content.
TAPE continually progresses information flow between positional embeddings and token features via specialized attention and MLP layers. To ensure the stability during model updates, we enforce permutation and orthogonal group equivariance properties on attention and MLP layers.
This approach is inspired from the studies for geometric deep learning which represents structural patterns (e.g., graphs) by integrating token features with their geometric properties while preserving inherent physical symmetries~\citep{wanggraph,huang2024can}.
Building on this motivation, we encode algorithmic structures into transformers and theoretically show that TAPE achieves greater representational power, capturing algorithms of higher complexity than those representable by standard transformers.
Further on, by enforcing equivariance priniples, TAPE maintains the relative relationships between encoded positions, ensuring transformer's long-context ability and length generalization.

Technically, we extend conventional vectorized positional embeddings into a multi-dimensional tensor, which enriches interactions between positional embeddings and token features.
In the attention mechanism, TAPE incorporates the pairwise inner product between positional encodings, allowing attention values to be computed based on not only token similarities but also positional proximities.
We additionally customize an MLP layer that directly mixes token features with positional encodings, while preserving orthogonal equivariance.

We demonstrate the superior performance of TAPE on arithmetic reasoning tasks~\citep{abacus}, which require LLMs to effectively locate/address and retrieve specific tokens, as well as on representative natural language tasks, including SCROLLS~\citep{shaham2022scrolls} and passkey retrieval~\citep{mohtashami2023landmark}, to validate the generalizability of the framework.

Our contributions are summarized as follows:
\begin{itemize}
    \item We introduce TAPE, a novel framework learning to represent token positions in the feature space jointly with sequential learning. TAPE contextualizes positional embeddings with sequence content across layers to enhance the position-addressing ability of transformers. We further enforce TAPE with permutation and orthogonal equivariance to guarantee the stability of positional encodings during the update.
    We theoretically show that TAPE achieves higher expressivness for algorithmic reasoning. 
    \item We propose practical implementations for our TAPE, which extends conventional positional embeddings into multi-dimensional and facilitates attention and MLP in transformers with two levels of equivariance. We also show that TAPE has hardware-efficient implementation and can be used as a drop-in component into extant pre-trained models for parameter-efficient fine-tuning.
    \item 
    Extensive experiments showcase TAPE's superiority in both training from scratch and parameter-efficient fine-tuning scenarios for language modeling as well as downstream tasks such as arithmetic reasoning and long-context retrieval. TAPE achieves state-of-the-art performance in language modeling, surpassing baselines in perplexity reduction for long sequences.
    We also report the state-of-the-art performance of TAPE in long-context tasks such as passkey retrieval tasks with LLM fine-tuning,  and in arithmetic learning.
\end{itemize}

\section{Preliminaries}
\label{sec:prelim}

In this work, we aim to design expressive and generalizable positional embeddings for transformers to address complex language tasks.
Let $\Mat{X} = \begin{bmatrix} \Mat{x}_1 \cdots \Mat{x}_N \end{bmatrix}^\top \in \real^{N \times C}$ represent the input sequence of tokens, where $N$ is the context length and $C$ is the feature dimension, and $\Mat{M} \in \{0, 1\}^{N \times N}$ denotes the cross-token dependencies, often used as attention mask in transformers.
Transformers learn token representations using the attention mechanism \citep{vaswani2017attention}, which propagates information across tokens by computing pairwise correlations. 
Since pure attention is inherently permutation-equivariant, language models integrate positional information into the attention computation to differentiate tokens based on their positions.

\subsection{High-Dimensional Features as Positional Encoding}

One common approach is to leverage high-dimensional features to represent positions.
Positional encoding can be formulated as a series of embeddings attached to each token index $\Mat{e}_1 \cdots \Mat{e}_N$, with the shape of $\Mat{e}_i$ determined by the specified positional encoding schemes.
When computing the attention value, the pre-softmax attention value can be in general formulated as below\footnote{For simplicity, we ignore the denominator $\sqrt{F}$ by default.}. For every $i, j \in [N]$ such that $\Mat{M}_{i,j} \ne 0$:
\begin{align}
\alpha_{i,j} = q(\Mat{x}_i, \Mat{e}_i)^\top k(\Mat{x}_j, \Mat{e}_j),
\end{align}
where $q(\cdot, \cdot)$ and $k(\cdot, \cdot)$ are generalized query and key transformations that incorporate positional features.
The original transformer paper \citep{vaswani2017attention} assigns each absolute token index a vector of length identical to token embeddings, either learnable or fixed as sinusoidal waves: $\Mat{e}_i \in \real^C$. The query and key transformations directly add the positional information into token features at the first layer: $q(\Mat{x}_i, \Mat{e}_i) = \Mat{W}_Q(\Mat{x}_i + \Mat{e}_i)$ and $k(\Mat{x}_j, \Mat{e}_j) = \Mat{W}_K(\Mat{x}_j + \Mat{e}_j)$ for some query and key matrices $\Mat{W}_Q, \Mat{W}_K \in \real^{C \times C}$.
\citet{shaw2018self} introduces learnable embeddings for relative distances, which are applied to the key vector during attention computation.
More recently, Rotary Position Encoding (RoPE) \citep{su2024roformer} has gained widespread adoption in modern LLMs \citep{touvron2023llama, touvron2023llama2, biderman2023pythia, chowdhery2023palm, jiang2023mistral}.
RoPE encodes absolute positions using a series of block-wise rotation matrices $\Mat{E} \in \real^{N \times C / 2 \times 2 \times 2}$, while implicitly capturing relative distances during dot-product attention.
Formally, the positional embeddings and the transformation $q(\cdot, \cdot)$ are defined as shown below, with $k(\cdot, \cdot)$ adhering to a similar formulation:
\begin{align}
& \Mat{q}(\Mat{x}_i, \Mat{e}_i) = \Mat{R}_i \Mat{W}_{Q} \Mat{x}_i, \quad
\Mat{R}_i = \diag(\Mat{e}_{i,1}, \cdots, \Mat{e}_{i,C/2}), \nonumber \\
\label{eqn:rope} & \hspace{5em} \Mat{e}_{i,m} = \begin{bmatrix} \cos(\theta_m i) & -\sin(\theta_m i) \\ \sin(\theta_m i) & \cos(\theta_m i) \end{bmatrix},
\end{align}
where $\diag(\cdot)$ constructs a block-diagonal matrix by concatenating the arguments on the diagonal.
In RoPE, the hyper-parameters $\theta_m$ ranges from $\theta_m = -10000^{2m / C}, m \in [C/2]$.
Subsequent works explore methods to extend the context length for RoPE-based LLMs through the adoption of damped trigonometric series  \citep{sun2022length}, positional interpolation \citep{chen2023extending} and adjustments to coefficients$\{\theta_m\}_{m \in [C/2]}$ \citep{ntk_rope, peng2023yarn, liu2023scaling}.

\subsection{Attention Bias as Positional Encoding}

An alternative method for encoding positional information involves applying a bias to the attention map, conditioned on the relative distances between tokens during the attention computation.
The pre-softmax attention value with bias can be formulated as:
\begin{align}
\alpha_{i,j} = (\Mat{W}_Q \Mat{x}_i)^\top (\Mat{W}_K \Mat{x}_j) + b(i, j),
\end{align}
where $b(i, j): \integer \times \integer \rightarrow \real$ is a bias regarding the token indices $i$ and $j$.
Many existing positional encoding methods can be interpreted as various instantializations of $b(i, j)$.
We follow \citet{li2023functional} to summarize a few examples.
\textit{(i)} In T5 \citep{raffel2020exploring}, $b(i, j) = r_{\min\{i - j, L_{max}\}}$, where $L_{max}$ denotes the maximal relative distance considered, and $\{r_i \in \real: i \in [0, L_{max}]\}$ are learnable scalars.
\textit{(ii)} Alibi \citep{press2021train} simplifies the bias term to $b(i, j) = -r \lvert i - j \rvert$, where $r > 0$ is a hyperparameter that acts as the slope, imposing a linear decay pattern based on the relative distance.
\textit{(iii)} Kerple \citep{chi2022kerple} enforces a logarithmic or power decay rate: $b(i, j) = -r_1 \log(1 + r_2 \lvert i - j \rvert)$ and $b(i, j) = -r_1 \lvert i - j \rvert^{r_2}$ respectively, where $r_1, r_2 > 0$ are hyperparameters.
\textit{(iv)} FIRE \citep{li2023functional} learns a neural network with parameters $\Mat{\theta}$ to model the bias: $b(i, j) = f_{\Mat{\theta}}(\psi(i - j) / \psi(\max\{i, L\}))$, where $\psi(x) = \log(cx + 1)$, and $L > 0$ is a hyperparameter.

\section{Our Approach}
\label{sec:approach}

\begin{figure*}[t]
\centering
\includegraphics[width=0.85\linewidth, trim=20 10 0 10, clip]{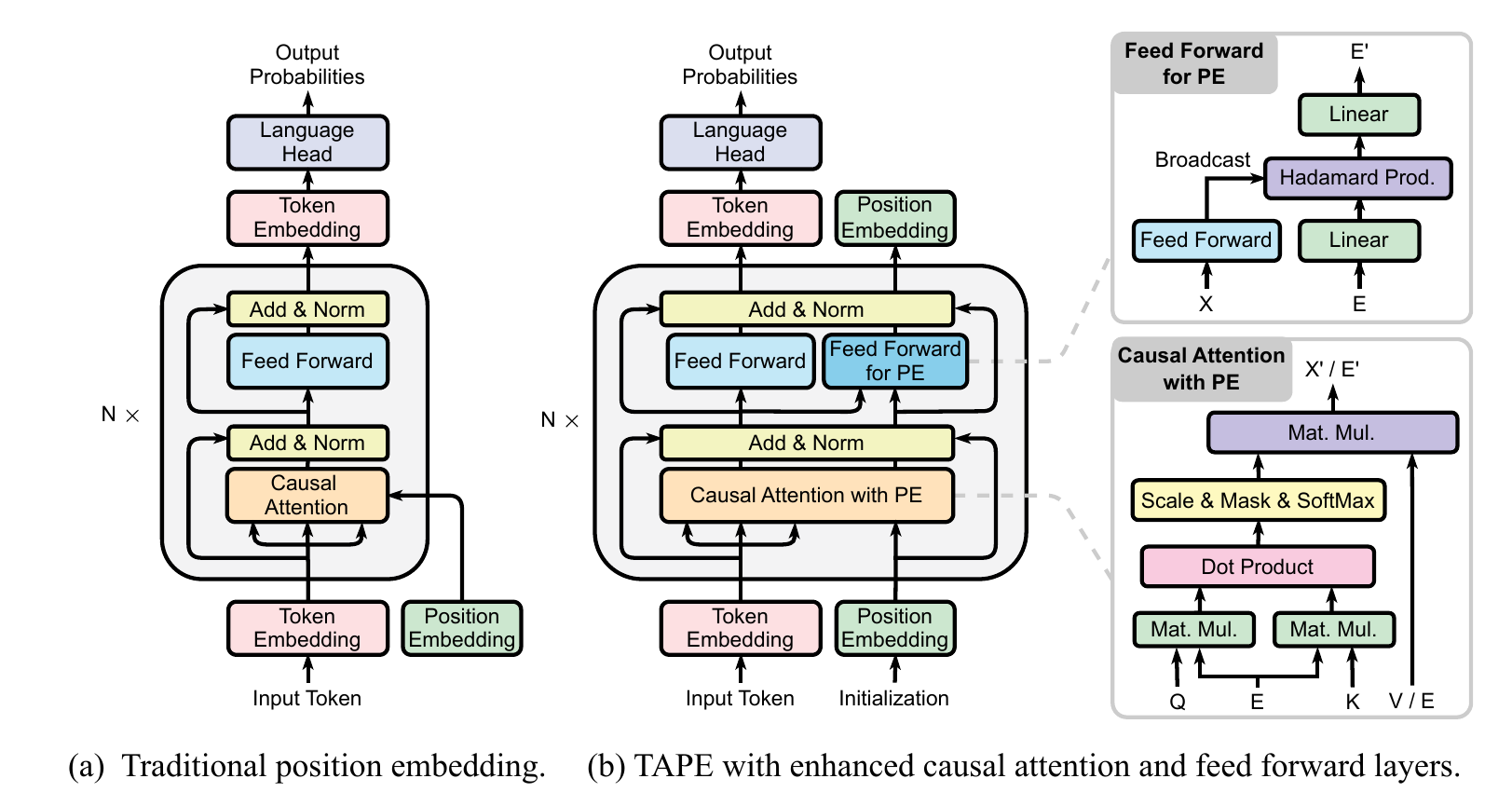}
\vspace{-1em}
\caption{Overview of our proposed TAPE in standard decoder-only architecture. Different from traditional positional encoding, TAPE represents and updates positional features layer-wisely through interactions and joint training with token representations.}
\label{fig:overview}
\vspace{-1.5em}
\end{figure*}

\subsection{Motivations and Design Principles}
\label{sec:design}
 
In the paper, we interpret the attention mechanism as an addressing system, where row-wise attention scores can be viewed as an indicator vector locating important tokens in the context to inform predictions for the current token.
The underlying addressing mechanisms include both content-based addressing, which locates tokens via feature similarity, and position-based addressing, which leverages positional encodings to extract location-based information.
Content-based addressing is often prioritized in language modeling \textendash~which is evidenced by a series of simplifications on positional encoding in the literature \citep{press2021train, haviv2022transformer, wang2024length, kazemnejad2024impact} \textendash~due to the fact that natural language semantics primarily depend on the meaning of constituent words rather than their arrangement order \citep{sinha2021masked}.
However, position-based addressing can sometimes be crucial for many advanced tasks.
\citet{ebrahimi2024your} demonstrates that in arithmetic tasks \citep{lee2023teaching}, a token's position is as important as its value.
An ideal attention map for performing addition needs to exclusively rely on token indices.

Moreover, we observe that the interaction between token features and positional embeddings is lacking in current transformers. \citet{golovneva2024contextual} demonstrate that incorporating the interplay between context and positional information allows for more flexible addressing, leading to improvements in complex compositional tasks such as algorithm execution and logical reasoning \citep{liu2024exposing}.
In domains with data characterized by intricate geometries, such as graphs and point clouds, capturing the interaction between node or point features and their geometric locations is essential for effectively learning structural patterns relevant to tasks~\citep{wanggraph, huang2024can}.
Algorithmic reasoning requires attention to be endowed with similar properties to internalize representations of computation graphs in order to emulate corresponding algorithms for solving specified tasks \citep{elhage2021mathematical, ameisen2025circuit}.
However, \citet{merrill2023parallelism} shows that purely content-based addressing attention is subject to a limited algorithm class.

Based on above observations, we aim to establish a more expressive family of positional encoding, which can be effectively informed by the context to facilitate position-based addressing in LLMs.
The main idea is to customize attention and MLP modules in transformers such that they can iteratively update positional embeddings at each layer with sequence content, and use the updated embeddings as the positional encoding for the next layer.
We formally outline a couple of key design principles below.

\paragraph{General Formulation.}
Let a tuple $(\Mat{X}, \Mat{E}, \Mat{M})$ represent a sequence, where $\Mat{X} \in \Set{X} \subseteq \real^{N \times C}$ are the token features, $\Mat{E} \in \Set{E} \subseteq \real^{N \times D}$ are the positional embeddings, $\Mat{M} \in \Set{M} = \{0,1\}^{N \times N}$ encodes the intra-data dependencies, often used as attention mask.
For causal language modeling, $\Mat{M}$ becomes constantly a binary lower-triangular matrix ($\Mat{M}_{i,j} \ne 0$ if and only if $j \le i$).
We define a generalized transformer block consisting of two separate operations: \emph{token mixing} and \emph{position contextualization}.
The token mixing is formulated as a function $f: \Set{X} \times \Set{E} \times \Set{M} \rightarrow \Set{X}$, which combines token features and positional embeddings to represent each token.
The \emph{position contextualization} $g: \Set{X} \times \Set{E} \times \Set{M} \rightarrow \Set{E}$ encodes the context information into the positional embeddings.
A standard transformer layer does not have a component corresponding to $g$ that layer-wisely represents position encodings.

\paragraph{Tensorial Positional Encoding.} 
Our first enhancement extends positional encodings to a multi-dimensional format and diversifies their coupling with token features to allow for richer interactions among token and position representations, drawing inspiration from positional encodings used in geometric learning \citep{deng2021vector,wanggraph,huang2024can}.
We first divide the hidden dimension into $M$ blocks, resulting $\Mat{X} = [\Mat{x}_1, \cdots, \Mat{x}_N]^\top \in \real^{N \times M \times B}$ with $\Mat{x}_i \in \real^{M \times B}$ and $B=C/M$.
We propose to assign each block $L$-many $R$-dimension positional embeddings.
Therefore, we reorganize positional embeddings as $\Mat{E} = [\Mat{e}_1, \cdots, \Mat{e}_N]^\top \in \real^{N \times M \times L \times R}$ with $\Mat{e}_i \in \real^{M \times L \times R}$ and $D = M \times L \times R$.

\paragraph{Equivariance Principles.}
Second, we establish two fundamental criteria for the design of functions $f$ and $g$.
Conceptually, by representing each token as a tuple comprising its token and positional embedding, the entire sequence can be viewed as an unordered set. This implies that permuting these tuples arbitrarily will not alter the outputs of $f$ and $g$, aside from a corresponding change in order \citep{zaheer2017deep, lee2019set}.
We note that this is an intrinsic property of standard attention.
Furthermore, we aim for the positional embeddings to effectively model relative distances, necessitating that $f$ remains invariant to translations in the token positions \citep{sun2022length}.
This invariance can be achieved by structuring $f$ and $g$ to depend on the positional embeddings in a manner invariant and equivariant, respectively, to orthogonal transformations along the last dimension \citep{villar2021scalars}.
Formally, let us denote $\Pi(N)$ as a permutation group over $N$ elements, and $O(R)$ as an orthogonal group over the $R$-dimension Euclidean space.
The two aforementioned criteria require $f$ and $g$ to satisfy that: for $\forall \Mat{P} \in \Pi(N), \Mat{R} \in O(R)$,
\begin{align}
\label{eqn:perm_equiv} f(\Mat{P}\Mat{X}, \Mat{P}\Mat{E}\Mat{R}, \Mat{P}\Mat{M}\Mat{P}^\top) &= \Mat{P}f(\Mat{X}, \Mat{E}, \Mat{M}) \\
\label{eqn:ortho_equiv}
g(\Mat{P}\Mat{X}, \Mat{P}\Mat{E}\Mat{R}, \Mat{P}\Mat{M}\Mat{P}^\top) &= \Mat{P} g(\Mat{X}, \Mat{E}, \Mat{M}) \Mat{R}
\end{align}
where left-multiplication of $\Mat{P} \in \Pi(N)$ permutes on the first dimension of $\Mat{X}$ and $\Mat{E}$, while right-multiplication of $\Mat{R} \in O(R)$ applies to the last dimension of tensor $\Mat{E}$.
When permutation is applied for the attention mask, it permutes both columns and rows accordingly.
We note that attention with RoPE inherently satisfies Eq. \ref{eqn:perm_equiv}, with the invariant orthogonal group being $O(2)$.
We formalize the merits of position contextualization and group invariance in Sec. \ref{sec:theory}.
In summary, we prove that contextualized positional encoding improves flexibility and adaptability to represent a broader class of algorithms than either raw attention or RoPE.
Additionally, orthogonal group equivariance ensures that the positional encodings remain relative, facilitating generalization across varying context lengths.

\begin{figure*}[t]
    \centering
    \includegraphics[width=\linewidth, trim=150 0 50 0, clip]{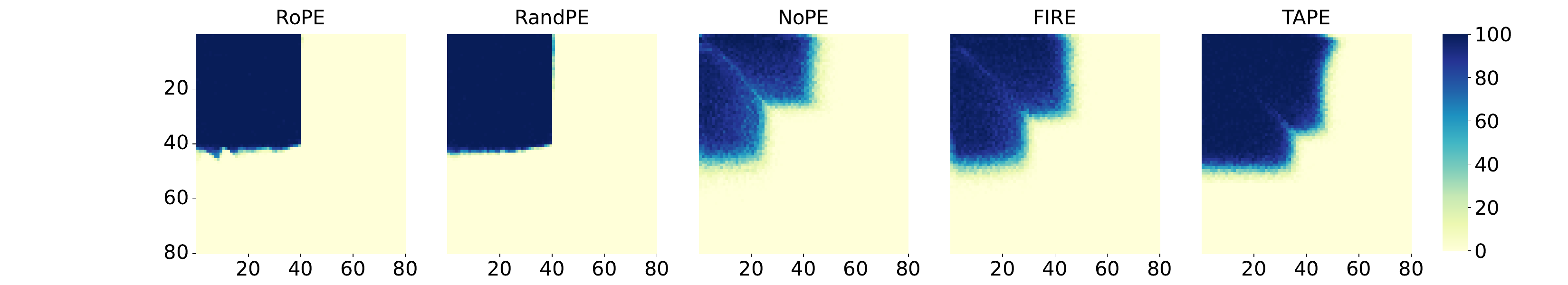}
    \vspace{-1.5em}
    \caption{Accuracy on addition task between different methods on 2$\times$  context length. The x- and y-axes represent the sequence lengths of the two operands respectively. Models are trained on sequence with length up to 40 while tested on sequence with length up to 80. The average accuracy across the heatmap is 26.32\%, 26.56\%, 22.45\%, 26.98\% and 32.82\% respectively for RoPE \citep{su2024roformer}, RandPE \citep{ruoss2023randomized}, NoPE \citep{kazemnejad2024impact}, FIRE \citep{li2023functional}, and our TAPE.}
    \vspace{-1em}
    \label{fig:arithmetic}
\end{figure*}

\subsection{Contextualized Positional Encoding with Equivariance}
\label{sec:tape_arch}

In this section, we instantiate design principles discussed in Sec. \ref{sec:design} as a practical neural architecture.
We note that although there are lots of ways to achieve conditions in Eq. \ref{eqn:perm_equiv} and \ref{eqn:ortho_equiv} \citep{dym2020universality, bogatskiy2020lorentz, yarotsky2022universal}, the proposed method focuses on enhancing existing components used in standard transformers with consideration of computational efficiency.
We term our proposed approach of informing positional encoding with context through enforcing equivariance as Con\textbf{T}extualized Equiv\textbf{A}riant \textbf{P}ositional \textbf{E}ncoding (TAPE).

\paragraph{Model Structure and Initialization.}
We adhere to the conventional architecture of the standard transformer~\citep{vaswani2017attention}, wherein each layer comprises an attention module for token mixing and a Multi-Layer Perceptron (MLP) for channel mixing.
Both the attention and MLP components are tailored to update positional embeddings at each layer.
We depict the overall architecture in Fig. \ref{fig:overview}.
The initial positional features may encompass a variety of representations, including but not limited to learnable features \citep{vaswani2017attention}, sinusoidal series \citep{vaswani2017attention, su2024roformer, sun2022length}, or random Fourier features \citep{rahimi2007random, yu2016orthogonal}. Among these, we select the widely-used sinusoidal series embedding, RoPE~\citep{su2024roformer}, as our initialization, as detailed in Sec. \ref{sec:peft}.

\paragraph{$O(R)$-Invariant Token Mixing.}
In each transformer block, $f$ updates token features through attention and MLP following the principles of permutation-equivariance and $O(R)$-invariance.
We define pre-softmax attention value between the $i$-th and $j$-th tokens ($\Mat{M}_{i,j} \ne 0$) as:
\begin{align} \label{eqn:f_attn}
\begin{split}
& \alpha_{i,j} = \sum_{m=1}^{M} \alpha_{i,j,m},
\\
& \alpha_{i,j,m} = (\Mat{W}_{Q} \Mat{x}_{i})_m^\top \phi(\Mat{e}_{i, m} \Mat{e}_{j, m}^{\top}) (\Mat{W}_{K} \Mat{x}_{j})_m, 
\end{split}
\end{align}
where $\phi(\cdot): \real^{L \times L} \rightarrow \real^{B \times B}$ can be any function.
Permutation-equivariance is inherently preserved in pairwise attention, regardless of the method used to derive attention values.
$O(R)$-invariance is achieved by computing the inner product of positional embeddings \citep{villar2021scalars,wang2022equivariant,wanggraph}.
We note that $O(R)$-invariance stems from the separation of the inner product calculations between features and positional embeddings, in contrast to \citet{vaswani2017attention}.
In practice, we can let $B=L$ and $\phi$ be an identity mapping, which simplifies Eq. \ref{eqn:f_attn} to a series of tensor multiplications.
After applying attention, a standard MLP layer is employed to transform token embeddings without using positional embeddings.

\paragraph{$O(R)$-Equivariant Position Contextualization.}
The primary contribution of this work is the introduction of a method to condition positional embeddings on sequence content.
We employ an $O(R)$-equivariant function $g$ to ensure structure conservation of this update.
A key insight is that linearly combining positional coordinates preserves $O(R)$-equivariance, provided the weights are invariant to the orthogonal group \citep{villar2021scalars,wang2022equivariant,huang2024can}.
This observation leads us to leverage attention maps, which capture content-based token relationships, to integrate positional embeddings.
Hence, the attention layer can update positional embedding via:
\begin{align} \label{eqn:g_attn}
\widetilde{\Mat{e}}_{i,m} &= \sum_{j=1}^{N} \frac{\Mat{M}_{i,j} \exp(\alpha_{i,j,m})}{\sum_{j'=1}^{N} \Mat{M}_{i,j} \exp(\alpha_{i,j',m})} \Mat{e}_{j,m},
\end{align}
where $\{\tilde{\Mat{e}}_{i,m}\}_{j \in [N], m \in [M]}$ denotes an intermediate output of the attention layer.
In practice, we share the attention map between Eq. \ref{eqn:f_attn} and \ref{eqn:g_attn}.
We can re-use $\alpha_{i,j,m}$ computed in Eq. \ref{eqn:f_attn} because we have shown that attention weights $\alpha_{i,j,m}$ are $O(R)$-invariant.

We further propose a layer similar to the function of MLP, which directly transform matrix-form positional embeddings with token features incorporated. Specifically, we first flatten the first two dimensions of $\widetilde{\Mat{e}}_{i} \in \real^{M 
\times L \times R}$ to the shape $\real^{ML \times R}$, then apply linear transformation constructed by token features, and finally unflatten the first dimension of the resultant matrix to $\widehat{\Mat{e}}_{i} \in \real^{M \times L \times R}, \forall j \in [N]$:
\begin{align} \label{eqn:g_mlp}
\widehat{\Mat{e}}_{i} &= \mathrm{unflatten}\left( \Mat{W}_{2}  \psi(\widetilde{\Mat{x}}_{i}) \Mat{W}_{1}^\top  \mathrm{flatten}(\widetilde{\Mat{e}}_{i}) \right),
\end{align}
where we denote $\widetilde{\Mat{x}}_{i} \in \real^C$ as the output of attention after token mixing.
Let $I$ be the intermediate dimension, $\Mat{W}_{1}$ and $\Mat{W}_{2}$ have shape $ML \times I$.
We further define $\psi: \real^C \rightarrow \real^{I \times I}$ as a mapping between token features to linear transformations.
To reduce computation overhead, we adopt an MLP to transform $\tilde{\Mat{x}}_i$ into an $I$-dimensional vector and form a diagonal matrix with the resultant vector as its diagonal.
The detailed computational flow is illustrated in Fig.~\ref{fig:tensor_op} in Appendix \ref{app:impl}.
By applying linear transformations only to the first two dimensions of $\widetilde{\Mat{e}}_i$, this layer maintains $O(R)$-equivariance.

We summarize the overall geometric properties of our architecture in Proposition~\ref{prop:tape_prop} below.
\begin{proposition}\label{prop:tape_prop}
The proposed TAPE including: (i) attention in Eq. \ref{eqn:f_attn} with normal MLP for token mixing, and (ii) attention in Eq.~\ref{eqn:g_attn} with MLP defined in Eq.~\ref{eqn:g_mlp} for position contextualization, satisfies Eq.~\ref{eqn:perm_equiv} and Eq.~\ref{eqn:ortho_equiv}.
\end{proposition}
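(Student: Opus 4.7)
The plan is to verify Eq.~\ref{eqn:perm_equiv} for the token-mixing block $f$ and Eq.~\ref{eqn:ortho_equiv} for the position-contextualization block $g$ by checking each sub-module (attention and MLP) against the two symmetry groups separately. Since the permutation and orthogonal arguments are essentially independent, I would organize the proof into two passes and combine them at the end.

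The permutation-equivariance pass is largely routine. In Eq.~\ref{eqn:f_attn}, $\alpha_{i,j}$ depends only on the pair $(i,j)$, the mask $\Mat{M}_{i,j}$ gates the normalized sum over $j$, and both the standard token-mixing MLP (independent of $\Mat{E}$) and the contextualization MLP in Eq.~\ref{eqn:g_mlp} (which uses $\widetilde{\Mat{x}}_i$ as a per-token parameter) act row-wise. Replacing $(\Mat{X},\Mat{E},\Mat{M})$ with $(\Mat{P}\Mat{X},\Mat{P}\Mat{E},\Mat{P}\Mat{M}\Mat{P}^\top)$ only reindexes all sums, yielding the permuted outputs for both $f$ and $g$.

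For the orthogonal pass, the cornerstone is $O(R)$-invariance of the attention weight $\alpha_{i,j,m}$ in Eq.~\ref{eqn:f_attn}: under $\Mat{E}\mapsto\Mat{E}\Mat{R}$, each slice $\Mat{e}_{i,m}$ becomes $\Mat{e}_{i,m}\Mat{R}$, so the Gram product $\Mat{e}_{i,m}\Mat{e}_{j,m}^\top$ is preserved since $\Mat{R}\Mat{R}^\top=\Mat{I}_R$. Hence $\phi(\Mat{e}_{i,m}\Mat{e}_{j,m}^\top)$, and therefore $\alpha_{i,j,m}$ and $\alpha_{i,j}$, are invariant. For $f$, the values being aggregated by attention are $\Mat{W}_V\Mat{x}_j$, which do not depend on $\Mat{E}$, so the attention output and the subsequent standard MLP are $O(R)$-invariant; combined with the permutation pass, this gives Eq.~\ref{eqn:perm_equiv}.

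For $g$, the attention aggregation in Eq.~\ref{eqn:g_attn} uses the positional embeddings themselves as values, so $\widetilde{\Mat{e}}_{i,m}=\sum_j w_{ij}\Mat{e}_{j,m}$ transforms to $\sum_j w_{ij}\Mat{e}_{j,m}\Mat{R}=\widetilde{\Mat{e}}_{i,m}\Mat{R}$, which is the required $O(R)$-equivariance. The main obstacle is showing that the MLP in Eq.~\ref{eqn:g_mlp} preserves this equivariance, and it requires carefully tracking tensor shapes: $\mathrm{flatten}(\widetilde{\Mat{e}}_i)$ reshapes only the first two axes into an $ML\times R$ matrix whose column index is still $R$, so left-multiplication by $\Mat{W}_2\psi(\widetilde{\Mat{x}}_i)\Mat{W}_1^\top$ acts purely on the $ML$-index. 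Because $\widetilde{\Mat{x}}_i$ is $O(R)$-invariant from the previous step, $\psi(\widetilde{\Mat{x}}_i)$ is invariant as well, and the right-multiplication by $\Mat{R}$ factors cleanly through the product; $\mathrm{unflatten}$ then restores the tensor shape with $\Mat{R}$ still on the right. Composing this with the permutation pass yields Eq.~\ref{eqn:ortho_equiv}.
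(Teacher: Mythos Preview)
Your proposal is correct and follows essentially the same approach as the paper, which establishes the proposition through the inline justifications surrounding Eqs.~\ref{eqn:f_attn}--\ref{eqn:g_mlp}: permutation equivariance from the pairwise structure of attention and the row-wise action of the MLPs, $O(R)$-invariance of $\alpha_{i,j,m}$ via the Gram product $\Mat{e}_{i,m}\Mat{e}_{j,m}^\top$, $O(R)$-equivariance of Eq.~\ref{eqn:g_attn} as a linear combination with invariant weights, and $O(R)$-equivariance of Eq.~\ref{eqn:g_mlp} because the linear map acts only on the flattened $ML$ index while $\psi(\widetilde{\Mat{x}}_i)$ is built from an $O(R)$-invariant quantity. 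Your organization into a permutation pass and an orthogonal pass is a clean way to package exactly these observations.
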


\subsection{Parameter-Efficient Fine-tuning with TAPE}
\label{sec:peft}

In this section, we demonstrate that our TAPE can be seamlessly integrated into pre-trained models, enabling Parameter-Efficient Fine-Tuning (PEFT) to enhance position-based addressing in existing architectures.
Notably, the widely adopted RoPE \citep{su2024roformer} can be considered a special case of TAPE.
This can be seen by letting $L=R=2$ and $\Mat{e}_{i,m,1} = \begin{bmatrix} \cos(\theta_m i) & -\sin(\theta_m i)\end{bmatrix}^\top, \Mat{e}_{i,m,2} = \begin{bmatrix}  \sin(\theta_m i) & \cos(\theta_m i) \end{bmatrix}^\top$.
With this configuration, Eq. \ref{eqn:f_attn} becomes equivalent to Eq. \ref{eqn:rope}.
As a result, RoPE can serve as the initialization for TAPE, while the model is further enhanced by incorporating the contextualization component specified in Eq. \ref{eqn:g_attn} and \ref{eqn:g_mlp}. 
This initialization is applied across all experiments, encompassing both pre-training and fine-tuning stages. Specifically, during fine-tuning, to ensure that the augmented model remains identical to the original at initialization, we follow \citet{hu2021lora} by setting the initialization of $\Mat{W}_2$ in Eq. \ref{eqn:g_mlp} to zero such that all updates to the positional encoding inside the block will then be reset via a residual connection.
To allow for parameter-efficient fine-tuning, we enable gradients to update the weights for position encoding contextualization including $\Mat{W}_1, \Mat{W}_2$ and the weights for the post-attention linear layer, while keeping all other parameters frozen. 

\section{Theoretical Analysis}
\label{sec:theory}

In this section, we formally state the theoretical benefits of our proposed architecture, analyzing two key aspects: 
(1) its enhanced expressiveness for algorithmic reasoning, enabled by the contextualization mechanism, and (2) its guaranteed stability under equivariance constraints.

\paragraph{Benefits of Contextualization.}
We characterize the expressive power of transformer architectures on reasoning problems through the lens of circuit complexity \citet{merrill2019sequential, merrill2023parallelism, liu2022transformers, yang2025path}, identifying families of algorithms these models can and cannot simulate.
Our premise is that formal reasoning entails multi-step algorithmic computation.
For instance, schoolbook integer multiplication consists of: pairwise partial products, shifting and a global sum reduction \citep{dziri2024faith}.
Effective reasoning with LLMs therefore requires sufficient capacity to represent these algorithms within the model.

Circuit complexity quantifies the difficulty of different problems.
The key classes $\mathsf{NC}^k$, $\mathsf{AC}^k$, and $\mathsf{TC}^k$ consist of problems (with input size $N$) solvable by circuits of depth $O((\log N)^k)$. Specifically, $\mathsf{NC}^k$ allows only bounded fan-in $\mathsf{AND}$, $\mathsf{OR}$, and $\mathsf{NOT}$ gates; $\mathsf{AC}^k$ extends $\mathsf{NC}^k$ by permitting unbounded fan-in gates; and $\mathsf{TC}^k$ further includes unbounded fan-in threshold gates.
We refer interested readers to formal definitions in \citet{li2025computational, vollmer1999introduction, arora2009computational}.
It is known that $\mathsf{AC}^0 \subsetneq \mathsf{TC}^0 \subseteq \mathsf{NC}^1$ but it remains open whether $\mathsf{TC}^0 = \mathsf{NC}^1$\citep{cook1985taxonomy}.

Below we show the algorithmic expressivity of TAPE reaches $\mathsf{NC}^1$-completeness, which implies problems that are $\mathsf{NC}^1$-complete or contained in $\mathsf{NC}^1$ class can be solved by transformers with TAPE.

\begin{theorem}[Circuit Representational Power]
\label{thm:NC1}
There exists a constant-depth and constant-width transformer equipped with TAPE and satisfying Eqs. \ref{eqn:perm_equiv} and \ref{eqn:ortho_equiv} such that it is capable of solving all $\mathsf{NC}^1$-complete problem under $\mathsf{AC}^0$-reductions.
\end{theorem}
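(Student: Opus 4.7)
The plan is to construct a constant-depth, constant-width TAPE transformer that solves one canonical $\mathsf{NC}^1$-complete problem, from which the full theorem follows by composing with the given $\mathsf{AC}^0$-reductions. By Barrington's theorem, the word problem over a non-solvable finite group --- for instance, deciding whether $\sigma_1 \sigma_2 \cdots \sigma_N = e$ in $S_5$ --- is already $\mathsf{NC}^1$-complete under $\mathsf{AC}^0$-reductions, so it suffices to simulate this iterated group product inside TAPE.

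First I would fix $R$ so that $S_5$ admits a faithful orthogonal representation $\rho : S_5 \hookrightarrow O(R)$, e.g.\ $R = 5$ via permutation matrices. An $\mathsf{AC}^0$ preprocessing places a combinatorial encoding of $\sigma_i$ into $\Mat{x}_i$ and initializes appropriate slices of $\Mat{e}_i$ to $\rho(\sigma_i)$. The two encodings play complementary roles: $\Mat{x}_i$ drives the content-dependent MLP, while $\Mat{e}_i$ is the tensor that is transported $O(R)$-equivariantly. The central primitive is then to realize a single step of group composition through the position contextualization block. In Eq.~\ref{eqn:g_mlp} the factor $\psi(\widetilde{\Mat{x}}_i)$ is content-dependent, and by an appropriate choice of $\Mat{W}_1, \Mat{W}_2, \psi$ one can arrange that the overall map on $\Mat{e}_i$ acts as left-multiplication by $\rho(\sigma_i)$; this is manifestly $O(R)$-equivariant because left-multiplication commutes with any right-multiplication on the last axis. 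The attention update of Eq.~\ref{eqn:g_attn} is then used to shuttle partially composed products between positions, and a final $O(R)$-invariant head (e.g.\ the trace inner product of $\Mat{e}_N$ with $\rho(e)$) reads out the accept/reject decision while respecting the equivariance of Proposition~\ref{prop:tape_prop}.

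The main obstacle is achieving \emph{constant} depth. Softmax-weighted aggregation is intrinsically commutative, whereas $S_5$ is the canonical non-commutative test-bed, so a naive balanced prefix tree of compositions would require $\Theta(\log N)$ TAPE layers and only place us in $\mathsf{AC}^1$. The way out is to exploit the multiplicative content-dependence in Eq.~\ref{eqn:g_mlp}: because $\psi(\widetilde{\Mat{x}}_i)$ enters as a matrix \emph{factor} rather than a summand, a single TAPE layer can compose several prefix-tree levels simultaneously by routing them through the tensor axes $M$ and $L$ of the positional embedding, while the $O(R)$-invariant attention scores of Eq.~\ref{eqn:f_attn} select which intervals get composed with which. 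Making this packing precise --- verifying that the attention masks needed to select the correct prefix intervals are $\mathsf{AC}^0$-computable from position indices, that the multiplicative routing fits within constant width, and that the overall construction continues to satisfy Eqs.~\ref{eqn:perm_equiv}--\ref{eqn:ortho_equiv} --- is where the technical weight of the proof lies; once it is established, concatenation with the $\mathsf{AC}^0$-reduction from an arbitrary $\mathsf{NC}^1$ language to the $S_5$ word problem completes the argument.
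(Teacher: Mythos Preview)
Your high-level framing is right: the paper also reduces to the $S_5$ (their $\Pi(5)$) word problem via Barrington and builds an explicit constant-depth, constant-width construction. But your constant-depth argument has a genuine gap. The ``packing'' idea---routing several prefix-tree levels through the $M$ and $L$ axes of the positional tensor---cannot collapse $\Theta(\log N)$ composition levels into $O(1)$ layers, because those axes have \emph{constant} width; you would still only compress a constant number of levels per layer and end up back at logarithmic depth. The multiplicative content-dependence in Eq.~\ref{eqn:g_mlp} is pointwise in $i$, so one layer of $g_{\mathsf{FFN}}$ can apply at most one group element per position; the cross-position aggregation in Eq.~\ref{eqn:g_attn} is a convex combination and cannot by itself implement a non-commutative product. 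Your sketch never identifies a mechanism that actually turns the iterated product into a single constant-depth operation.

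The paper's proof avoids this obstacle by an entirely different device. It restricts the input alphabet to \emph{transpositions}, each of which is a Householder reflection $\Mat{I}-\Mat{\xi}\Mat{\xi}^\top$ (Lemma~\ref{lem:swap}), and then invokes the WY representation (Lemma~\ref{lem:wy_rep}): the cumulative product $\prod_{j\le i}(\Mat{I}-\Mat{\xi}_j\Mat{\xi}_j^\top)$ can be rewritten as $\Mat{I}-\sum_{j\le i}\Mat{\eta}_j\Mat{\xi}_j^\top$ where the $\Mat{\eta}_j$ come from a \emph{single} matrix inverse $(\Mat{I}+\Mat{M}\odot\Mat{\Xi}\Mat{\Xi}^\top)^{-1}\Mat{\Xi}$. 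This is the step that buys constant depth: the non-commutative product is reduced to one global linear solve plus a masked sum. Crucially, the theorem is stated for the general architectural family of Appendix~\ref{app:prf_NC1_arch}, in which the positional aggregation weights $B^{(l)}$ are not constrained to be softmax-normalized; the paper instantiates $B^{(2)}$ directly as that matrix inverse and verifies it still satisfies the permutation and $O(R)$ equivariance in Eqs.~\ref{eqn:perm_equiv}--\ref{eqn:ortho_equiv} (Lemma~\ref{lem:inv_sec_layer}). Your proposal implicitly works only within the softmax instantiation of Eqs.~\ref{eqn:f_attn}--\ref{eqn:g_mlp}, which is too restrictive to carry the argument.
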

It is known that the representation capacity of constant-layer transformers is upper bounded by $\mathsf{TC}^0$.
Theorem~\ref{thm:NC1} shows a clear separation in expressivity between transformers with and without TAPE unless $\mathsf{TC}^0 = \mathsf{NC}^1$.
Equipping attention with TAPE can potentially enhance its computational power.

The proof of Theorem \ref{thm:NC1} shows TAPE can solve $\mathsf{NC}^1$-complete state-tracking problem.
When understanding LLM reasoning from the automaton perspective \citep{merrill2019sequential}, state-tracking ability is essential for modeling sequential computations and carrying out formal logic entailment \citep{merrill2024illusion, peng2025rwkv, grazzi2024unlocking}. 
Our proof also highlights the importance and the role of layer-wise update of positional encoding, contextualized positional-informed token mixing.
The multi-layer construction illustrates how layer-wise update of positional encoding progresses positional and tokens features into the positional encoding, which learns to be a more complex representation encoding inherent algorithmic reasoning structures.

This result is largely inspired by the concurrent work of \citet{yang2025path}.
It proves a seminal result showing that positional encodings constructed by accumulating input-conditioned Householder matrices \citep{yang2024parallelizing} can uniformly represent all algorithms within $\mathsf{NC}^1$ class.
Our proof connects the accumulation of Householder matrices with our proposed geometric constraints.

\paragraph{Importance of Equivariance.}
It has been shown that positional embeddings that encode relative distances among tokens are crucial for ensuring long-context capabilities in LLMs \citep{li2023functional}. Although initializing positional encodings with RoPE or random Fourier features at the first layer guarantees relativity under inner product (see Eq. \ref{eqn:f_attn}), it remains unclear whether this property is preserved after the embeddings are updated in intermediate layers.
To address this concern, we provide the following justification.
\begin{proposition}[Relativity of position encoding]
\label{prop:stability}
Suppose a transformer consists of $f, g$ satisfying Eqs. \ref{eqn:perm_equiv} and \ref{eqn:ortho_equiv}. Given inputs $(\Mat{X}, \Mat{E})$ with position encoding $\Mat{E}$ initialized by RoPE or random Fourier features,
then the transformer is invariant to shift on token indices.
\end{proposition}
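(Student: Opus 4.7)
The plan is to prove Proposition \ref{prop:stability} by induction on the transformer depth $\ell$, simultaneously maintaining two coupled invariants: \emph{(i)} token features are shift-invariant, in that the value computed at position $i$ under the original input equals the value computed at position $i+s$ under a shift-$s$ relabeling of the input; and \emph{(ii)} positional encodings transform block-wise by right-multiplication on the last axis, in that the shift-$s$ encoding at layer $\ell$ satisfies $\Mat{e}_{i,m}^{(\ell), s} = \Mat{e}_{i,m}^{(\ell)} \Mat{R}_s^{(m)}$ for some orthogonal $\Mat{R}_s^{(m)} \in O(R)$ depending only on $s$ and the block index $m$. For the base case, the sum-to-product identity $R(\theta_m(i+s)) = R(\theta_m i) R(\theta_m s)$ applied to RoPE gives $\Mat{R}_s^{(m)} = R(\theta_m s) \in O(2)$, and an entirely analogous per-block rotation emerges for random Fourier features; the input token features carry no position information and are therefore trivially invariant.

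For the inductive step I would exploit two structural properties of TAPE. The token-mixing map $f$ in Eq.~\ref{eqn:f_attn} depends on $\Mat{E}$ only through inner products $\Mat{e}_{i,m} \Mat{e}_{j,m}^\top$, and the per-block cancellation $\Mat{R}_s^{(m)} (\Mat{R}_s^{(m)})^\top = \Mat{I}$ makes these inner products---and hence the attention logits $\alpha_{i,j,m}$---shift-invariant; since the causal mask is shift-equivariant, the attention output and the subsequent standard MLP preserve invariant (i). The position-attention update $g$ in Eq.~\ref{eqn:g_attn} is a convex combination of $\Mat{e}_{j,m}$ with those shift-invariant weights, so right-multiplication by $\Mat{R}_s^{(m)}$ factors out cleanly and preserves invariant (ii).

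The main obstacle is the MLP-style position update of Eq.~\ref{eqn:g_mlp}, which flattens the $M \times L$ axes and applies a transformation $\Mat{W}_2 \psi(\widetilde{\Mat{x}}_i) \Mat{W}_1^\top$ that could a priori couple distinct $m$-blocks, whereas the induction requires preservation of a \emph{per-block} rotation that varies with $m$ rather than the single uniform $\Mat{R} \in O(R)$ postulated in Eq.~\ref{eqn:ortho_equiv}. To close this gap I would observe that $\Mat{W}_1, \Mat{W}_2$, and $\psi(\widetilde{\Mat{x}}_i)$ are themselves shift-invariant, act only along the $ML$-axis, and leave the $R$-axis pointwise intact; combined with the block-diagonal parameterization described in Appendix~\ref{app:impl}, this refines Proposition~\ref{prop:tape_prop} to a per-block $O(R)$-equivariance sufficient to carry invariant (ii) across the MLP layer. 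Once both hypotheses propagate through every layer, shift invariance of the final token features translates directly into invariance of the transformer's output under any shift of the input token indices, completing the argument.
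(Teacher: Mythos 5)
Your argument is essentially the paper's own proof: show that a phase shift of the token indices acts on the initial RoPE/random-Fourier encodings as right-multiplication by (block-wise) orthogonal rotations, then induct over the layers, using invariance of the token-mixing map and equivariance of the position-contextualization map to propagate the two invariants $\widetilde{\Mat{X}}^{(t)}=\Mat{X}^{(t)}$ and $\widetilde{\Mat{E}}^{(t)}=\Mat{E}^{(t)}\Mat{O}$. The only difference is one of care rather than route: the paper argues abstractly from Eqs.~\ref{eqn:perm_equiv}--\ref{eqn:ortho_equiv} and absorbs the $m$-dependent rotations into a single ``initialization-specific'' orthogonal transformation $\Mat{O}$, whereas you explicitly track the per-block rotations and check that the attention logits in Eq.~\ref{eqn:f_attn}, the update in Eq.~\ref{eqn:g_attn}, and (via its parameterization) the map in Eq.~\ref{eqn:g_mlp} all commute with them---a legitimate tightening of the same induction.
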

Proposition~\ref{prop:stability} highlights the importance of the equivariance principle with respect to the orthogonal group.
Under $O(R)$-equivariance, even after positional encodings are updated with token features, their effect on attention is still determined by the relative distances between token positions \citep{sinha2022curious}.
This property also ensures stability and supports generalization beyond the training context length.

\begin{table*}[t]
\centering
\caption{Performance comparison on seven datasets from SCROLLS benchmark.
For all tasks, the performance is better if the reported metric is higher ($\uparrow$).
We highlight the top-performing methods via \textbf{bold} font.
}
\vspace{0.5em}
\resizebox{0.75\textwidth}{!}{
\begin{tabular}{lccccccc}
\toprule
& \textbf{QAS}  & \textbf{CNLI}  & \textbf{NQA}   & \textbf{QuAL} & \textbf{QMS} & \textbf{SumS} & \textbf{GovR} \\
\midrule
Metric (\%) & F1 ($\uparrow$) & EM ($\uparrow$) & F1 ($\uparrow$) & EM ($\uparrow$) & Rgm ($\uparrow$) & Rgm ($\uparrow$) & Rgm ($\uparrow$) \\      
Median length & 5472          & 2148          & 57829          & 7171 & 14197 & 9046 & 8841 \\
\midrule
RoPE            & 8.39    & 65.00   & 1.77    & 0.04 & 6.34 & 5.63 & 9.71  \\
ALiBi           & 8.25    & 69.62   &  4.11   & 0.0 & 9.92 & 9.78 & \textbf{18.81} \\
RandPE & \textbf{13.44}    &  62.01  &  4.63 & 0.38  & 8.43 & 8.31 & 8.93 \\ %
FIRE& 3.41  & 71.26 & 0.48 & 1.25 & 8.78 & 7.42 & 11.03  \\ %
xPos            & 9.02    & 71.75    & 4.83    & 0.24 & 10.73 & 9.38 & 16.38 \\
TAPE (Ours)            &  11.52   & \textbf{72.80}    & \textbf{6.79} & \textbf{11.60} & \textbf{12.42} & \textbf{10.34} & 15.18 \\
\bottomrule
\end{tabular}
}
\vspace{-1em}
\label{tab:scrolls}
\end{table*}

\section{Experiments}
\label{sec:expr}

In this section, we first validate our method on arithmetic tasks, which relies on better position-addressing ability for prediction (Sec.~\ref{sec:exp:arithmetic}). We also show our effectiveness in natural languages, in both pre-training (Sec.~\ref{subsec:pretrain}) and fine-tuning case (Sec.~\ref{sec:peft}).
More experiments, visualization, and model interpretation can be found in Appendix \ref{sec:add_expr} and \ref{sec:inter}.

\subsection{Arithmetic Learning}
\label{sec:exp:arithmetic}

As demonstrated by prior research~\citep{ lee2023teaching, zhou2024transformers}, even large transformer models struggle with algorithmic tasks, such as arithmetics. Recent studies suggest that this limitation may stem from their constrained position-addressing capabilities~\citep{ebrahimi2024your}. In particular, arithmetic tasks treat every digit as equally important to the equation, regardless of its distance from the output. In contrast, traditional positional embeddings for language tasks often assume a distance-decay effect, where words farther apart have less significance in the output. Positional contextualization potentially addresses this by dynamically reweighting positional importance based on the task context. To evaluate the ability of LLMs of performing arithmetic tasks with our position embedding, we use the Addition Bucket 40 dataset~\citep{abacus} which contains 20 million samples with $i \times i$ ($i < 40$) operand lengths. 
We train transformers from scratch using the arithmetic data, and during evaluation, we sample 100 samples for each pair of operand lengths. 
Following the existing attempt~\citep{abacus}, the operands in the training set are not necessary to have the same length, but the maximum length of two operands are the same. We then report model accuracy for each $(i, j)$ length pair. 
Note that accuracy is measured strictly, counting only exact matches of all output digits as correct.
The transformers are standard decoder-only architecture with config detailed in Appendix B.
We compare our method with four baselines, including RoPE~\citep{kitaev2020reformer}, RandPE~\citep{ruoss2023randomized} NoPE~\citep{kazemnejad2024impact}, and FIRE~\citep{li2023functional}.

The heatmaps further demonstrate TAPE's superior generalization to longer sequences, as indicated by the concentrated dark-colored regions representing higher accuracy across a wider range of operand lengths. TAPE outperforms other methods with the highest average accuracy of 32.82\%. Compared to FIRE, which achieves 26.98\% and previously held the strongest length generalization in arithmetic tasks~\citep{abacus, zhou2024transformers}, TAPE shows a remarkable 21.6\% relative improvement. This shows TAPE's effectiveness in maintaining accuracy as sequence lengths increase, making it particularly suitable for long-range dependency tasks.

\subsection{Pre-Training from Scratch}\label{subsec:pretrain}
Pre-training a language model on a corpus followed by fine-tuning on downstream tasks is the standard methodology for evaluating the performance of positional embeddings in prior studies~\citep{li2023functional, he2024two}. Similarly, we first pre-train transformers with 1024 context window from scratch, using C4 dataset~\citep{raffel2020exploring}, and then fine-tune those models in long-context benchmark SCROLLS~\citep{shaham2022scrolls}. We report three evaluation metrics for seven different tasks: unigram overlap (F1) for Qasper and NarrativeQA, and exact match (EM) for QuALITY (QAS) and ContractNLI (CNLI), and Rgm score (the geometric mean of ROUGE-1,2,L) for the three summarization tasks: GovReport (GovR), QMSum (QMS), and SummScreenFD (SumS).
We compare our methods with  RoPE~\citep{kitaev2020reformer}, ALiBi~\citep{alibi}, RandPE~\citep{ruoss2023randomized}, FIRE~\citep{li2023functional} and xPos~\citep{sun2022length}, and report the results in Tab.~\ref{tab:scrolls}. 

Our method consistently outperforms all baselines, demonstrating significant improvements, particularly in scenarios with longer context lengths, as observed in QuAL and NQA.
In terms of overall performance, xPos is the closest competitor to TAPE. While FIRE, RandPE, and ALiBi exhibit good results on a few datasets, they fall short across the board. RoPE struggles with all long-context datasets.

\subsection{Context Window Extension by PEFT}\label{subsec:peft}

We extend the context window of the pre-trained Llama2 7B model~\citep{genai2023llama} from 4096 to 8192, using the Redpajama~\citep{together2023redpajama}. 
For validation, we then compare the perplexity on sequence of length 8192,
on the cleaned ArXiv Math proof-pile dataset~\citep{proof-pile, chen2023extending} and the book corpus dataset PG19~\citep{rae2019compressive}. To further evaluate the models' performance of long context understanding, we report the accuracy of fine-tuned models on passkey retrieval task which has been adopted by many literature~\citep{chen2023longlora, chen2023extending, tworkowski2024focused}.
We choose a popular open-sourced LLM Llama2 7B~\citep{touvron2023llama2}, which uses RoPE, as the base model and extend it to the 8192 context length. Three baselines are selected to compare to our TAPE method: vanilla LoRA~\citep{hu2021lora}, LongLoRA~\citep{chen2023longlora}, Theta Scaling~\citep{liu2023scaling}.

\begin{table}[t]
\vspace{-0.5em}
\caption{Evaluation on perplexity across 1k to 8k context lengths. Lower perplexity means better performance ($\downarrow$). Top results are marked \textbf{bold}. Each model is first pre-trained from scratch and later fine-tuned on the downstream long-context datasets.}
\label{tab:tune_ppl}
\begin{center}
\resizebox{0.8\linewidth}{!}{\begin{tabular}{lcccc}
\toprule
\textbf{Method} & \textbf{1024} & \textbf{2048} & \textbf{4096} & \textbf{8192} \\
\midrule
\multicolumn{5}{c}{\textbf{Proof-pile}} \\
\midrule
LoRA         & 3.828  & 3.369  & 3.064  & 2.867  \\
LongLoRA     & 3.918  & 3.455  & 3.153  & 2.956  \\
Theta Scaling & 3.864 & 3.415  & 3.121  & 2.934  \\
TAPE (Ours)   & \textbf{3.641}  & \textbf{3.196}  & \textbf{2.901}  & \textbf{2.708}  \\
\bottomrule
\multicolumn{5}{c}{\textbf{PG-19}} \\
\midrule
LoRA         & 9.791  & 9.098  & 8.572  & 8.199  \\
LongLoRA     & 9.989  & 9.376  & 8.948  & 8.645  \\
Theta Scaling & 9.257 & 8.640  & 8.241  & 7.999  \\
TAPE (Ours)   & \textbf{8.226}  & \textbf{7.642}  & \textbf{7.278}  & \textbf{7.063}  \\
\bottomrule
\end{tabular}}
\end{center}
\vspace{-2.5em}
\end{table}

\begin{figure*}[t]
\centering
\includegraphics[width=0.75\linewidth]{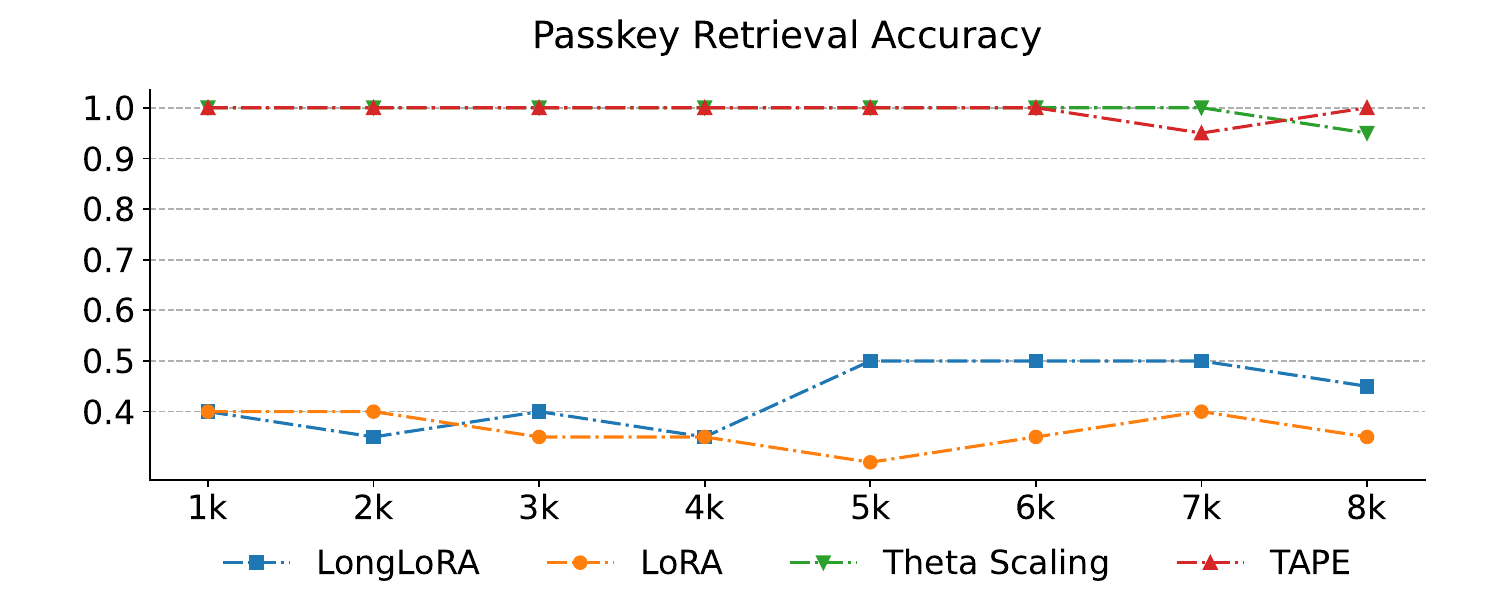}
\vspace{-1em}
\caption{Accuracy on passkey retrieval from 1k to 8k context length with Llama2 7B. We adopt the parameter-efficient fine-tuning strategy for TAPE (see Sec. \ref{sec:peft}). In contrast to other parameter-efficient fine-fining methods (e.g. LoRA \citep{hu2021lora} and LongLoRA \citep{chen2023longlora}), TAPE achieves no accuracy drop at 8k context length. TAPE performs even on par with full-parameter tuning with Theta Scaling \citep{liu2023scaling}.}
\vspace{-1em}
\label{fig:tune_retrieval}
\end{figure*}

As shown in Tab.~\ref{tab:tune_ppl}, TAPE consistently outperforms the other methods across all context lengths on both the Proof-pile and PG19 datasets. On Proof-pile, TAPE achieves a perplexity of 3.641 at 1024 tokens, improving over LoRA (3.828), LongLoRA (3.918), and Theta Scaling (3.864). At 8192 tokens, TAPE's advantage grows, reaching 2.708, surpassing LongLoRA (2.956), LoRA (2.867), and Theta Scaling (2.934). Similarly, on PG19, TAPE achieves 8.226 at 1024 tokens, improving up to 18.3\% over competitors. At 8192 tokens, TAPE reaches 7.063, further showing superiority, especially at longer context lengths.

We also evaluate the passkey retrieval accuracy of our model, following Landmark Attention~\citep{mohtashami2023landmark}, which has also been adopted by other literature~\citep{chen2023extending, tworkowski2024focused, chen2023longlora}. 
In this task, the models are required to locate and retrieve a random passkey hidden in a long document.
We test the passkey retrieval accuracy ranging from 1k to 8k.
The results of long-context passkey retrieval task is presented in Fig.~\ref{fig:tune_retrieval}. 
As shown, TAPE consistently achieves near-perfect accuracy across all context lengths, outperforming other methods. Theta Scaling shows a relatively stable performance while LoRA and LongLoRA exhibit fluctuating and lower accuracy. Notably, Theta Scaling is widely employed in popular open-source long-context models like Llama3 8B Instruct 262k~\citep{llama3modelcard} and MistralLite~\citep{mistrallite}. TAPE demonstrates a similar superior capability to be applied in long-context tasks.

\subsection{Efficiency Analysis}
In this subsection, we analyze the complexity of our methods in comparison to traditional position embedding techniques. Using the models from the pretraining experiment in Sec.~\ref{subsec:pretrain}, we report three key metrics: FLOPs, MACs, and the number of parameters. The metrics are evaluated with a batch size of 1 and sequence length 1024. As shown in Tab.~\ref{tab:flops}, our architectural modifications introduce a negligible increase in FLOPs, MACs and number of parameters, compared to the standard Transformer with RoPE. 
Moreover, our TAPE is fully compatible with Flash Attention~\citep{dao2022flashattention, dao2023flashattention2}, a widely adopted accelerated attention mechanism with IO-awareness, which introduces extra efficiency.

\begin{table}[t]
\centering
\vspace{-1em}
\caption{Comparison of FLOPS, MACs, and the number of parameters for models with different position embeddings.}
\vspace{0.5em}
\label{tab:flops}
\resizebox{0.9\linewidth}{!}{\begin{tabular}{@{}lccccc@{}}
\toprule
Method       & TAPE    & RoPE     & FIRE        & T5's relative bias  \\ \midrule
FLOPS (G)       & 365.65      & 321.10      & 331.97      & 321.10       \\ 
MACs (G)        & 180.69      & 160.46     & 165.69      & 160.46      \\ 
Params. (M)  & 155.33      & 154.89     & 154.90      & 154.90 \\ 
\bottomrule
\end{tabular}}
\vspace{-2em}
\end{table}

For simplicity, we evaluate the running time of attention layers with different position embedding methods on a single A100 GPU. We run 100 inference steps and report the average execution time. Both RoPE and TAPE leverage the acceleration provided by Flash Attention~\citep{flash_github}, where FIRE and T5's relative bias are not fully compatible with Flash Attention, as it currently lacks support for gradient computation in relative bias.
In contrast, we observe that the computations for position embeddings and token features in TAPE are highly amenable for hardware acceleration using kernel fusion techniques.
To capitalize on this, we implemented a version of TAPE with kernel fusion, referred to as TAPE w/ Fusion.
As shown in Tab.~\ref{tab:speed}, the efficiency of the original TAPE (w/o Fusion) already exceeds T5 and is comparable to FIRE. With additional kernel fusion applied, TAPE achieves a 2.2$\times$ speedup, approaching the efficiency of RoPE with Flash Attention.

\begin{table}[tb]
\centering
\vspace{-1em}
\caption{System measurement.
We report execution time per step in the \textbf{Time} row and iteration per second in the \textbf{Throughput} row. The values are averaged over 100 inference steps.}
\label{tab:speed}
\resizebox{\linewidth}{!}{\begin{tabular}{@{}lcccccc@{}}
\toprule
\multirow{2}{*}{Method}        & \multicolumn{2}{c}{TAPE} & \multirow{2}{*}{RoPE} & \multirow{2}{*}{FIRE} & \multirow{2}{*}{T5's relative bias} \\ 
\cmidrule(lr){2-3}
& w/ Fusion & w/o Fusion &      \\ \midrule
Time ($\times 10^{-4}$) &  2.56    &   5.63  & 2.08    &  5.56   & 6.90    \\ 
Throughput &  3910     & 1775     & 4810     & 1799     & 1449 \\ 
Flash Attention & \ding{51} & \ding{51} & \ding{51} & \ding{55} & \ding{55} \\
\bottomrule
\end{tabular}}
\vspace{-2em}
\end{table}

\section{Conclusion}
\label{sec:conclu}

This paper introduced TAPE, a framework that enhances transformer models by contextualizing positional embeddings with sequence content across layers. Through incorporating permutation and orthogonal equivariance, we ensured stability and adaptability in positional encoding updates. TAPE can also be easily integrated into existing models, introducing negligible computation and inference overhead. Extensive experiments confirmed TAPE's effectiveness in both arithmetic reasoning and long context language modeling tasks. One current limitation lies in our exclusive focus on decoder-only models, with limited training scale.

\section*{Acknowledgements}

The work reported in this paper was substantially performed
using the Princeton Research Computing resources at Princeton University, which is a consortium of
groups led by the Princeton Institute for Computational Science and Engineering (PICSciE) and Research Computing.
PW thanks Kaiyue Wen and Songlin Yang for discussions on the role of positional encoding in circuit representational power, Shenghao Yang for mentioning the connection to the algorithmic reasoning, and Chuanyang Zheng for discussions on related works in context-aware position encodings. 
PW and PL also thank Yinan Huang and Siqi Miao for discussing positional encoding for geometric data.
PI Li is supported by NSF awards IIS-2239565 and IIS-2428777
for this project. PI Wang is in part supported by NSF Awards 2145346 (CAREER), 02133861 (DMS), and the NSF AI Institute for Foundations of Machine Learning (IFML).

\section*{Impact Statement}
This paper presents work whose goal is to advance the field of  Machine Learning. There are many potential societal consequences of our work, none of which we feel must be specifically highlighted here.

\bibliography{example_paper}
\bibliographystyle{icml2025}

\newpage
\appendix
\onecolumn
\section{More Related Work}
\label{sec:related_work}

\paragraph{Context Length Extrapolation.}
The length extrapolation ability of transformers are limited mainly in two aspects: ($1$) the high memory usage caused by quadratic memory usage; and ($2$) the poor generalizability to unseen sequence length during inference. 
To address the memory usage during long sequences training, LongLoRA~\citep{chen2023longlora} introduced shifted sparse attention and leveraged parameter-efficient tuning. LoCoCo~\citep{cai2024lococo} introduce a KV cache compression mechanism.
To help generalizability of positional embedding to unseen sequence length, \citet{chen2023extending} explores zero-shot linear interpolation on rotary embedding; \citet{ntk_rope,peng2023yarn} enhance simple interpolation by retaining high-frequency encoding ability; \citet{liu2023scaling} investigates the relationship between rotary base and extrapolation ability.
While the previously mentioned methods focus primarily on extending rotary positional embeddings, \citet{li2023functional} introduced a functional relative position encoding framework that enhances generalization to longer contexts. However, these methods generally impose a fixed pattern on attention maps, often adopting a decaying pattern based on distance.
In contrast, we propose a learnable and generic position encoding framework that primarily focuses on arithmetic reasoning.

\paragraph{Equivariant Learning.}
Equivariant machine learning is a broad field that leverages task-specific symmetries to introduce inductive biases into neural networks, reducing learning complexity and improving generalization.
Here, we focus on foundational works in this domain that are highly relevant and provide key motivation for our study.
Prior research has primarily focused on data representations with intrinsic symmetries, such as graphs~\citep{egnn, egnn4molecure, e3gnn, 2018iegnn}, hyper-graphs \citep{wang2023equivariant, chien2021you}, and point clouds~\citep{zaheer2017deep, se3-transformer, tfn, equi-diff}, which primarily require permutation equivariance.
Recent progress models graph representation learning as a joint invariance of permutation and orthogonal groups \citep{wang2022equivariant, huang2023stability, wanggraph}.
Beyond geometric data, another stream of work~\citep{worrall2017harmonic, shift-invariant-again, weiler2019general, cohen2019gauge} ensures symmetric inputs yield consistent outputs, reflecting the same label under symmetric transformations.
For instance, \citet{worrall2017harmonic} introduce rotation-equivariant feature transformations after mapping images to the continuous fourier domain, while \citet{shift-invariant-again} enhance translation invariance in CNNs by incorporating a low-pass filter in the pooling layer.
To the best of our knowledge, we are the first to introduce equivariance in language models, bridging the geometric properties of positional embeddings to modeling complex language tasks (e.g., retrieval and reasoning) and learning stability.

\paragraph{Concurrent works on Contextualized Positional Encoding.}
Recent studies have moved beyond fixed absolute or relative encodings toward context-adaptive position signals.
DAPE~\citep{zheng2024dape} pioneers the design of context-dependent positional embeddings, where positional information is decomposed into a learned prior modulated by the current attention context.
This yields a data-adaptive signal that improves both in-length performance and length extrapolation.
A subsequent work CoPE \citep{golovneva2024contextual} proposes a positional contextualization that accumulates intervals determined by token features to represent token positions in the sequence.
PaTH Attention~\citep{yang2025path}, parameterizes positional effects as an accumulated product of input-conditioned Householder-like transformations, offering a compact parallel formulation and a FlashAttention-style blockwise implementation. Importantly, \citet{yang2025path} highlights the role of positional encoding in enhancing the algorithmic representational power of transformers and strengthening their computational completeness. Our theoretical results are highly motivated by \citet{yang2025path}.

\paragraph{Algorithmic Reasoning and Geometric Learning.}
A large volume of work studies training neural networks to execute algorithms.
In these settings, models take as input various data structures -- often encoded as graphs -- and compute solutions to tasks such as path finding, graph connectivity, and dynamic programming \citep{velivckovic2021neural, ibarz2022generalist, yan2020neural, velivckovic2022clrs, dudzik2022graph, bevilacqua2023neural, georgiev2023beyond, engelmayer2024parallel, diao2022relational, tang2020towards}.
A recent line of work investigates transformers’ capabilities as general-purpose algorithmic performers \citep{deletang2022neural, butoi2024training}.
More recently, Positional Attention \citep{deluca2025posattn} formalizes transformer architectures whose attention weights depend solely on positional encodings rather than token features.
At a high level, attention endows models with the capacity to realize computation graphs over tokens \citep{elhage2021mathematical, ameisen2025circuit}.
The structure of our proposed TAPE can be viewed as a natural way of embedding the discrete, symbolic, and input-dependent graph structures of algorithms into continuous representations with orthogonal-group equivariance.
A similar transformation has also been seen in designing graph transformers with Laplacian embeddings \citep{dwivedi2020generalization, wang2022equivariant}.
The duality between graph topological representation and geometric embeddings has been investigated in \citet{wanggraph}.

\section{Deferred Proofs}\label{app:theory}

\paragraph{Notations.}
The vector $\Mat{\delta}_i$ denotes the $i$-th canonical basis vector, i.e., a vector with $1$ in the $i$-th position and $0$ elsewhere. Its length can be often inferred by context without cause of ambiguity.
We denote the permutation group over $N$ entries as $\Pi(N)$, and orthogonal group of $R$ dimension as $O(R)$.
Associated with a permutation matrix $\Mat{P} \in \Pi(N)$, we represent the \textit{permutation map} as a function $\pi: [N] \rightarrow [N]$, i.e., $\Mat{P}_i = \Mat{\delta}_{\pi(i)}$.
We also work with the swap (i.e., transposition) group $S(N) \subset \Pi(N)$ which exchanges up to two elements in $[N]$.
Note that $S(N)$ is a generating set of $\Pi(N)$.

\subsection{Formal Statement and Proof for Theorem \ref{thm:NC1}}
\label{app:prf_NC1}

In this section, we formalize the statement of Theorem \ref{thm:NC1} with the hypothesis class yielded by transformers with TAPE and the precise definition of representational power of circuits.

\subsubsection{Architectural Family}
\label{app:prf_NC1_arch}
First of all, we define a family of transformer architectures equipped with generalized TAPE subject to invariance principles (Eqs. \ref{eqn:perm_equiv} and \ref{eqn:ortho_equiv}).
Recap that we denote a sequence of token features as $\Mat{X} \in \Set{X} \subseteq \real^{N \times C}$, positional encoding as $\Mat{E} \in \Set{E} \subseteq \real^{N \times M L \times R}$, and masking matrix $\Mat{M} \in \Set{M} = \{0, 1\}^{N \times N}$.
Without loss of generality, we assume the shape of token and positional representations is unchanged in hidden layers.
Following our general formulation in Sec. \ref{sec:design}, each layer of such a transformer consists of two components: \textit{(1)} token mixing layer $f^{(l)}: (\Mat{X}^{(l-1)}, \Mat{E}^{(l-1)}, \Mat{M}) \mapsto \Mat{X}^{(l)}$, and \textit{(2)} position contextualization layer $g^{(l)}: (\Mat{X}^{(l-1)}, \Mat{E}^{(l-1)}, \Mat{M}) \mapsto \Mat{E}^{(l)}$, where we use superscript to denote the layer index.
The token mixing layer follows a standard design of the attention block with residual connections and an element-wise feedforward mapping: $f^{(l)} = f_{\mathsf{FFN}}^{(l)} \circ f_{\mathsf{Attn}}^{(l)}$ such that $f_{\mathsf{Attn}}^{(l)}(\Mat{X}, \Mat{E}, \Mat{M})_i = \sum_{j=1}^{N} A^{(l)}(\Mat{X}, \Mat{E}, \Mat{M})_{ij} \Mat{W}_{V}^{(l)} \Mat{x}_j + \mathsf{Res}_{f_{\mathsf{Attn}}}^{(l)} \Mat{x}_i$ where $\mathsf{Res}_{*}^{(l)} \in \{0, 1\}$ denotes whether skip connection is adopted across the module $*$ at the $l$-th layer, and $A^{(l)}$ is a conventional attention modulated by positional encoding:
\begin{align*}
A^{(l)}(\Mat{X}, \Mat{E}, \Mat{M})_{ij} = \frac{\Mat{M}_{ij}\exp\left(\Mat{x}_i^{\top} \Mat{W}_{Q}^{(l)\top} \phi^{(l)}(\Mat{e}_i, \Mat{e}_j) \Mat{W}_{K}^{(l)} \Mat{x}_j\right) \Mat{W}_{V} \Mat{x}_j}{ \sum_{j'=1}^{N} \Mat{M}_{ij'}\exp\left(\Mat{x}_i^{\top} \Mat{W}_{Q}^{(l)\top} \phi^{(l)}(\Mat{e}_i, \Mat{e}_{j'})  \Mat{W}_{K}^{(l)} \Mat{x}_{j'}\right)}.
\end{align*}
We denote $\Mat{W}_K^{(l)}$, $\Mat{W}_Q^{(l)}$, $\Mat{W}_V^{(l)} \in \real^{C \times C}$ as key, query, and value matrices, and $\phi^{(l)}: \real^{M \times L \times R} \times \real^{M \times L \times R} \rightarrow \real^{C \times C}$ injects information of positional encoding into attention logits.
In addition, $f_{\mathsf{FFN}}^{(l)}(\Mat{X})_i = \mathsf{FFN}^{(l)}(\Mat{x}_i) + \mathsf{Res}_{f_{\mathsf{FFN}}}^{(l)} \Mat{x}_i$ with $\mathsf{FFN}^{(l)}$ as an element-wise MLP of two layers, independent of positional encoding.
Clearly, permutation equivariance holds due to the nature of attention, while $\phi^{(l)}$ needs to be $O(R)$-invariant to let $f^{(l)}$ invariant to orthogonal groups (Eq. \ref{eqn:ortho_equiv}).
Likewise, we consider $g^{(l)}$ as a sequence-level propagation composed with an element-wise feedforward mapping: $g^{(l)} = g_{\mathsf{FFN}}^{(l)} \circ g_{\mathsf{Attn}}^{(l)}$, where $g_{\mathsf{Attn}}^{(l)}(\Mat{X}, \Mat{E}, \Mat{M})_i = \sum_{j=1}^{N} B^{(l)}(\Mat{X}, \Mat{E}, \Mat{M})_{ij} \Mat{U}_{V}^{(l)} \Mat{e}_j + \mathsf{Res}_{g_{\mathsf{Attn}}}^{(l)} \Mat{e}_i$ performs pairwise aggregation where $\Mat{U}_{V}^{(l)} \in \real^{ML \times ML}$, and the aggregation weights $B^{(l)}(\Mat{X}, \Mat{E}, \Mat{M})_{ij} \in \real$ utilize both contextual and positional information, and $g_{\mathsf{FFN}}^{(l)}(\Mat{X}, \Mat{E})_i = \psi^{(l)}(\Mat{x}_i) \Mat{e}_i + \mathsf{Res}_{g_{\mathsf{FFN}}}^{(l)} \Mat{e}_i$ modulates positional embeddings by token representations point-wisely with function $\psi^{(l)}: \real^{C} \rightarrow \real^{ML \times ML}$.
To have $g$ satisfies Eqs. \ref{eqn:perm_equiv} and \ref{eqn:ortho_equiv}, $B^{(l)}$ needs to satisfies the following equation for every $\Mat{P} \in \Pi(N)$ and $\Mat{R} \in O(R)$: $B^{(l)}(\Mat{P}\Mat{X}, \Mat{P}\Mat{E}\Mat{R}, \Mat{P}\Mat{M}\Mat{P}^\top) = \Mat{P} B^{(l)}(\Mat{X}, \Mat{E}, \Mat{M}^\top) \Mat{P}^\top$.

\subsubsection{$\Pi(5)$ Group Word Problem}
\label{app:prf_NC1_P5}
The \textit{group word problems} are a family of state tracking tasks, asking that, given a sequence of elements sampled from a generating set of a finite group, does it evaluate to the identity element after reduction?
Formally, given a group $(G, \cdot)$ and its generating set $G^*$, we form a sequence $g_1, \cdots, g_N$ such that $g_i \in G^*$ for every $i \in [N]$.
The decision problem is whether $g_1 \cdot ... \cdot g_N$ equals identity.
The circuit complexity for this decision problem depends on the algebraic structure of the target group $G$.
It is known that the the word problem with any non-solvable group is strictly $\mathsf{NC}^1$-complete \citep{immerman1995complexity, barrington1986bounded}.
In particular, a typical example for non-solvable group is $\Pi(5)$, the permutation group over 5 entries \citep{merrill2024illusion, peng2025rwkv}.
We call this specific problem \textit{$\Pi(5)$ group word problem}.

To this end, we focus on handling the sequences synthesized from $S(5)$ -- a generating set of $\Pi(5)$.
Note that $|S(5)| = 10$.
Without loss of generality, we fix a canonical order in $S(5)$ and define the $n$-th element in $S(5)$ as $[\mathsf{src}(n) \leftrightarrow \mathsf{dst}(n)]$ (suppose $\mathsf{src}(n) < \mathsf{dst}(n)$) or $\Mat{S}_n \in \real^{5 \times 5}$ in matrix form for $1 \le n \le 10$.
Consider a vocabulary $V: [11] \rightarrow S(5) \cup \{ \mathsf{[bos]} \}$, where $1, \cdots, 10$ are mapped to $S(5)$ and $11$ is mapped to the ``beginning of sequence'' token $\mathsf{[bos]}$.
The input to the model is a sequence $u = (u[1], \cdots, u[N]) \in [11]^N$, where we fix $u[1] \equiv 11$ (the $\mathsf{[bos]}$ token).
The input sequence can be then illustrated as:
\begin{align*}
[\mathsf{bos}], [\mathsf{src}(u[2]) \leftrightarrow \mathsf{dst}(u[2])], \cdots, [\mathsf{src}(u[N]) \leftrightarrow \mathsf{dst}(u[N])].
\end{align*}
We denote the resultant permutation after reduction over the beginning $i$ transpositions as $\pi_i: [5] \rightarrow [5]$, which corresponds to the matrix form $\prod_{j=1}^{i}\Mat{S}_{u[j]}$.
The ground-truth labels are a boolean sequence $(y[1], \cdots, y[N]) \in \{0, 1\}^{N}$ such that $y[i] = 1$ if $\pi_i$ is identity, otherwise $y[i] = 0$.
We call a model is capable of solving $\Pi(5)$ group word problem if for arbitrary input size $N$ and inputs $(u[1], \cdots, u[N])$, the model can precisely predict the ground-truth labels $(y[1], \cdots, y[N])$.

\subsubsection{Main Result and Proof}

To show Theorem \ref{thm:NC1}, it is equivalent to prove the following theorem, since $\Pi(5)$ group word problem is $\mathsf{NC}^1$-complete, thus all the problems within $\mathsf{NC}^1$ family can be reduced to  under $\mathsf{AC}^0$ reductions.

\begin{theorem}
\label{thm:P5_word}
There exists a constant-depth and constant-width transformer architecture from the model class defined in Sec. \ref{app:prf_NC1_arch}, subject to permutation equivariance defined in Eq. \ref{eqn:perm_equiv} and orthogonal equivariance defined in Eq. \ref{eqn:ortho_equiv}, such that it is capable of solving $\Pi(5)$ group word problem, defined in Sec. \ref{app:prf_NC1_P5}.
\end{theorem}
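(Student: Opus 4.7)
The plan is to instantiate the TAPE-augmented transformer so that its position contextualization $g$ accumulates input-conditioned orthogonal operators, yielding the prefix product $\Mat{P}_i = \Mat{S}_{u[2]} \cdots \Mat{S}_{u[i]}$ inside the positional encoding at position $i$, and then to use the token-mixing layer $f$ to decide whether $\Mat{P}_i = \Mat{I}$. Since the $\Pi(5)$ group word problem is known to be $\mathsf{NC}^1$-complete, this construction will certify Theorem~\ref{thm:NC1}.

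\textbf{Setup.} I would take $R = 5$ (or a small multiple thereof) and identify each transposition $\Mat{S}_n \in S(5)$ with the corresponding $5 \times 5$ permutation matrix, which lies in $O(5)$. Token features $\Mat{x}_i$ encode $u[i]$ one-hot, and the positional embedding $\Mat{e}_i^{(0)}$ is initialized so that each of its $M \cdot L$ ``slots'' along the last axis is the standard basis frame, i.e., an orthonormal frame on which left multiplication by $\Mat{S}_n$ is well defined. Relativity at initialization is inherited from RoPE/Fourier features through Proposition~\ref{prop:stability}.

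\textbf{Core construction.} Inside each TAPE block, I would pick $\phi^{(l)}$ and the attention weights $B^{(l)}$ (both $O(R)$-invariant) so that, under the causal mask, the aggregation in Eq.~\ref{eqn:g_attn} realizes the appropriate pattern over preceding positions. The position-feedforward mapping $g_{\mathsf{FFN}}^{(l)}$ (Eq.~\ref{eqn:g_mlp}) then applies $\psi^{(l)}(\Mat{x}_i)$, which I set equal to the matrix representation of $\Mat{S}_{u[i]}$, so the update reads $\Mat{e}_i^{(l+1)} \leftarrow \Mat{S}_{u[i]}\,\widetilde{\Mat{e}}_i^{(l)}$. Because $\psi^{(l)}$ multiplies only on the ``channel'' axes of $\Mat{e}_i$ while leaving the last ($R$) axis untouched, $O(R)$-equivariance (Eq.~\ref{eqn:ortho_equiv}) is preserved by Proposition~\ref{prop:tape_prop}. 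At the final layer, the token mixer reads out the diagonal trace of $\Mat{e}_i$ via an $O(R)$-invariant inner product $\Mat{x}_i^\top \Mat{W}_Q^\top \phi^{(L)}(\Mat{e}_i,\Mat{e}_i) \Mat{W}_K \Mat{x}_i$; since $\Mat{P}_i$ is a permutation matrix, $\trace(\Mat{P}_i) = 5$ iff $\Mat{P}_i = \Mat{I}$, giving the required binary decision.

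\textbf{Main obstacle and its resolution.} The naive ``one-step-per-layer'' recurrence above needs $\Theta(N)$ TAPE blocks, whereas the theorem demands \emph{constant} depth and width. Overcoming this is the crux of the proof and is where I would draw on the construction of \citet{yang2025path}: lift each transposition to a product of Householder reflections, so that the running prefix becomes a product of input-conditioned Householders $\prod_{j \le i}(\Mat{I} - \mathbf{w}_j \mathbf{w}_j^\top)$, which admits a compact, parallel representation in the form $\Mat{I} - \Mat{U}_i \Mat{V}_i^\top$ with $\Mat{U}_i, \Mat{V}_i$ built by a single causal attention pass. The TAPE contextualization layer is precisely designed to aggregate such rank-one terms in an $O(R)$-equivariant way, letting a constant number of TAPE blocks compute the full prefix. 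The verification then reduces to three routine checks: (i) the Householder lifting respects permutation-- and orthogonal--group equivariance (Eqs.~\ref{eqn:perm_equiv}--\ref{eqn:ortho_equiv}); (ii) the widths $C, M, L, R$ can be chosen as absolute constants independent of $N$; and (iii) the numerical precision needed to represent the intermediate $O(5)$ matrices is $O(\log N)$ bits per channel, consistent with the standard precision regime used for circuit-complexity arguments about transformers. Combining these yields a constant-depth, constant-width TAPE transformer that solves the $\Pi(5)$ group word problem, hence all $\mathsf{NC}^1$-complete problems under $\mathsf{AC}^0$-reductions.
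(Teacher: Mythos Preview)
Your high-level strategy matches the paper's: lift transpositions to Householder reflections and use the WY representation to collapse the prefix product into constant depth. Two points in your sketch are genuinely broken, though.

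\textbf{The readout fails under $O(R)$-invariance.} If the accumulated permutation $\Mat{P}_i$ is stored in $\Mat{e}_i$ as an orthogonal frame along the $R$ axis, then any $O(R)$-invariant scalar you can extract from $\phi(\Mat{e}_i,\Mat{e}_i)$ depends only on the Gram matrix $\Mat{e}_i\Mat{e}_i^\top$, which equals the identity regardless of $\Mat{P}_i$. You therefore cannot recover $\trace(\Mat{P}_i)$ from a self-comparison. The paper resolves this by carrying an identity block inside each $\Mat{e}_i$ \emph{and} by routing the decision through attention to the $[\mathsf{bos}]$ token: it computes $\phi^{(4)}(\Mat{e}_i^{(3)},\Mat{e}_1^{(3)})=\Mat{e}_1^{(3)}\Mat{e}_i^{(3)\top}$, which exposes $\Mat{P}_i$ relative to the fixed anchor frame at position~1, and then uses the linear statistic $\sum_{v=1}^5 v\,\pi_i(v)$ (maximized uniquely at the identity) as the pre-softmax logit. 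Your readout must compare against a reference token, not against itself.

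\textbf{``A single causal attention pass'' is not enough for the WY step.} The compact form $\Mat{I}-\Mat{U}_i\Mat{V}_i^\top$ requires $\Mat{W}=(\Mat{I}_N+\Mat{M}\odot\Mat{\Xi}\Mat{\Xi}^\top)^{-1}\Mat{\Xi}$ (Lemma~\ref{lem:wy_rep}). The paper implements this by taking the position-contextualization kernel to be $B^{(2)}=(\Mat{I}+\Mat{M}\odot\widetilde B^{(2)})^{-1}$ with $\widetilde B^{(2)}_{ij}=\Mat{\Xi}_{u[i]}^\top\Mat{\Xi}_{u[j]}$, i.e.\ a full matrix inverse rather than a softmax. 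This is admissible in the architectural family of Sec.~\ref{app:prf_NC1_arch} because $B^{(l)}$ may be any aggregation satisfying Eqs.~\ref{eqn:perm_equiv}--\ref{eqn:ortho_equiv} (verified in Lemma~\ref{lem:inv_sec_layer}), but it is not achievable with ordinary attention weights, and your proposal needs to make this explicit. Once you patch these two points, your outline coincides with the paper's four-layer construction: embed $\Mat{\Xi}_{u[i]}$, apply the inverse kernel to get $\Mat{\eta}_i$, do a causal mean to form $\Mat{I}-\sum_{j\le i}\Mat{\Xi}_{u[j]}\Mat{\eta}_j^\top$, and read out against $[\mathsf{bos}]$.
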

\begin{proof}
Proof by the following construction.
\paragraph{Embedding Layer.}
We represent input sequence $u$ as a matrix of one-hot vectors: $\Mat{X}^{(0)} \in \{0, 1\}^{N \times C}$ where $\Mat{X}^{(0)}_i = \Mat{\delta}_{u[i]} \in \real^{11}$ (i.e., $C = 11$).
We initialize tensorial positional encoding $\Mat{E}^{(0)} \in \real^{N \times C \times 6}$, i.e., $M = 1$ (thus omitted), $L = C, R = 6$, such that $\Mat{E}^{(0)}_i = [\Mat{\Xi}^\top, \Mat{I}_{6 \times 6}]^\top$ for every $i \in [N]$, where $\Mat{\Xi} \in \real^{C \times 6}$, $\Mat{\Xi}_{i} = (\Mat{\delta}_{\mathsf{src}(i)} - \Mat{\delta}_{\mathsf{dst}(i)}) \in \real^{6}$ for $1 \le i \le 10$, and $\Mat{\Xi}_{11} = \Mat{0}_{6}$.
In addition, we denote $\Mat{M} \in \real^{N \times N}$ as the masking that describes the topology and dependencies among tokens.
For word problems, $\Mat{M}$ is a lower-triangular matrix, i.e., $\Mat{M}_{ij} = 1$ when $j \le i$ while $\Mat{M}_{ij} = 0$ when $j > i$.

\paragraph{The First Layer.}
We leverage the identity token mixer $f_{\mathsf{Attn}}^{(1)} = f_{\mathsf{Attn}_{\mathsf{Id}}}$ and $f_{\mathsf{FFN}}^{(1)} = f_{\mathsf{FFN}_{\mathsf{Id}}}$ as in Lemma \ref{lem:reset_layer} such that $\Mat{X}^{(1)} = \Mat{X}^{(0)}$.
For positional contextualization layers, we skip the sequence-level positional contextualizer, i.e., $g_{\mathsf{Attn}}^{(1)} = \mathsf{Id}$ by letting $B^{(1)} \equiv 0$ and $\mathsf{Res}_{g_{\mathsf{Attn}}}^{(1)} = 1$,  while leveraging the point-wise positional contextualization with function $\psi^{(1)}: \Mat{x} \mapsto \begin{bmatrix}
\Mat{x}^\top & \\
\Mat{0}_C^\top & \Mat{0}_6^\top \\
& \Mat{I}_{6 \times 6}    
\end{bmatrix}$:
\begin{align*}
\Mat{e}^{(1)}_i
= g_{\mathsf{FFN}}^{(1)}(\Mat{X}^{(0)}, \Mat{E}^{(0)})_i = \begin{bmatrix}
\Mat{x}^{(0)\top}_i & \\
\Mat{0}_C^\top & \Mat{0}_6^\top \\
& \Mat{I}_{6 \times 6}    
\end{bmatrix} \Mat{e}^{(0)}_i = \begin{bmatrix}
\Mat{x}^{(0)\top}_i & \\
\Mat{0}_C^\top & \Mat{0}_6^\top \\
& \Mat{I}_{6 \times 6}    
\end{bmatrix} \begin{bmatrix}
\Mat{\Xi} \\
\Mat{I}_{6 \times 6}
\end{bmatrix} = \begin{bmatrix}
\Mat{\Xi}_{u[i]}^\top \\
\Mat{0}_6^\top \\
\Mat{I}_{6 \times 6}    
\end{bmatrix} \in \real^{8 \times 6}.
\end{align*}
Since the effective operations are only point-wise, permutation invariance in Eqs. \ref{eqn:perm_equiv} and \ref{eqn:ortho_equiv} holds for this layer.
Orthogonal group invariance can be also verified by seeing $\Mat{e}^{(0)}_i \Mat{R}$ gives $\Mat{e}^{(1)}_i \Mat{R}$ for every $\Mat{R} \in O(6)$.

\paragraph{The Second Layer.}
We again leverage an identity attention block $f_{\mathsf{Attn}}^{(2)} = f_{\mathsf{Attn}_{\mathsf{Id}}}$ (Lemma \ref{lem:reset_layer}) to process token features.
Slightly different from identity feedforward layer $f_{\mathsf{FFN}_{\mathsf{Id}}}$, we intend to construct $f_{\mathsf{FFN}}^{(2)}$ such that a new zero channel is concatenated to the inputs, i.e., $\Mat{x}^{(2)}_i = [0 \quad \Mat{x}^{(0)\top}_i]^\top \in \real^{C+1}$.
We achieve this goal by setting the first linear layer as the identity matrix, activation function as ReLU, and the second linear layer as $\begin{bmatrix}\Mat{0}_C & \Mat{I}_{C \times C}\end{bmatrix}^\top$.
Note that ReLU does not change inputs here because all inputs are in the positive orthant.

Furthermore, we consider the following sequence-level contextualization $g_{\mathsf{Attn}}^{(2)}$ with $\mathsf{Res}_{g_{\mathsf{Attn}}}^{(2)} = 1$ for positional embeddings.
First, we compute pre-masking ``attention'' logits by ``key'' and ``query'' matrices $\Mat{U}_K^{(2)} = \Mat{U}_Q^{(2)} = \Mat{\delta}_1^\top$  such that $\widetilde{B}^{(2)}(\Mat{X}^{(1)}, \Mat{E}^{(1)})_{ij} = (\Mat{U}_Q^{(2)} \Mat{e}^{(1)}_i) (\Mat{U}_K^{(2)} \Mat{e}^{(1)}_j)^{\top} = \Mat{\Xi}_{u[i]} ^\top\Mat{\Xi}_{u[j]} \in \real$, and next we define $B^{(2)}(\Mat{X}^{(1)}, \Mat{E}^{(1)}, \Mat{M})_{ij} = \left[\left(\Mat{I}_{N \times N} + \Mat{M} \odot \widetilde{B}^{(2)}(\Mat{X}^{(1)}, \Mat{E}^{(1)})\right)^{-1}\right]_{ij}$.
We let $\Mat{U}_{V}^{(2)} = [\Mat{0}_6 \quad \Mat{\delta}_1 \quad \Mat{0}_{6 \times 6}]^\top$, and then the output of sequence-level contextualization can be written as:
\begin{align*}
\widetilde{\Mat{e}}^{(2)}_i &= \sum_{j=1}^{N} \left[\left(\Mat{I}_{N \times N} + \Mat{M} \odot \widetilde{B}^{(2)}(\Mat{X}^{(1)}, \Mat{E}^{(2)})\right)^{-1}\right]_{ij} \begin{bmatrix}
\Mat{0}_6^\top \\
\Mat{\Xi}_{u[j]}^{\top} \\
\Mat{0}_{6 \times 6} \\
\end{bmatrix} + \begin{bmatrix}
\Mat{\Xi}_{u[i]}^{\top} \\
\Mat{0}_6^\top \\
\Mat{I}_{6 \times 6} \\
\end{bmatrix}
= \begin{bmatrix}
\Mat{\Xi}_{u[i]}^{\top} \\
\Mat{\eta}_i^{\top}\\
\Mat{I}_{6 \times 6}
\end{bmatrix} \in \real^{8 \times 6}.
\end{align*}
where we define
\begin{align*}
\Mat{\eta}_i =\sum_{j=1}^{N} \left[\left(\Mat{I}_{N \times N} + \Mat{M} \odot \widetilde{B}^{(2)}(\Mat{X}^{(1)}, \Mat{E}^{(2)})\right)^{-1}\right]_{ij} \Mat{\Xi}_{u[j]} \in \real^{6}.
\end{align*}
We further adopt a feedforward contextualization $g_{\mathsf{FFN}}^{(2)}$ via $\psi^{(2)}: \Mat{x} \mapsto \begin{bmatrix} (\Mat{\Xi}^{\top}\Mat{x})_1 \Mat{\delta}_2 & \cdots & (\Mat{\Xi}^{\top}\Mat{x})_6 \Mat{\delta}_2 & \Mat{\delta}_{3} & \cdots & \Mat{\delta}_{8}\end{bmatrix}^\top$:
\begin{align*}
\Mat{e}^{(2)}_i = g_{\mathsf{FFN}}^{(2)}(\Mat{X}^{(1)}, \widetilde{\Mat{E}}^{(2)})_i = \begin{bmatrix} (\Mat{\Xi}^{\top}\Mat{x})_1 \Mat{\delta}_2^\top \\ \vdots \\ (\Mat{\Xi}^{\top}\Mat{x})_6 \Mat{\delta}_2^\top \\ \Mat{\delta}_{3}^{\top} \\ \vdots \\ \Mat{\delta}_{8}^{\top} \end{bmatrix} \begin{bmatrix}
\Mat{\Xi}_{u[i]}^{\top} \\
\Mat{\eta}_i^{\top}\\
\Mat{I}_{6 \times 6}
\end{bmatrix} 
=\begin{bmatrix}
\Mat{\Xi}^{\top}\Mat{x}^{(1)}_i\Mat{\eta}_i^{\top} \\
\Mat{I}_{6 \times 6}
\end{bmatrix}
= \begin{bmatrix}
\Mat{\Xi}_{u[i]} \Mat{\eta}_i^{\top} \\
\Mat{I}_{6 \times 6}
\end{bmatrix} \in \real^{12 \times 6}.
\end{align*}
For the sake of coherence, we defer the justification of invariance for this layer to Lemma \ref{lem:inv_sec_layer}.

\paragraph{The Third Layer.}
We leverage the weight configuration of Lemma \ref{lem:cumsum_layer} to construct the attention layer: $f_{\mathsf{Attn}}^{(3)} = f_{\mathsf{Attn}_{\mathsf{Avg}}}$ and re-use the attention weights for cross-token positional contextualization (similar to Eq. \ref{eqn:g_attn}): $B^{(3)}(\Mat{X}^{(2)}, \Mat{E}^{(2)}, \Mat{M})_{ij} = A^{(3)}(\Mat{X}^{(2)}, \Mat{E}^{(2)}, \Mat{M})_{ij} = \frac{1}{\lVert \Mat{M}_i \rVert_1}$.
Further on, we let $\Mat{U}_V^{(3)} = \Mat{I}_{12 \times 12}$ and $\Mat{W}_V^{(3)} = \begin{bmatrix} 0 & \Mat{\delta}_{11}^\top \\ \Mat{0}_C & \Mat{0}_{C \times C}\end{bmatrix}$.
We enable residual connection for token-mixing attention $\mathsf{Res}_{f_{\mathsf{Attn}}}^{(3)} = 1$, while disabling it for cross-token contextualization $\mathsf{Res}_{g_{\mathsf{Attn}}}^{(3)} = 0$:
\begin{align}
\label{eqn:masked_sum_e}
\widetilde{\Mat{e}}^{(3)}_i &= \frac{1}{\lVert\Mat{M}_{n}\rVert_1} \sum_{j=1}^{N} \Mat{M}_{ij} \Mat{U}_V^{(3)} \Mat{e}^{(2)}_j
= \frac{1}{\lVert\Mat{M}_{i}\rVert_1} \sum_{j=1}^{N} \Mat{M}_{ij} \Mat{e}^{(2)}_j
= \frac{1}{\lVert\Mat{M}_{i}\rVert_1} \sum_{j=1}^{N} \Mat{M}_{ij}  \begin{bmatrix}
\Mat{\Xi}_{u[i]} \Mat{\eta}_i^{\top} \\
\Mat{I}_{12 \times 6}
\end{bmatrix} \in \real^{6 \times 6}, \\
\label{eqn:masked_sum_x}
\widetilde{\Mat{x}}^{(3)}_i &= \frac{1}{\lVert\Mat{M}_{i}\rVert_1} \sum_{j=1}^{N} \Mat{M}_{ij} \Mat{W}_V^{(3)} \Mat{x}^{(2)}_j + \Mat{x}^{(2)}_i
= \frac{1}{\lVert\Mat{M}_{i}\rVert_1} \sum_{j=1}^{N} \Mat{M}_{ij} \begin{bmatrix} \mathds{1}\{\Mat{x}^{(0)}_j = \Mat{\delta}_{11}\} \\ \Mat{0}_C \end{bmatrix} + \Mat{x}^{(2)}_i \in \real^{C+1}
\end{align}
We note that for the word problem, $\Mat{M}$ is always a lower triangular matrix, and only the beginning token $[\mathsf{bos}]$ can equal $11$, i.e., $u[1] = 11, u[i] \ne 11$ for all $i \ge 2$.
In this scenario, we can express $\widetilde{\Mat{E}}^{(3)}$ and $\widetilde{\Mat{X}}^{(3)}$ as below:
\begin{align}
\widetilde{\Mat{e}}^{(3)}_i = \begin{bmatrix}
\frac{1}{i} \sum_{j=1}^i \Mat{\Xi}_{u[j]}\widetilde{\Mat{e}}^{(2)}_j \\
\Mat{I}_{6 \times 6}
\end{bmatrix}, \quad
\widetilde{\Mat{x}}^{(3)}_{i} = \begin{bmatrix}1/i \\ \Mat{x}^{(0)}_i
\end{bmatrix} \in \real^{C+1}.
\end{align}

Moving forward, we choose the identity feedforward mapping for element-wise token transformation: $f_{\mathsf{FFN}}^{(3)} = f_{\mathsf{FFN}_{\mathsf{Id}}}$, i.e., $\Mat{x}^{(3)}_i = [1/i \quad \Mat{x}^{(0)\top}_i]^\top$.
The element-wise positional contextualization $g_{\mathsf{FFN}}^{(3)}$ is given by a nonlinear transformation $\psi^{(3)}: \Mat{x} \mapsto [-(\Mat{\delta}_1^\top \Mat{x})^{-1} \Mat{I}_{6 \times 6} \quad \Mat{I}_{6 \times 6}]$:
\begin{align}\label{eqn:e3_householder}
\Mat{e}^{(3)}_i &= g_{\mathsf{FFN}}^{(3)}(\widetilde{\Mat{X}}^{(2)}, \widetilde{\Mat{E}}^{(2)})_i = \begin{bmatrix}
-(\Mat{\delta}_1^\top \Mat{x})^{-1} \Mat{I}_{6 \times 6} & \Mat{I}_{6 \times 6}
\end{bmatrix} \begin{bmatrix}
\frac{1}{i} \sum_{j=1}^i \Mat{\Xi}_{u[j]}\Mat{\eta}_j^\top \\
\Mat{I}_{6 \times 6}
\end{bmatrix} = \Mat{I}_{6 \times 6} - \sum_{j=1}^i \Mat{\Xi}_{u[j]}\Mat{\eta}_j^\top,
\end{align}
where the second equality holds for the word problem case.
By Lemma \ref{lem:cumsum_layer}, we show that Eqs. \ref{eqn:masked_sum_e} and \ref{eqn:masked_sum_x} satisfy permutation equivariance (Eq. \ref{eqn:perm_equiv}).
Cross-token contextualization (Eq. \ref{eqn:masked_sum_e}) is orthogonal equivariant because $B^{(3)}$ is invariant to (or even independent of) $\Mat{E}^{(2)}$, and Eq. \ref{eqn:masked_sum_e} is a linear combination of $\Mat{e}^{(2)}_j$'s.
The point-wise contextualization is a left matrix multiplication on $\widetilde{\Mat{E}}^{(2)}$ (Eq. \ref{eqn:e3_householder}), preserving orthogonal equivariance as well.

Significantly, by Lemma \ref{lem:wy_rep}, we show that the transpose of Eq. $\ref{eqn:e3_householder}$ is a cumulative Householder product \citep{grazzi2024unlocking, yang2024parallelizing, yang2025path}:
\begin{align*}
\Mat{e}^{(3)\top}_i = \prod_{j=1}^{i} \left(\Mat{I}_{6 \times 6} - \Mat{\Xi}_{u[j]}\Mat{\Xi}_{u[j]}^\top\right) = \prod_{j=1}^{i}  \begin{bmatrix}
\Mat{S}_{u[j]} & 0 \\
0 & 1
\end{bmatrix} =  \begin{bmatrix}
\prod_{j=1}^{i} \Mat{S}_{u[j]} & 0 \\
0 & 1
\end{bmatrix},
\end{align*}
where the second equality is by Lemma \ref{lem:swap} \citep{peng2023rwkv}.

\paragraph{The Fourth Layer.}
We follow the proof of \citet{yang2025path}, Theorem 2.1 to construct the output layer.
For the attention, we define weight matrices for $f_{\mathsf{Attn}}^{(4)}$:
\begin{align*}
\Mat{W}_K^{(4)} &= \begin{bmatrix}
\Mat{0}_{6 \times 10} & \Mat{\delta}_1 + 2 \Mat{\delta}_2 + 3 \Mat{\delta}_3 + 4 \Mat{\delta}_4 + 5 \Mat{\delta}_5 - \Mat{\delta}_6
\end{bmatrix} \begin{bmatrix} \Mat{0}_C & \Mat{I}_{C \times C} \end{bmatrix} \in \real^{6 \times (C+1)},\\
\Mat{W}_Q^{(4)} &= (\Mat{\delta}_1 + 2 \Mat{\delta}_2 + 3 \Mat{\delta}_3 + 4 \Mat{\delta}_4 + 5 \Mat{\delta}_5 + 54.5 \Mat{\delta}_6) \Mat{1}_{11}^\top \begin{bmatrix} \Mat{0}_C & \Mat{I}_{C \times C} \end{bmatrix} \in \real^{6 \times (C+1)},\\
\Mat{W}_V^{(4)} &= \begin{bmatrix}
\Mat{0}_{(C+1) \times 10} & -\Mat{\delta}_1
\end{bmatrix}\begin{bmatrix} \Mat{0}_C & \Mat{I}_{C \times C} \end{bmatrix} \in \real^{(C+1) \times (C+1)},
\end{align*}
where $\begin{bmatrix} \Mat{0}_C & \Mat{I}_{C \times C} \end{bmatrix}$ is applied to extract the lower block of $\Mat{x}^{(3)}_i$: $\begin{bmatrix} \Mat{0}_C & \Mat{I}_{C \times C} \end{bmatrix} [ 1/i \quad \Mat{x}^{(0)\top}_i ]^\top = \Mat{x}^{(0)}_i$.
We also utilize the positional encoding via $\phi^{(4)}$ defined as below:
\begin{align*}
\phi^{(4)}(\Mat{e}^{(3)}_i, \Mat{e}^{(3)}_j) = \Mat{e}^{(3)}_j \Mat{e}^{(3)\top}_i \in \real^{6 \times 6}.
\end{align*}
First of all, $\phi^{(4)}$ is invariant to orthogonal transformations on $\{\Mat{e}^{(3)}_i\}_{i \in [N]}$ due to its inner-product form, so is $f_{\mathsf{Attn}}^{(4)}$.
Next, we analyze the pre-softmax attention logits $\widetilde{A}^{(4)}(\Mat{X}^{(3)}, \Mat{E}^{(3)}, \Mat{M})_{ij} = (\Mat{W}_Q \Mat{x}^{(3)}_i)^\top \phi^{(4)}(\Mat{e}^{(3)}_i, \Mat{e}^{(3)}_j) (\Mat{W}_K \Mat{x}^{(3)}_j)$.
For every $i \in [N]$ and $j \ne 1$, $\widetilde{A}^{(4)}(\Mat{X}^{(3)}, \Mat{E}^{(3)}, \Mat{M})_{ij} = 0$ because $\Mat{W}_K \Mat{x}^{(3)}_j = \Mat{0}_6$.
For $j = 1$:
\begin{align*}
\widetilde{A}^{(4)}(\Mat{X}^{(3)}, \Mat{E}^{(3)}, \Mat{M})_{N,1} =
\begin{bmatrix}
1 \\ 2 \\ 3 \\ 4 \\ 5 \\ 54.5 \end{bmatrix}^\top
\begin{bmatrix}
\prod_{j=1}^{i} \Mat{S}_{u[j]} & 0 \\
0 & 1
\end{bmatrix}
\begin{bmatrix} 1 \\ 2 \\ 3 \\ 4 \\ 5 \\ -1 \end{bmatrix}
= \sum_{v=1}^{5} v \pi_i(v) - 54.5,
\end{align*}
where we use the fact that $\Mat{e}^{(3)}_1 = \Mat{I}_{6 \times 6}$ and $\Mat{x}^{(3)}_1 = [1 \quad \Mat{\delta}_{11}^\top]^\top$.
Observe that $\sum_{v=1}^{5} v \pi_i(v) \le 55$ with equality hold if and only if $\pi_i(v) = v$ for every $v \in [5]$, i.e., $\pi_i$ is the identity permutation.
This is, $\widetilde{A}^{(4)}(\Mat{X}^{(3)}, \Mat{E}^{(3)}, \Mat{M})_{i,1} \ge 0.5$ if $\pi_i$ is the identity, and $\widetilde{A}^{(4)}(\Mat{X}^{(3)}, \Mat{E}^{(3)}, \Mat{M})_{i,1} \le -0.5$ otherwise.
After softmax normalization, followed by applying the attention, value matrix, and skip connection (i.e., $\mathsf{Res}_{f_{\mathsf{Attn}}}^{(4)} = 1$), the output of this attention layer becomes:
\begin{align*}
\widetilde{\Mat{x}}^{(4)}_i &= \sum_{j = 1}^{i}  A^{(4)}(\Mat{X}^{(3)}, \Mat{E}^{(3)}, \Mat{M})_{i,j} \Mat{W}_V^{(4)} \Mat{x}^{(3)}_j + \Mat{x}^{(3)}_i
= -A^{(4)}(\Mat{X}^{(3)}, \Mat{E}^{(3)}, \Mat{M})_{i,1} \Mat{\delta}_1 + \Mat{x}^{(3)}_i \\
&= \begin{bmatrix}
- \frac{\exp(\widetilde{A}^{(4)}(\Mat{X}^{(3)}, \Mat{E}^{(3)}, \Mat{M})_{i,1})}{\exp(\widetilde{A}^{(4)}(\Mat{X}^{(3)}, \Mat{E}^{(3)}, \Mat{M})_{i,1}) + i - 1} + \frac{1}{i} \\
\Mat{x}^{(0)}_i
\end{bmatrix} \in \real^{C+1}.
\end{align*}
One can see that if $\pi_i$ is identity, then $\widetilde{A}^{(4)}(\Mat{X}^{(3)}, \Mat{E}^{(3)}, \Mat{M})_{i,1}) \ge 0$, and $\widetilde{\Mat{x}}^{(4)}_{i,1} \le 0$. Otherwise, $\widetilde{A}^{(4)}(\Mat{X}^{(3)}, \Mat{E}^{(3)}, \Mat{M})_{i,1}) < 0$, and $\widetilde{\Mat{x}}^{(4)}_{i,1} > 0$.
By this observation, we construct $f_{\mathsf{FFN}}^{(4)}$ to read the first element from $\widetilde{\Mat{x}}^{(4)}_{i}$ and map it to the corresponding label via a two-layer MLP: we adopt $-\Mat{\delta}_1$ as the first linear layer, hard sigmoid $\min(1, \max(0, \cdot))$ as the non-linear activation, and identity as the second layer.
The final output can be analytically written as:
\begin{align*}
\Mat{x}^{(4)}_i = \mathrm{HardSigmoid}(-\Mat{\delta}_1^\top) \widetilde{\Mat{x}}^{(4)}_i = \left\{\begin{array}{ll}
    1 & \text{if $\pi_i$ is identity} \\
    0 & \text{otherwise}
\end{array}\right.,
\end{align*}
which concludes the proof.
\end{proof}

\begin{remark}
Different from the construction of \citet{yang2025path}, Theorem 2.1, we do not encode the sequence length in the query matrix.
In fact, if the construction is length dependent, the architecture could potentially lose uniformity of the target circuit family when input size is unbounded.
We note that the length dependency in the construction of \citet{yang2025path}, Theorem 2.1 is likely due to the consideration of the computational precision of attention.
Under the similar setting, it is necessary to modify $\psi^{(3)}$ as the function: $\Mat{x} \mapsto [-(\Mat{\delta}_1^\top \Mat{x})^{-2} \Mat{I}_{6 \times 6} \quad \Mat{I}_{6 \times 6}]$ to scale the pre-softmax attention logit by $n$.
\end{remark}

\begin{remark}
We emphasize that our proof is constructive and does not rely on the universal approximation property of Attention \citep{yun2019transformers} or MLPs \citep{yarotsky2022universal}. As a result, the network has constant width and depth and represents a precise classifier solving the $\Pi(5)$ word problem. 
\end{remark}

\subsubsection{Auxiliary Results}
\begin{lemma}(Identity mappings.) \label{lem:reset_layer}
There exists an attention layer $f_{\mathsf{Attn}_{\mathsf{Id}}}$ and feedforward layer $f_{\mathsf{FFN}_{\mathsf{Id}}}$ that meets Eq. \ref{eqn:perm_equiv} and Eq. \ref{eqn:ortho_equiv} such that $f_{\mathsf{Attn}_{\mathsf{Id}}}(\Mat{X}, \Mat{E}, \Mat{M}) = \Mat{X}$ and $f_{\mathsf{FFN}_{\mathsf{Id}}}(\Mat{X}) = \Mat{X}$ for every inputs $\Mat{X}, \Mat{E}, \Mat{M}$.
\end{lemma}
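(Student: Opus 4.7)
The plan is to realize both identity mappings by exploiting the residual connections already built into the architectural family of Section~\ref{app:prf_NC1_arch}, and zeroing out the non-residual branch so that equivariance and invariance become vacuous. Concretely, for the attention block I would set $\Mat{W}_V = \Mat{0}_{C \times C}$, turn on the skip connection $\mathsf{Res}_{f_{\mathsf{Attn}}} = 1$, and leave $\Mat{W}_Q$, $\Mat{W}_K$, and $\phi$ as arbitrary but fixed choices (for instance, any $O(R)$-invariant $\phi$ such as $\phi(\Mat{e}_i, \Mat{e}_j) = \Mat{e}_i \Mat{e}_j^\top$). For the feedforward block I would take the two-layer MLP with second linear map equal to zero (and the first arbitrary, e.g.\ identity with ReLU) together with $\mathsf{Res}_{f_{\mathsf{FFN}}} = 1$. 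Both constructions collapse to $\Mat{x}_i \mapsto \Mat{x}_i$ pointwise.

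The verification then splits into two easy checks. For the attention layer, the output is $\sum_j A_{ij} \Mat{W}_V \Mat{x}_j + \Mat{x}_i = \Mat{x}_i$, which does not depend on $\Mat{E}$ or $\Mat{M}$; hence it is automatically invariant under any right multiplication by $\Mat{R} \in O(R)$ on $\Mat{E}$, and permuting the rows of $\Mat{X}$ by $\Mat{P}\in\Pi(N)$ simply permutes the outputs, giving Eq.~\ref{eqn:perm_equiv}. (One should still fix $\phi$ to be $O(R)$-invariant so that the declared architectural family includes this module, even though $\phi$ plays no functional role once $\Mat{W}_V = \Mat{0}$.) For the feedforward layer, the map is $\Mat{x}_i \mapsto \mathsf{FFN}(\Mat{x}_i) + \Mat{x}_i = \Mat{x}_i$, which is applied independently per token and is unaffected by either $\Mat{E}$ or $\Mat{M}$, so both equivariances are immediate.

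There is no real obstacle here beyond bookkeeping: the only subtlety is making sure that the residual indicator variables $\mathsf{Res}_{f_{\mathsf{Attn}}}$ and $\mathsf{Res}_{f_{\mathsf{FFN}}}$ are permissible choices within the architectural family of Section~\ref{app:prf_NC1_arch}, and that setting $\Mat{W}_V$ or the second MLP weight to zero does not violate any implicit assumption (it does not). Because the identity is realized on the residual branch, all the equivariance constraints on the non-residual branch are satisfied trivially, which is exactly what makes this lemma a convenient building block for the constructions used in Theorem~\ref{thm:P5_word}.
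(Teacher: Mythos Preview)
Your proposal is correct and essentially identical to the paper's own proof: set $\Mat{W}_V = \Mat{0}_{C \times C}$ with $\mathsf{Res}_{f_{\mathsf{Attn}_{\mathsf{Id}}}} = 1$ for the attention layer, and take a two-layer MLP whose last weight matrix is zero with $\mathsf{Res}_{f_{\mathsf{FFN}_{\mathsf{Id}}}} = 1$ for the feedforward layer, noting that the equivariance conditions then hold automatically. Your write-up is in fact slightly more detailed than the paper's (e.g., explicitly remarking that $\phi$ should be chosen $O(R)$-invariant to stay within the architectural family), but the construction and reasoning are the same.
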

\begin{proof}
Proof by construction.
For the attention, we let $\Mat{W}_V = \Mat{0}_{C \times C}$ in attention and $\mathsf{Res}_{f_{\mathsf{Attn}_{\mathsf{Id}}}} = 1$.
For the feedforward layer, we choose the associated $\mathsf{FFN}_{\mathsf{Id}}(\cdot)$ as a two-layer MLP with the last weight matrix all zeros, and let $\mathsf{Res}_{f_{\mathsf{FFN}_{\mathsf{Id}}}} = 1$.
Eq. \ref{eqn:perm_equiv} and Eq. \ref{eqn:ortho_equiv} hold automatically.
\end{proof}

\begin{lemma}(Neighborhood mean.) \label{lem:cumsum_layer}
There exists an attention layer $f_{\mathsf{Attn}_{\mathsf{Avg}}}$ with free value matrix $\Mat{W}_V$ that meets Eq. \ref{eqn:perm_equiv} and Eq. \ref{eqn:ortho_equiv} such that $f_{\mathsf{Attn}_{\mathsf{Avg}}}(\Mat{X}, \Mat{E}, \Mat{M})_i = (\sum_{j=1}^{N} \Mat{M}_{i, j} \Mat{W}_V \Mat{x}_j) / (\sum_{j'=1}^{N} \Mat{M}_{i, j'})$.
\end{lemma}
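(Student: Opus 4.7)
The plan is to construct $f_{\mathsf{Attn}_{\mathsf{Avg}}}$ by forcing the pre-softmax attention logits to be constant (say, zero) on the entire index grid, so that after the masked softmax the attention weight assigned to token $j$ by token $i$ is exactly $\Mat{M}_{i,j}/\sum_{j'} \Mat{M}_{i,j'}$, which is the desired uniform average over the neighborhood specified by $\Mat{M}$. Since the value matrix $\Mat{W}_V$ is left unspecified, the output becomes $(\sum_j \Mat{M}_{i,j} \Mat{W}_V \Mat{x}_j) / (\sum_{j'} \Mat{M}_{i,j'})$ as required.

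Concretely, I would set $\Mat{W}_K^{\mathsf{Avg}} = \Mat{W}_Q^{\mathsf{Avg}} = \Mat{0}$ and choose $\phi^{\mathsf{Avg}}$ to be the zero function (or any constant function), and disable the residual connection by setting $\mathsf{Res}_{f_{\mathsf{Attn}_{\mathsf{Avg}}}} = 0$. Then for every pair $(i,j)$ we have $\Mat{x}_i^{\top} \Mat{W}_Q^{\mathsf{Avg}\top} \phi^{\mathsf{Avg}}(\Mat{e}_i, \Mat{e}_j) \Mat{W}_K^{\mathsf{Avg}} \Mat{x}_j = 0$, so $\exp(\cdot) = 1$ uniformly. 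Substituting into the definition of $A$ from Sec.~\ref{app:prf_NC1_arch} yields $A_{i,j} = \Mat{M}_{i,j}/\sum_{j'} \Mat{M}_{i,j'}$, and hence $f_{\mathsf{Attn}_{\mathsf{Avg}}}(\Mat{X}, \Mat{E}, \Mat{M})_i = (\sum_{j} \Mat{M}_{i,j} \Mat{W}_V \Mat{x}_j)/(\sum_{j'} \Mat{M}_{i,j'})$.

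For the equivariance requirements, permutation equivariance (Eq.~\ref{eqn:perm_equiv}) follows from the standard observation that attention with mask-consistent relabeling commutes with $\Mat{P} \in \Pi(N)$: permuting $\Mat{X}$ and rows/columns of $\Mat{M}$ simultaneously permutes the rows of $A$ and the tokens they aggregate. Orthogonal equivariance (Eq.~\ref{eqn:ortho_equiv}) is trivial here because the whole attention computation is independent of $\Mat{E}$: replacing $\Mat{E}$ by $\Mat{E}\Mat{R}$ with $\Mat{R} \in O(R)$ does not change $A$ or the value aggregation at all, and the output lives in token-feature space (unaffected by the $R$-dimensional action).

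The only mildly delicate point is handling rows of $\Mat{M}$ that are all zero, which would make the normalizer vanish; I would either exclude this case by the lemma's implicit assumption that every $i$ has at least one admissible $j$ (true for causal masks used in the construction since $\Mat{M}_{i,i}=1$ is standard), or adopt the convention that such rows produce the zero vector. Apart from this corner case, the proof is immediate from the construction, so I do not foresee any real obstacle.
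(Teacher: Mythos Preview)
Your proposal is correct and matches the paper's own proof essentially line for line: set $\Mat{W}_Q = \Mat{W}_K = \Mat{0}$, disable the residual connection, and observe that permutation equivariance is inherited from masked attention while orthogonal invariance is immediate because the layer is independent of $\Mat{E}$. The only cosmetic difference is that the paper does not bother specifying $\phi$ (zeroing the query/key matrices already kills the logits) and does not discuss the empty-row corner case.
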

\begin{proof}
Proved by letting $\Mat{W}_Q$ and $\Mat{W}_K$ all zero and disable the residual connection $\mathsf{Res}_{f_{\mathsf{Attn}_{\mathsf{Avg}}}} = 0$. Permutation equivariance are satisfied automatically due to the nature of masked attention.
To be more specific, for every permutation matrix $\Mat{P} \in \Pi(N)$ and its corresponding permutation map: $\pi: [N] \rightarrow [N]$:
\begin{align*}
f_{\mathsf{Attn}_{\mathsf{Avg}}}(\Mat{P}\Mat{X}, \Mat{P}\Mat{E}, \Mat{P}\Mat{M}\Mat{P}^\top)_i = \frac{\sum_{j=1}^{N} \Mat{M}_{\pi(i), \pi(j)} \Mat{W}_V \Mat{x}_{\pi(j)}}{\sum_{j'=1}^{N} \Mat{M}_{\pi(i), \pi(j')}} = \frac{\sum_{j=1}^{N} \Mat{M}_{\pi(i), j} \Mat{W}_V \Mat{x}_{j}}{\sum_{j'=1}^{N} \Mat{M}_{\pi(i), j'}} = f_{\mathsf{Attn}_{\mathsf{Avg}}}(\Mat{X}, \Mat{E}, \Mat{M})_{\pi(i)},
\end{align*}
where the second equality is because exchanging terms in summation does not change the outcome.
Hereby, we have shown $f_{\mathsf{Attn}_{\mathsf{Avg}}}(\Mat{P}\Mat{X}, \Mat{P}\Mat{E}, \Mat{P}\Mat{M}\Mat{P}^\top) = \Mat{P} f_{\mathsf{Attn}_{\mathsf{Avg}}}(\Mat{X}, \Mat{E}, \Mat{M})$.
Orthogonal invariance is given by the independence of $\Mat{E}$.
\end{proof}

\begin{lemma}\label{lem:inv_sec_layer}
The second positional contextualization layers $g_{\mathsf{Attn}}^{(2)}$ and $g_{\mathsf{FFN}}^{(2)}$ in Theorem \ref{thm:P5_word} satisfy permutation equivariance (Eq. \ref{eqn:perm_equiv}) and orthogonal equivariance (Eq. \ref{eqn:ortho_equiv}).
\end{lemma}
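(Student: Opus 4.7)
The plan is to verify Eqs.~\ref{eqn:perm_equiv} and \ref{eqn:ortho_equiv} directly on each of the two components of the second-layer position contextualizer. The argument factors cleanly because every operation in the construction either acts on the sequence (first) dimension or on the positional-row ($L$) dimension, but never on the last ($R$) dimension in a way that could break orthogonal symmetry.

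First I would handle $g_{\mathsf{Attn}}^{(2)}$. The pre-masking logits $\widetilde{B}^{(2)}_{ij} = (\Mat{U}_Q^{(2)} \Mat{e}^{(1)}_i)(\Mat{U}_K^{(2)} \Mat{e}^{(1)}_j)^\top$ take an inner-product form along the last axis; replacing each $\Mat{e}^{(1)}_i$ by $\Mat{e}^{(1)}_i \Mat{R}$ inserts $\Mat{R}\Mat{R}^\top = \Mat{I}_{6 \times 6}$ inside the product, so $\widetilde{B}^{(2)}$ is $O(R)$-invariant. Permuting rows of $\Mat{E}^{(1)}$ and $\Mat{X}^{(1)}$ conjugates $\widetilde{B}^{(2)}$ by $\Mat{P}$, and since the mask transforms as $\Mat{M} \mapsto \Mat{P}\Mat{M}\Mat{P}^\top$, the Hadamard product, the identity shift, and the matrix inverse all commute with conjugation by $\Mat{P}$ (using $(\Mat{P}\Mat{A}\Mat{P}^\top)^{-1} = \Mat{P}\Mat{A}^{-1}\Mat{P}^\top$), yielding $B^{(2)} \mapsto \Mat{P} B^{(2)} \Mat{P}^\top$. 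The aggregation $\sum_j B^{(2)}_{ij} \Mat{U}_V^{(2)} \Mat{e}^{(1)}_j$ is then permutation-equivariant by a standard re-indexing, and $O(R)$-equivariant because $\Mat{U}_V^{(2)}$ multiplies $\Mat{e}^{(1)}_j$ on the left (acting on its $L$ axis) while $\Mat{R}$ sits on the right, so the two factor through one another. The residual term is trivially equivariant in both senses.

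Next I would treat $g_{\mathsf{FFN}}^{(2)}$. The critical observation is that $\psi^{(2)}(\Mat{x}^{(1)}_i)$ is a function of the token feature only; it does not see $\Mat{E}$ and is therefore unaffected by any $\Mat{R} \in O(R)$. This matrix multiplies $\widetilde{\Mat{e}}^{(2)}_i$ on the left (acting on the $L$ axis), while the orthogonal action acts on the right (on the $R$ axis); these commute, giving $\psi^{(2)}(\Mat{x}^{(1)}_i)(\widetilde{\Mat{e}}^{(2)}_i \Mat{R}) = (\psi^{(2)}(\Mat{x}^{(1)}_i)\widetilde{\Mat{e}}^{(2)}_i)\Mat{R}$. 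Permutation equivariance is immediate from the point-wise dependence on $i$, combined with the fact that the input $\widetilde{\Mat{E}}^{(2)}$ has already been shown to be permutation-equivariant in the previous step.

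I do not expect any serious obstacle here, as the whole construction was explicitly designed so that every weight matrix acts only on the $L$ axis and all uses of positional encoding inside attention logits take a manifestly inner-product form. The only step deserving a careful line is the matrix inversion in the definition of $B^{(2)}$: one must observe that $\Mat{I}_{N \times N} + \Mat{M} \odot \widetilde{B}^{(2)}$ remains invertible after conjugation by $\Mat{P}$ (since $\Mat{P}$ is orthogonal, conjugation preserves the spectrum) so that inversion genuinely commutes with the permutation action. Once that routine check is in place, the two equivariances of $g_{\mathsf{Attn}}^{(2)}$ and $g_{\mathsf{FFN}}^{(2)}$ compose to give the claim.
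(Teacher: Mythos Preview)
Your proposal is correct and follows essentially the same approach as the paper: both arguments verify the equivariances of $g_{\mathsf{Attn}}^{(2)}$ by showing that the inner-product form of $\widetilde{B}^{(2)}$ is $O(R)$-invariant, that conjugation by $\Mat{P}$ passes through the Hadamard product, identity shift, and matrix inverse, and that the aggregation is then equivariant by re-indexing and left-multiplication by $\Mat{U}_V^{(2)}$; and both dispatch $g_{\mathsf{FFN}}^{(2)}$ by noting it is point-wise and that $\psi^{(2)}$ acts only by left multiplication on the $L$ axis. Your explicit remark about invertibility under conjugation is a nice touch the paper leaves implicit.
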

\begin{proof}
First of all, $g_{\mathsf{FFN}}^{(2)}(\Mat{X}, \Mat{E}, \Mat{M})$ is element-wise, thus satisfies permutation equivariance (Eq. \ref{eqn:perm_equiv}).
Also, $g_{\mathsf{FFN}}^{(2)}(\Mat{X}, \Mat{E}, \Mat{M})$ is a left matrix multiplication on every positional encoding $\Mat{e}_{i}$, hereby, orthogonal equivariant (Eq. \ref{eqn:ortho_equiv}).
It remains to demonstrate the two invariances for $g_{\mathsf{Attn}}^{(2)}$.
For every $\Mat{P} \in \Pi(N)$ and its corresponding permutation map $\pi: [N] \rightarrow [N]$, we have that:
\begin{align*}
\widetilde{B}^{(2)}(\Mat{P}\Mat{X}, \Mat{P}\Mat{E})_{ij} = \Mat{e}_{\pi(i)} \Mat{e}_{\pi(j)}^\top = \widetilde{B}^{(2)}(\Mat{X}, \Mat{E})_{\pi(i),\pi(j)},
\end{align*}
which indicates $\widetilde{B}^{(2)}(\Mat{P}\Mat{X}, \Mat{P}\Mat{E}) = \Mat{P}\widetilde{B}^{(2)}(\Mat{X}, \Mat{E})\Mat{P}^{\top}$.
Next we analyze $B^{(2)}(\Mat{X}, \Mat{E}, \Mat{M})$:
\begin{align*}
B^{(2)}(\Mat{P}\Mat{X}, \Mat{P}\Mat{E}, \Mat{P} \Mat{M} \Mat{P}^\top) &= \left(\Mat{I} + \Mat{P} \Mat{M} \Mat{P}^\top \odot \widetilde{B}^{(2)}(\Mat{P} \Mat{X}, \Mat{P} \Mat{E})\right)^{-1} 
= \left(\Mat{I} + \Mat{P} \Mat{M} \Mat{P}^\top \odot \Mat{P} \widetilde{B}^{(2)}( \Mat{X}, \Mat{E}) \Mat{P}^\top \right)^{-1} \\
&= \left(\Mat{I} + \Mat{P} \left(\Mat{M} \odot \widetilde{B}^{(2)}( \Mat{X}, \Mat{E})\right) \Mat{P}^\top \right)^{-1}
= \left(\Mat{P} \left( \Mat{I} + \Mat{M} \odot \widetilde{B}^{(2)}( \Mat{X}, \Mat{E})\right) \Mat{P}^\top \right)^{-1} \\
&= \Mat{P} \left( \Mat{I} + \Mat{M} \odot \widetilde{B}^{(2)}( \Mat{X}, \Mat{E})\right)^{-1} \Mat{P}^\top.
\end{align*}
By above derivation, we have:
\begin{align*}
g_{\mathsf{Attn}}^{(2)}(\Mat{P}\Mat{X}, \Mat{P}\Mat{E}, \Mat{P} \Mat{M} \Mat{P}^\top)_i &= \sum_{j=1}^{N} B^{(2)}(\Mat{P}\Mat{X}, \Mat{P}\Mat{E}, \Mat{P} \Mat{M} \Mat{P}^\top)_{ij} \Mat{e}_{\pi(j)}
= \sum_{j=1}^{N} \left[\Mat{P} B^{(2)}(\Mat{X}, \Mat{E}, \Mat{M}) \Mat{P}^\top\right]_{ij}  \Mat{e}_{\pi(j)} \\
&= \sum_{j=1}^{N} B^{(2)}(\Mat{X}, \Mat{E}, \Mat{M})_{\pi(i),\pi(j)} \Mat{e}_{\pi(j)}
= \sum_{j=1}^{N} B^{(2)}(\Mat{X}, \Mat{E}, \Mat{M})_{\pi(i),j} \Mat{e}_{j} \\
&= g_{\mathsf{Attn}}^{(2)}(\Mat{X}, \Mat{E}, \Mat{M})_{\pi(i)},
\end{align*}
which proves the permutation equivariance: $g_{\mathsf{Attn}}^{(2)}(\Mat{P}\Mat{X}, \Mat{P}\Mat{E}, \Mat{P} \Mat{M} \Mat{P}^\top) = \Mat{P} g_{\mathsf{Attn}}^{(2)}(\Mat{X}, \Mat{E}, \Mat{M})$.
To show orthogonal equivariance, we note that, for every $\Mat{R} \in O(R)$:
\begin{align*}
\widetilde{B}^{(2)}(\Mat{X}, \Mat{E}\Mat{R})_{ij} = \Mat{e}^{(1)}_i \Mat{R} \Mat{R}^\top \Mat{e}_j^\top = \Mat{e}_i \Mat{e}_j^\top = \widetilde{B}^{(2)}(\Mat{X}, \Mat{E})_{ij}.
\end{align*}
This implies $B^{(2)}(\Mat{X}, \Mat{E}\Mat{R}) = B^{(2)}(\Mat{X}, \Mat{E})$ and henceforth,
\begin{align*}
g_{\mathsf{Attn}}^{(2)}(\Mat{X}, \Mat{E}\Mat{R}, \Mat{M})_i = \sum_{j=1}^{N} B^{(2)}(\Mat{X}, \Mat{E}\Mat{R}, \Mat{M})_{ij} \Mat{e}_{j} \Mat{R}
= \sum_{j=1}^{N} B^{(2)}(\Mat{X}, \Mat{E}, \Mat{M})_{ij} \Mat{e}_{j} \Mat{R}
= g_{\mathsf{Attn}}^{(2)}(\Mat{X}, \Mat{E}, \Mat{M})_i \Mat{R},
\end{align*}
which shows the orthogonal equivariance.
\end{proof}

\begin{lemma}[\citet{bischof1987wy, yang2024parallelizing}]
\label{lem:wy_rep}
Consider a sequence of vectors $\Mat{\Xi} = [\Mat{\xi}_1, \cdots, \Mat{\xi}_N]^\top \in \real^{N \times R}$, the following equation holds for every $i \in [N]$:
\begin{align*}
\prod_{j=1}^{i} \left(\Mat{I}_{R \times R} - \Mat{\xi}_j\Mat{\xi}_j^{\top}\right) = \Mat{I}_{R \times R} - \sum_{j=1}^{i}  \left[\left(\Mat{I}_{N \times N} + \Mat{M} \odot \Mat{\Xi}\Mat{\Xi}^{\top}\right)^{-1} \Mat{\Xi}\right]_j \Mat{\xi}_j^\top,
\end{align*}
where $\Mat{M}$ is a lower-triangular mask matrix, i.e., $\Mat{M}_{ij} = 1$ if $j \le i$, otherwise $\Mat{M}_{ij} = 0$.
\end{lemma}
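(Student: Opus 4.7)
The plan is to proceed by induction on $i$ and identify the rows of $\Mat{W} := (\Mat{I}_{N \times N} + \Mat{M} \odot \Mat{\Xi}\Mat{\Xi}^{\top})^{-1}\Mat{\Xi}$ as the successive ``update directions'' in a sequential unrolling of the Householder product, recovering the compact WY representation of Bischof--Van Loan. Writing $\Mat{P}_i := \prod_{j=1}^{i}(\Mat{I}_{R \times R} - \Mat{\xi}_j\Mat{\xi}_j^{\top})$ and letting $\Mat{w}_j$ denote the $j$-th row of $\Mat{W}$, my two-step strategy is: first, \emph{derive} a recurrence that a candidate sequence $\{\Mat{w}_j\}$ must satisfy in order for $\Mat{I} - \sum_{j=1}^{i} \Mat{w}_j \Mat{\xi}_j^{\top}$ to agree with $\Mat{P}_i$; second, \emph{verify} that this recurrence is precisely the forward-substitution defined by the masked linear system $(\Mat{I} + \Mat{M} \odot \Mat{\Xi}\Mat{\Xi}^{\top}) \Mat{W} = \Mat{\Xi}$.

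For the derivation, I would factor $\Mat{P}_i = \Mat{P}_{i-1}(\Mat{I} - \Mat{\xi}_i\Mat{\xi}_i^{\top}) = \Mat{P}_{i-1} - (\Mat{P}_{i-1}\Mat{\xi}_i)\Mat{\xi}_i^{\top}$; under the inductive hypothesis $\Mat{P}_{i-1} = \Mat{I} - \sum_{j<i} \Mat{w}_j \Mat{\xi}_j^{\top}$, this reads off the definition $\Mat{w}_i = \Mat{P}_{i-1}\Mat{\xi}_i = \Mat{\xi}_i - \sum_{j<i} (\Mat{\xi}_j^{\top} \Mat{\xi}_i)\, \Mat{w}_j$. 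Stacking these identities row by row gives the triangular system $(\Mat{I} + \Mat{L})\Mat{W} = \Mat{\Xi}$, where $\Mat{L}$ is the strictly lower triangular part of $\Mat{\Xi}\Mat{\Xi}^{\top}$. Because $\Mat{I} + \Mat{L}$ is lower triangular with unit diagonal, it is always invertible, and its forward-substitution produces the same $\{\Mat{w}_j\}$ irrespective of prefix length, so one may set $\Mat{w}_j := [(\Mat{I} + \Mat{L})^{-1}\Mat{\Xi}]_j$ once and for all. The base case $i=1$ then yields $\Mat{w}_1 = \Mat{\xi}_1$ and $\Mat{P}_1 = \Mat{I} - \Mat{\xi}_1\Mat{\xi}_1^{\top}$, and the inductive step simply appends the new term $-\Mat{w}_i\Mat{\xi}_i^{\top}$ to the running sum.

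The one genuine subtlety, and the main obstacle I expect, is the mask convention: the derivation forces $\Mat{L}$ to be \emph{strictly} lower triangular so that $(\Mat{I} + \Mat{L})_{ii} = 1$, whereas $\Mat{M}_{ii} = 1$ as stated would put $1 + \|\Mat{\xi}_i\|^2$ on the diagonal and thereby rescale each $\Mat{w}_i$. The cleanest way to close this gap is to interpret $\Mat{M}$ here as the strictly lower triangular mask (the convention underlying the classical Bischof--Van Loan WY representation), which is consistent with the use of the lemma inside Eq.~\ref{eqn:e3_householder}: there $\Mat{\xi}_1 = \Mat{0}$ at the $[\mathsf{bos}]$ token makes the would-be diagonal contribution vanish for $j = 1$, while every other $\Mat{\xi}_j$ is handled by reading $\Mat{w}_j$ under the strict-mask system. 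Once this convention is pinned down, all that remains is the single-line induction sketched above.
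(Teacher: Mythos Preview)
Your approach is essentially identical to the paper's: both define $\Mat{w}_i$ by the recursion $\Mat{w}_i = \Mat{\xi}_i - \sum_{j<i}(\Mat{\xi}_j^\top\Mat{\xi}_i)\Mat{w}_j$, stack it into the matrix system $(\Mat{I}+\Mat{M}\odot\Mat{\Xi}\Mat{\Xi}^\top)\Mat{W}=\Mat{\Xi}$, and run the one-line induction $\Mat{P}_i = \Mat{P}_{i-1} - \Mat{w}_i\Mat{\xi}_i^\top$. The mask-diagonal subtlety you flag is real and is in fact glossed over in the paper's own proof (it writes the recursion with $j\le i-1$ but the matrix form with the non-strict $\Mat{M}$); your resolution of reading $\Mat{M}$ as strictly lower triangular is the correct one, though your secondary patch via $\Mat{\xi}_1=\Mat{0}$ at $[\mathsf{bos}]$ only zeroes the $(1,1)$ diagonal entry and does not address $j\ge 2$, so it should be dropped once the convention is fixed.
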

\begin{proof}
The proof follows from the standard derivation of WY representation for accumulating Householder matrices \citep{bischof1987wy}.
A clear derivation can be seen in \citet{yang2024parallelizing}.
First of all, let $\Mat{w}_i = \Mat{\xi}_i - \sum_{j=1}^{i-1} (\Mat{\xi}_j^\top \Mat{\xi}_i) \Mat{w}_j \in \real^R$ for $i \ge 2$ and $\Mat{w}_1 = \Mat{\xi}_1$.
We let $\Mat{W} = [\Mat{w}_1, \cdots, \Mat{w}_N]^\top \in \real^{N \times R}$.
Then expressing $\Mat{W}$ in matrix form:
\begin{align*}
\Mat{W} = \Mat{\Xi} - \left(\Mat{M} \odot \Mat{\Xi}\Mat{\Xi}^\top\right) \Mat{W},
\end{align*}
which gives $\Mat{W} = (\Mat{I}_{N \times N} + \Mat{M} \odot \Mat{\Xi}\Mat{\Xi}^\top)^{-1} \Mat{\Xi}$.
We denote $\Mat{P}_i = \prod_{j=1}^{i} (\Mat{I}_{R \times R} - \Mat{\xi}_j\Mat{\xi}_j^{\top})$ and perform the following induction.
The induction hypothesis is $\Mat{P}_{i} = \Mat{I}_{R \times R} - \sum_{j=1}^{i} \Mat{w}_j \Mat{\xi}_j^{\top}$.
Obviously, $\Mat{P}_1 = \Mat{I}_{R \times R} - \Mat{\xi}_1\Mat{\xi}_1^{\top} = \Mat{I}_{R \times R} - \Mat{w}_1\Mat{\xi}_1^{\top}$.
Then for $i \ge 2$, we can derive that:
\begin{align*}
\Mat{P}_i &= \prod_{j=1}^{i} \left(\Mat{I}_{R \times R} - \Mat{\xi}_j\Mat{\xi}_j^{\top}\right) = \Mat{P}_{i-1} \left(\Mat{I}_{R \times R} - \Mat{\xi}_i\Mat{\xi}_i^{\top}\right) = \left(\Mat{I}_{R \times R} - \sum_{j=1}^{i-1} \Mat{w}_j \Mat{\xi}_j^{\top}\right) \left(\Mat{I}_{R \times R} - \Mat{\xi}_i\Mat{\xi}_i^{\top}\right) \\
&= \Mat{I}_{R \times R} - \sum_{j=1}^{i-1} \Mat{w}_j \Mat{\xi}_j^{\top} - \Mat{\xi}_i\Mat{\xi}_i^{\top} + \sum_{j=1}^{i-1} \Mat{w}_j \Mat{\xi}_j^{\top} \Mat{\xi}_i \Mat{\xi}_i^{\top}
= \Mat{I}_{R \times R} - \sum_{j=1}^{i-1} \Mat{w}_j \Mat{\xi}_j^{\top} - \left(\Mat{\xi}_i - \sum_{j=1}^{i-1} (\Mat{\xi}_j^{\top} \Mat{\xi}_i) \Mat{w}_j \right) \Mat{\xi}_i^{\top} \\
&= \Mat{I}_{R \times R} - \sum_{j=1}^{i-1} \Mat{w}_j \Mat{\xi}_j^{\top} - \Mat{w}_i \Mat{\xi}_i^{\top}
= \Mat{I}_{R \times R} - \sum_{j=1}^{i} \Mat{w}_j \Mat{\xi}_j^{\top}.
\end{align*}
which justifies the hypothesis.
We completes the proof by writing $\Mat{w}_i$ in the matrix form:
\begin{align}
\Mat{P}_i = \Mat{I}_{R \times R} - \sum_{j=1}^{i} \left[\left(\Mat{I}_{N \times N} + \Mat{M} \odot \Mat{\Xi}\Mat{\Xi}^\top\right)^{-1} \Mat{\Xi}\right]_j \Mat{\xi}_j^{\top},
\end{align}
as desired.
\end{proof}

\begin{lemma}[\citet{peng2025rwkv}]
\label{lem:swap}
Given a swap group of $n$ elements $S(n)$, for every swap $[i \leftrightarrow j]$, its matrix representation $\Mat{S} \in S(n)$ can be written as $\Mat{S} = \Mat{I}_{n} - (\Mat{\delta}_i - \Mat{\delta}_j)(\Mat{\delta}_i - \Mat{\delta}_j)^\top$.
\end{lemma}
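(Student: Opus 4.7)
The plan is to verify the matrix identity by evaluating both sides on an arbitrary standard basis vector and showing the outputs agree, which suffices by linearity. Let $\Mat{v} = \Mat{\delta}_i - \Mat{\delta}_j \in \real^n$. I would first record the elementary observation that $\Mat{v}^\top \Mat{\delta}_k$ equals $1$ if $k = i$, $-1$ if $k = j$, and $0$ otherwise. This three-way dichotomy is precisely what aligns with the three cases in the definition of the transposition $[i \leftrightarrow j]$: fix $k$ if $k \notin \{i,j\}$, send $i \mapsto j$, and send $j \mapsto i$.

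Next I would compute $(\Mat{I}_n - \Mat{v}\Mat{v}^\top)\Mat{\delta}_k$ in each case. For $k \notin \{i,j\}$, the inner product vanishes, so the output is $\Mat{\delta}_k$, matching the swap. For $k = i$, the output is $\Mat{\delta}_i - \Mat{v} \cdot 1 = \Mat{\delta}_i - (\Mat{\delta}_i - \Mat{\delta}_j) = \Mat{\delta}_j$. For $k = j$, the output is $\Mat{\delta}_j - \Mat{v} \cdot (-1) = \Mat{\delta}_j + (\Mat{\delta}_i - \Mat{\delta}_j) = \Mat{\delta}_i$. In every case the result coincides with $\Mat{S}\Mat{\delta}_k$, so by linearity $\Mat{S} = \Mat{I}_n - \Mat{v}\Mat{v}^\top$.

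There is essentially no obstacle here; this is a classical Householder-style identity. The only minor subtlety worth flagging is that the standard Householder reflection formula reads $\Mat{I}_n - 2\Mat{v}\Mat{v}^\top/\|\Mat{v}\|^2$, and here $\|\Mat{v}\|^2 = 2$, so the factor of $2$ cancels the $1/\|\Mat{v}\|^2$ and no normalization is needed. This is a special feature of the unit-difference vector and makes the formula as stated exactly correct without any scaling. The proof is therefore a short basis-by-basis verification requiring no additional machinery.
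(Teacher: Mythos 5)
Your proposal is correct and follows essentially the same route as the paper: the paper's one-line proof simply notes that the columns of $\Mat{S}$ are $\Mat{S}_i = \Mat{\delta}_j$, $\Mat{S}_j = \Mat{\delta}_i$, and $\Mat{S}_k = \Mat{\delta}_k$ otherwise, which is exactly your basis-by-basis verification of $(\Mat{I}_n - \Mat{v}\Mat{v}^\top)\Mat{\delta}_k$ written out in full. Your remark about the normalization $\lVert \Mat{v} \rVert^2 = 2$ cancelling the factor of $2$ in the Householder formula is a nice observation but not needed for the verification itself.
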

\begin{proof}
Proof by noticing that $\Mat{S}_i = \Mat{\delta}_j$, $\Mat{S}_j = \Mat{\delta}_i$, and $\Mat{S}_k = \Mat{\delta}_k$ for every $k \ne i$ and $k \ne j$. 
\end{proof}

\subsection{Formal Statement and Proof for Proposition \ref{prop:stability}}
\label{app:prf_stable}

We first formulate a general computational paradigm of transformers with TAPE.

\paragraph{Inputs.}
Let $\Mat{X}^{(0)} \in \real^{N \times C}$ be the input sequence of token embeddings.
We consider two types of initialization of positional encoding:
\begin{itemize}
\item \textit{RoPE.} Construct positional embeddings as a tensor: $\Mat{E}^{(0)} \in \real^{N 
\times C/2 \times 2 \times 2}$.
For every $i \in [N], m \in [C/2]$, we let $\Mat{e}^{(0)}_{i, m} = \begin{bmatrix} \cos(\theta_m i) & -\sin(\theta_m i) \\ \sin(\theta_m i) & \cos(\theta_m i) \end{bmatrix}$, where $\theta_{m} = -10000^{2m / C}$.
\item \textit{Reweighted Random Fourier Features.} 
Construct positional embeddings as a tensor: $\Mat{E}^{(0)} \in \real^{N 
\times M \times L \times R}$.
For every $i \in [N], m \in [M], l \in [L]$
\begin{align*}
\Mat{e}^{(0)}_{i, m, l} = \sqrt{\frac{2}{R}}\begin{bmatrix} &  \cdots & w_{m, l, r} \cos(\theta_{m, r} i) & w_{m, l, r} \sin(\theta_{m, r} i) & \cdots & \end{bmatrix}^\top, r \in [R/2]
\end{align*}
where $\{\theta_{m, r}\}_{m \in [M], r \in [R/2]}$ are often chosen as (quasi-)Monte-Carlo samples from a distribution, and $\{w_{m, l, r}\}_{m \in [M], l \in [L], r \in [R/2]}$ are a collection of coefficients.
\end{itemize}

\paragraph{Model.}
We consider a transformer $F: \real^{N \times C} \times \real^{N \times M \times L \times R} \rightarrow \real^{N \times C}$ consisting of $T$ transformer blocks. For the $t$-th block, we consider it employs function $f^{(t)}: \real^{N \times C} \times \real^{N \times M \times L \times R} \rightarrow \real^{N \times C}$ to update features, and function $g^{(t)}: \real^{N \times C} \times \real^{N \times M \times L \times R} \rightarrow \real^{N \times M \times L \times R}$ to update positional embeddings:
\begin{align*}
\Mat{X}^{(t)} = f^{(t)}(\Mat{X}^{(t-1)}, \Mat{E}^{(t-1)}), \quad \Mat{E}^{(t)} = g^{(t)}(\Mat{X}^{(t-1)}, \Mat{E}^{(t-1)}), \quad t \in [T].
\end{align*}
We denote $\Mat{X}^{(T)}$ as the final output of $F(\Mat{X}^{(0)}, \Mat{E}^{(0)})$.
We assume $f^{(t)}$ and $g^{(t)}$ jointly satisfies our invariance properties Eqs. \ref{eqn:perm_equiv} and \ref{eqn:ortho_equiv}.

\paragraph{Phase Shift.}
We say a transformer is invariant to translation if the final outputs $\Mat{X}^{(T)}$ remains unchanged when the input token indices are shifted by an offset.
This implies the attention mechanism inside depends on positional information based on the relative distances instead of absolute indices. 
Formally, when an offset $\Delta \in \real$ is applied to all positions, the initial positional embeddings undergo a phase shift.
We denote the positional embeddings with translation $\Delta$ as $\widetilde{\Mat{E}}^{(0)}$, whose per-token representations $\{\widetilde{\Mat{e}}^{(0)}_i\}_{i \in [N]}$ can be written as:
\begin{itemize}
\item \textit{RoPE.} $\widetilde{\Mat{e}}^{(0)}_{i, m} = \begin{bmatrix} \cos(\theta_m (i + \Delta)) & -\sin(\theta_m (i + \Delta)) \\ \sin(\theta_m (i + \Delta)) & \cos(\theta_m (i + \Delta)) \end{bmatrix}$ for every $i \in [N], m \in [C/2]$.
\item \textit{Random Fourier Features.} 
For every $i \in [N], m \in [M], l \in [L]$
\begin{align*}
\widetilde{\Mat{e}}^{(0)}_{i, m, l} = \sqrt{\frac{2}{R}}\begin{bmatrix} & \cdots & w_{m, l, r} \cos(\theta_{m, r} (i+\Delta)) & w_{m, l, r} \sin(\theta_{m, r} (i+\Delta)) & \cdots & \end{bmatrix}^\top.
\end{align*}
\end{itemize}
We denote the intermediate outputs yielded by shifted positional embeddings as $(\widetilde{\Mat{X}}^{(t)}, \widetilde{\Mat{E}}^{(t)})$, for every $t \in [T]$.

\paragraph{Main Result.}
We provide a formal version and proof of Proposition \ref{prop:stability} as below:
\begin{proposition}[Formal version of Proposition \ref{prop:stability}]
Assume $f^{(t)}$ and $g^{(t)}$ satisfies Eqs. \ref{eqn:perm_equiv} and \ref{eqn:ortho_equiv} for every $t \in [T]$. Then for any shift $\Delta \in \real$, we have $F(\Mat{X}^{(0)}, \Mat{E}^{(0)}) = F(\Mat{X}^{(0)}, \widetilde{\Mat{E}}^{(0)})$.
\end{proposition}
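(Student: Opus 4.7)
The plan is to cast the index shift by $\Delta$ as an orthogonal right-action on the initial positional tensor, propagate that action through all $T$ layers via the assumed equivariance hypotheses, and then read off translation invariance from the final token features.

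First, I would verify via the cosine/sine addition identities that both the RoPE and the reweighted random-Fourier initializations admit the block-wise factorization
\begin{align*}
\widetilde{\Mat{e}}^{(0)}_{i,m} = \Mat{e}^{(0)}_{i,m}\, \Mat{R}_m \qquad \forall\, i \in [N],\, m \in [M],
\end{align*}
where $\Mat{R}_m \in O(R)$ depends only on $\Delta$ and the block-specific frequencies, not on the token index $i$. For RoPE, $\Mat{R}_m$ is the planar rotation by angle $\theta_m\Delta$; for random Fourier features, $\Mat{R}_m$ is a direct sum of $R/2$ planar rotations by angles $\theta_{m,r}\Delta$, with the row weights $w_{m,l,r}$ passing through unchanged since they act on the $l$-index rather than on the columns being rotated. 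This is a routine trigonometric calculation.

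Second, I would run an induction on the layer index $t\in\{0,\dots,T\}$ to establish the joint invariants $\widetilde{\Mat{X}}^{(t)} = \Mat{X}^{(t)}$ and $\widetilde{\Mat{e}}^{(t)}_{i,m} = \Mat{e}^{(t)}_{i,m}\,\Mat{R}_m$ for every $i, m$. The base case $t=0$ follows from Step 1 together with the shared input $\Mat{X}^{(0)}$. For the inductive step I would specialize Eq.~\ref{eqn:perm_equiv} with $\Mat{P}=\Mat{I}$ — i.e., $O(R)$-invariance of $f^{(t)}$ in its positional argument — to conclude $\widetilde{\Mat{X}}^{(t+1)} = \Mat{X}^{(t+1)}$, and Eq.~\ref{eqn:ortho_equiv} with $\Mat{P}=\Mat{I}$ to propagate the right-rotation by $\Mat{R}_m$ to the next positional tensor. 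Taking $t=T$ then yields
\begin{align*}
F(\Mat{X}^{(0)}, \widetilde{\Mat{E}}^{(0)}) = \widetilde{\Mat{X}}^{(T)} = \Mat{X}^{(T)} = F(\Mat{X}^{(0)}, \Mat{E}^{(0)}),
\end{align*}
which is the claimed translation invariance.

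The principal obstacle is that Eq.~\ref{eqn:ortho_equiv} is stated with a single global $\Mat{R}\in O(R)$, whereas the shift induces the block-dependent family $\{\Mat{R}_m\}_{m\in[M]}$ which in general cannot be collapsed into a rotation shared across all $m$. To close this gap, I would argue that the concrete TAPE architecture of Sec.~\ref{sec:tape_arch} obeys the stronger block-wise equivariance under the product group $O(R)^M$: the bilinear form $\phi(\Mat{e}_{i,m}\Mat{e}_{j,m}^{\top})$ in Eq.~\ref{eqn:f_attn} and the block-local softmax aggregation in Eq.~\ref{eqn:g_attn} act separately in each $m$, giving block-wise invariance and equivariance there for free. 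The delicate case is the positional MLP in Eq.~\ref{eqn:g_mlp}, whose left multiplication on the flattened $(M,L)$-axis mixes blocks; here one must either impose additional block-diagonal structure on $\Mat{W}_1, \Mat{W}_2$, or appeal to the zero-initialization of $\Mat{W}_2$ from Sec.~\ref{sec:peft} to ensure the per-block rotations transport intact through this layer. Verifying this last point is the principal technical hurdle of the argument.
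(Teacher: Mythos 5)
Your two steps are exactly the paper's own proof. The paper likewise begins by writing the index shift as a right orthogonal action on the initial embeddings, $\widetilde{\Mat{e}}^{(0)}_{i,m} = \Mat{e}^{(0)}_{i,m}\Mat{O}_{\Delta,m}$, both for RoPE and for the reweighted random Fourier features (with the weights $w_{m,l,r}$ factoring out, as you note), and then runs the same induction over layers: Eq.~\ref{eqn:perm_equiv}/\ref{eqn:ortho_equiv} applied with $\Mat{P}=\Mat{I}$ keeps $\widetilde{\Mat{X}}^{(t)}=\Mat{X}^{(t)}$ and transports the rotation onto $\Mat{E}^{(t)}$, so that $\widetilde{\Mat{X}}^{(T)}=\Mat{X}^{(T)}$.

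Regarding your ``principal obstacle'': the observation is genuine --- the shift induces block-dependent rotations $\Mat{R}_m$ (angle $\theta_m\Delta$), which is an element of the product group, not a single $\Mat{R}\in O(R)$ acting uniformly on the last axis --- but the paper does not resolve it any more carefully than you do. Its proof simply right-multiplies by ``the initialization-specific orthogonal transformation'' $\Mat{O}$ and invokes the hypothesis, i.e.\ it implicitly reads Eqs.~\ref{eqn:perm_equiv}--\ref{eqn:ortho_equiv} as holding for such block-wise orthogonal transformations. That reading is in fact necessary: a map that is equivariant only to a uniform $\Mat{R}$ can fail shift-invariance (e.g.\ one whose output depends on a cross-block product $\Mat{e}_{i,m}\Mat{e}_{i,m'}^\top$, which is unchanged by a shared $\Mat{R}$ but varies with $i$ under RoPE since $\theta_m\neq\theta_{m'}$). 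So the clean fix is to strengthen or reinterpret the hypothesis block-wise, after which your Steps 1--2 close the argument; since the proposition is a conditional statement about abstract $f^{(t)},g^{(t)}$, nothing about the concrete architecture is needed. Your proposed detour through Sec.~\ref{sec:tape_arch} would not succeed as stated: Eq.~\ref{eqn:g_mlp} with unconstrained $\Mat{W}_1,\Mat{W}_2\in\real^{ML\times I}$ mixes rows carrying different $\Mat{R}_m$ and is indeed not equivariant to per-block rotations (your worry is well founded; only block-structured weights, or the shared-across-$M,L$ parameterization that mixes heads with identical frequencies, restore it), and appealing to the zero initialization of $\Mat{W}_2$ from Sec.~\ref{sec:peft} cannot support a claim that must hold for trained weights.
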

\begin{proof}
First, we observe that a shift on the token indices translates to an orthogonal transformation on the embedding space for both RoPE and random Fourier features.
For RoPE, this can be seen by:
\begin{align*}
\widetilde{\Mat{e}}^{(0)}_{i, m} = \begin{bmatrix} \cos(\theta_m i) & -\sin(\theta_m i) \\ \sin(\theta_m i) & \cos(\theta_m i) \end{bmatrix} \underbrace{\begin{bmatrix} \cos(\theta_m \Delta) & -\sin(\theta_m  \Delta) \\ \sin(\theta_m \Delta) & \cos(\theta_m \Delta) \end{bmatrix}}_{\Mat{O}_{RoPE, \Delta, m}},
\end{align*}
where the extracted matrix $\Mat{O}_{RoPE, \Delta, m}$ is an orthogonal matrix.
For random Fourier features, we observe that $\widetilde{\Mat{e}}_{i, m} = \Mat{e}^{(0)}_{i, m} \Mat{O}_{RFF, \Delta, m}$, where $\Mat{O}_{RFF, \Delta, m} = \diag(\Mat{O}_{RFF, \Delta, m, 1}, \cdots \Mat{O}_{RFF, \Delta, m, R/2})$, and each $\Mat{O}_{RFF, \Delta, r} \in \real^{2 \times 2}$ is defined as:
\begin{align*}
\Mat{O}_{RFF, \Delta, m, r} = \begin{bmatrix} \cos(\theta_{m, r} \Delta) & -\sin(\theta_{m, r}  \Delta) \\ \sin(\theta_{m, r} \Delta) & \cos(\theta_{m, r} \Delta) \end{bmatrix},
\end{align*}
which shows the orthogonality of $\Mat{O}_{RFF, \Delta, m}$.

Now we apply the orthogonal invariance in an inductive argument.
We make hypothesis that $\Mat{X}^{(t)} = \widetilde{\Mat{X}}^{(t)}$ and $\Mat{E}^{(t)} = \widetilde{\Mat{E}}^{(t)} \Mat{O}$ for some $t \in [T] \cup \{0\}$ with $\Mat{O}$ corresponding to the initialization specific orthogonal transformation.
It is obvious that this holds for $t = 0$.
Then by the orthogonal invariance and equivariant of $f^{(t+1)}$ and $g^{(t+1)}$, we have that for every $t \in [T-1] \cup \{0\}$:
\begin{align*}
&\widetilde{\Mat{X}}^{(t+1)} = f^{(t+1)}(\widetilde{\Mat{X}}^{(t)}, \widetilde{\Mat{E}}^{(t)}) = f^{(t+1)}(\Mat{X}^{(t)}, \Mat{E}^{(t)}\Mat{O}) = f^{(t+1)}(\Mat{X}^{(t)}, \Mat{E}^{(t)}) = \Mat{X}^{(t+1)} \\
&\widetilde{\Mat{E}}^{(t+1)} = g^{(t+1)}(\widetilde{\Mat{X}}^{(t)}, \widetilde{\Mat{E}}^{(t)}) = g^{(t+1)}(\Mat{X}^{(t)}, \Mat{E}^{(t)}\Mat{O}) = g^{(t+1)}(\Mat{X}^{(t)}, \Mat{E}^{(t)}) \Mat{O} = \Mat{E}^{(t+1)} \Mat{O},
\end{align*}
which concludes the proof by induction.
\end{proof}

\section{Implementation Details}\label{app:impl}

\paragraph{Experiment Settings.}
In Sec.\ref{sec:exp:arithmetic}, the model architecture includes 16 layers, a hidden dimension of 1024, an intermediate dimension of 2048, and 16 attention heads, resulting in approximately 120M parameters. In Sec.\ref{subsec:pretrain}, the architecture features 12 layers, a hidden dimension of 768, an intermediate dimension of 3072, and 12 attention heads, totaling approximately 155M parameters.
The training recipe in three experiments are presented in Tab.~\ref{tab:training_recipe}.

\begin{table}[ht]
    \centering
    \caption{Training recipe for language model pre-training and fine-tuning in experiments.}
    \label{tab:training_recipe}
    \resizebox{\columnwidth}{!}{\begin{tabular}{lcccc}
        \toprule
        & \textbf{Arithmetic (\S \ref{sec:exp:arithmetic})} & \textbf{C4 Pre-training (\S \ref{subsec:pretrain})} & \textbf{SCROLLS (\S \ref{subsec:pretrain})}& \textbf{Context Extension (\S \ref{subsec:peft})}\\
        \midrule
        Sequence length & 40 + 40 & 1024 & 1024 & 8096 \\
        Batch size               & 512  & 512 & 64 & 64\\
        Number of iterations     & 20k & 10k & 1k & 1k \\
        Attention dropout prob.  & 0.0  & 0.0 & 0.0 & 0.0 \\
        Optimizer                & AdamW & AdamW & AdamW & AdamW \\
        Learning rate            & $1 \times 10^{-4}$ & $1 \times 10^{-4}$ & $1 \times 10^{-5}$ & $2 \times 10^{-5}$ \\
        \bottomrule
    \end{tabular}}
\end{table}

\paragraph{Masked Multi-Head Mechanism.}
The masked multi-head attention is a key design in the original Transformer and is well compatible with our method. To enforce causality in language generation, the Transformer masks out (sets to $-\infty$) all values in the input to the softmax that correspond to illegal connections from future tokens to current tokens. This is similarly implemented in our enhanced Transformer for language modeling. To allow the model to jointly attend to information from different representation subspaces at different positions, multiple attention outputs are computed in parallel with multiple attention heads, and then mixed through concatenation and a linear transformation. In our enhanced Transformer, the head dimension is added to both token embeddings and positional embeddings, resulting in $\Mat{X} \in \mathbb{R}^{N \times H \times M \times B}$ and $\Mat{E} \in \mathbb{R}^{N \times H \times M \times L \times R}$, where $H$ denotes the number of heads.

\paragraph{Parameterization in Position Contextualization.}

The shapes of \( \Mat{W}_{1} \) and \( \Mat{W}_{2} \) allow for considerable flexibility. Given $\Mat{e}_i \in \mathbb{R}^{H \times M \times L \times R}$. To achieve maximal expressiveness, \( {\Mat{e}}_{i} \) can be flattened into \( \mathbb{R}^{HML \times R} \), with \( \Mat{W}_1 \) and \( \Mat{W}_2 \in \mathbb{R}^{ HML \times I } \). Alternatively, to minimize parameter usage, we set \( \Mat{W}_1 \) and \( \Mat{W}_2 \) as \( \mathbb{R}^{H \times I } \), with weights shared across the \( M \) and \( L \) dimensions. $\psi$ is implemented through a standard MLP and the $I$ dimension is set to $4H$ in all experiments.

\paragraph{Visualization of Tensor Operations.} To provide a clearer understanding of TAPE and the operation within the attention and feed-forward layers, we visualize the process in Fig.~\ref{fig:tensor_op}.  
\begin{figure}[ht]
    \centering
    \includegraphics[width=0.9\linewidth, clip]{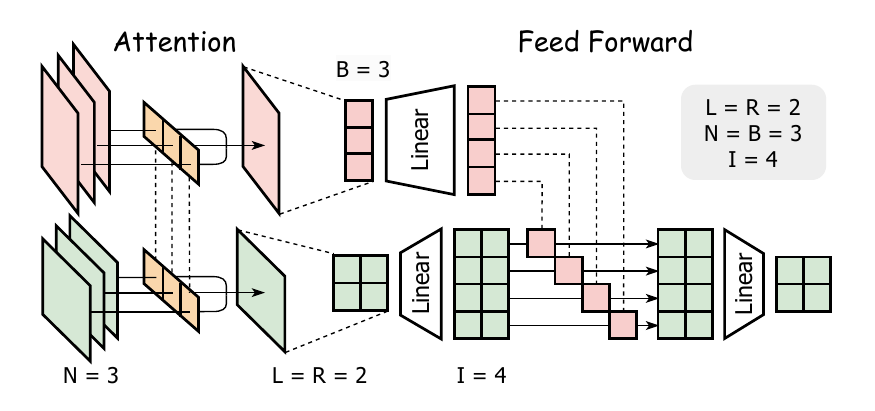}
    \caption{Visualization of TAPE's operations. The channel dimension is omitted for simplicity as all operations can be channel-wise. In the attention layer, the input token embeddings have a shape of $N \times B$, and the positional embeddings have a shape of $N \times L \times R$. For the feed-forward layer, the $N$ dimension is omitted as its operations are position-wise. The input token embeddings then have a shape of $B$ (or $B \times 1$), and the positional embeddings have a shape of $L \times R$.}

    \label{fig:tensor_op}
\end{figure}

\section{Additional Experiments}
\label{sec:add_expr}

\paragraph{Ablation Study on Architecture.} We ablate our architecture design for both attention layer and MLP layer in position contextualization. We conduct ablation studies on our architectural design for both the attention layer and the MLP layer in position contextualization. Additionally, we ablate two aspects of the design: rotation equivariance, by setting $\Mat{W}_1, \Mat{W}_2 \in \real^{HR \times I}$, which disrupts the $O(R)$-equivariance; the use of tensorial embeddings, by flattening $L=R=2$ into $L=1$ and $R=4$; and both properties simultaneously, by setting $L=4$ and $R=1$. We use the same pre-training setting as Sec.~\ref{subsec:pretrain} and directly report its perplexity in the test dataset of Github following~\citet{he2024two}.

\begin{table}[htbp]
    \centering
    \caption{Ablation study on TAPE architecture. We evalute pre-trained models’ perplexity across varying sequence lengths on the GitHub test set.}
    \label{tab:ablation}
\begin{tabular}{cccccc}
        \toprule
        \multicolumn{2}{c}{Architecture} & \multicolumn{4}{c}{Perplexity} \\ 
        \cmidrule(lr){1-2} \cmidrule(lr){3-6}
        \textbf{Attention} & \textbf{Feed Forward} & \textbf{128} & \textbf{256} & \textbf{512} & \textbf{1024} \\ 
        \midrule
        \ding{55} & \ding{55} & 139.2 & 92.8 & 69.3 & 57.2 \\ 
        \ding{55} & \ding{51} &  143.3 & 95.0 & 70.7 & 58.4 \\ 
        \ding{51} & \ding{55} &  142.7 & 94.3 & 70.1 & 57.6 \\ 
        \ding{51} & \ding{51} &  132.0 & 86.6 & 63.9 & 52.2 \\ 
        \toprule
        \textbf{Rotation Equivariance} & \textbf{Tensorial Embedding} & & &  &\\
        \midrule
        \ding{55} & \ding{55} &  140.7 & 92.1 & 68.2 & 56.2 \\
        \ding{51} & \ding{55} &  138.4 & 91.3 & 67.8 & 55.7 \\
        \ding{55} & \ding{51} &  132.9 & 87.8 & 65.4 & 54.1 \\
        \ding{51} & \ding{51} &  132.0 & 86.6 & 63.9 & 52.2 \\ 
        \bottomrule
    \end{tabular}
    
\end{table}

As shown in Tab.~\ref{tab:ablation}
, incorporating position contextualization in both the attention layer and the MLP layer results in the lowest perplexity across different positions within the training sequence length. Removing position contextualization from either layer increases perplexity, even exceeding that of the traditional positional embedding without any architectural modifications. This outcome is reasonable, as applying position contextualization to only one component introduces an architectural inconsistency. Furthermore, ablating rotation equivariance allows all neurons in the positional embedding to undergo linear transformations, increasing the number of parameters but leading to worse results compared to TAPE. Similarly, reducing the tensorial embedding to a vector embedding leads to higher perplexities and a decline in performance.

\paragraph{Ablation Study on TAPE Hyperparameter.}
We aim to investigate the impact of varying $I$ on learning performance. Using the same pre-training settings as described in Section 4.2, we directly report the perplexity on the GitHub test dataset. 
As shown in Tab.~\ref{tab:ablation_B}, there is no significant difference when using different values of \( I \), although a trend of first decreasing and then increasing can be observed. This suggests that a range of \( I \) values from \( 2H = 24 \) to \( 3H = 48 \) may yield better performance compared to other settings. Therefore, as a general guideline, we recommend considering \( I \in \{2, 3, 4\}H \) to optimize TAPE's performance.
\begin{table}[htbp]
    \centering
    \caption{Ablation study on TAPE hyperparameter $I$. We evalute pre-trained models’ perplexity across varying sequence lengths on the GitHub test set.}
    \label{tab:ablation_B}
\begin{tabular}{cccccc}
        \toprule
        \multicolumn{2}{c}{TAPE} & \multicolumn{4}{c}{Perplexity} \\ 
        \cmidrule(lr){1-2} \cmidrule(lr){3-6}
        \textbf{Added Params. (M) } & $\mathbf{I}$ & \textbf{128} & \textbf{256} & \textbf{512} & \textbf{1024} \\ 
        \midrule
        0.11 & 12 & 133.2 & 87.9 & 65.2 & 53.6 \\ 
        0.22 & 24 & 133.0 & 86.1 & 63.2 & 51.8 \\ 
        0.44 & 48 & 132.0 & 86.6 & 63.9 & 52.2 \\ 
        0.88 & 96 & 133.2 &  87.5 & 64.5 & 52.7\\ 
        1.76 & 192 &  133.0 & 87.3 & 64.5 & 53.0 \\ 
        \bottomrule
    \end{tabular}
\end{table}

\paragraph{Stability of TAPE under Positional Shifts.} 
Stability in this context refers to the consistency of a sequence’s representation under positional shifts~\citep{sun2022length}. To evaluate the stability of TAPE, we examine two types of positional shifts: (1) appending [BOS] tokens at the beginning of the sequence and (2) initializing positional indices with non-zero values to simulate a phase shift~\citep{sinha2022curious}. We analyze two aspects of the representation: the attention weights and the dot product of positional embeddings, quantifying their changes after applying positional shifts. For comparison, we include RoPE, which also exhibits $O(R)$-equivariance ($R=2$) and remains consistent across layers, as well as TAPE without equivariance, as explored in previous ablations.

\begin{table}[htbp]
    \centering
    \caption{Comparison of RoPE, TAPE, and TAPE without equivariance (w/o EQ) under positional shifts. The table shows differences in attention weights (top) and positional embedding dot products (bottom) across layers for two shift methods: adding three [BOS] tokens (``Add Tokens") and starting position IDs at 3 (``Shift IDs").}
\vspace{1em}
    
    \label{tab:shift}
    \begin{tabular}{lcccccccc}
    \toprule
    \multirow{2}{*}{\shortstack{Atten. Diff. \\ ($ \times 10^{-2} $)}} & \multicolumn{4}{c}{Add Tokens} & \multicolumn{4}{c}{Shift IDs} \\
    \cmidrule(lr){2-5} \cmidrule(lr){6-9}
    & Layer 1 & Layer 2 & Layer 4 & Layer 8 & Layer 1 & Layer 2 & Layer 4 & Layer 8 \\
    \midrule
    RoPE & 8.93 & 8.51 & 12.29 & 11.46 & 0.01 & 0.02 & 0.02 & 0.03 \\
    TAPE & 9.08 & 11.24 & 12.23 & 13.78 & 0.01 & 0.02 & 0.04 & 0.04  \\
    w/o EQ & 11.30 & 11.38 & 13.32 & 14.55 & 0.01 & 0.24 & 0.37 & 0.51 \\
    \bottomrule
    \end{tabular}
    
\vspace{0.5em}
    \begin{tabular}{lcccccccc}
    \toprule
    \multirow{2}{*}{\shortstack{PE Dot Prod. \\ Diff. (\%)}} & \multicolumn{4}{c}{Add Tokens} & \multicolumn{4}{c}{Shift IDs} \\
    \cmidrule(lr){2-5} \cmidrule(lr){6-9}
    & Layer 1 & Layer 2 & Layer 4 & Layer 8 & Layer 1 & Layer 2 & Layer 4 & Layer 8 \\
    \midrule
    RoPE & 0.03 & 0.03 & 0.03 & 0.03 & 0.03 & 0.03 & 0.03 & 0.03 \\
    TAPE & 0.03 & 0.37 & 2.75 & 6.62 & 0.03 & 0.02 & 0.03 & 0.04 \\
    w/o EQ & 0.03 & 2.29 & 3.34 & 6.37 & 0.03 & 0.54 & 0.44 & 0.86 \\
    \bottomrule
    \end{tabular}
\end{table}

As shown in Tab.~\ref{tab:shift}, TAPE demonstrates stability comparable to RoPE, maintaining consistent attention weights and positional embedding dot products across different layers. Among these, the near-zero change (not exactly zero, attributable to numerical error observed in RoPE as well) in the dot-product when shifting IDs serves as empirical evidence for Proposition~\ref{prop:stability}. However, when equivariance is removed from TAPE, the differences increase significantly, especially in deeper layers, highlighting the importance of equivariance in preserving stability.

\paragraph{Additional Evaluation on Fine-tuned Llama-7b.} Modern benchmarks provide a comprehensive means to assess large language models' advanced capabilities in language understanding and reasoning. Accordingly, we further evaluate our fine-tuned Llama-7b (Sec.~\ref{subsec:peft}) on standard benchmarks, including ARC~\citep{allenai:arc} and MMLU~\citep{mmlu}.

\begin{table}[htbp]
\centering
\caption{Accuracy in Percentage Across Methods and Benchmarks}
\label{tab:bench}
\begin{tabular}{lcccccccccc}
\toprule
\multirow{2}{*}{Method} & \multicolumn{4}{c}{MMLU (\%)} & \multicolumn{2}{c}{ARC (\%)} \\
\cmidrule(lr){2-5} \cmidrule(lr){6-7}
 & \textbf{Humanities} & \textbf{Social Sciences} & \textbf{STEM} & \textbf{Other} & \textbf{Challenge} & \textbf{Easy} \\
\midrule
LoRA       & 39.09 ± 0.69 & 46.47 ± 0.88 & 33.65 ± 0.83 & 45.83 ± 0.89 & 45.31 ± 1.45 & 74.28 ± 0.90 \\
LongLoRA   & 37.53 ± 0.69 & 43.55 ± 0.88 & 32.54 ± 0.83 & 43.84 ± 0.88 & 45.31 ± 1.45 & 74.16 ± 0.90 \\
ThetaScaling  & 37.45 ± 0.69 & 43.16 ± 0.88 & 33.05 ± 0.83 & 44.64 ± 0.88 & 45.65 ± 1.46 & 74.24 ± 0.90 \\
TAPE       & 37.96 ± 0.69 & 45.40 ± 0.88 & 33.27 ± 0.83 & 45.06 ± 0.88 & 46.25 ± 1.46 & 74.16 ± 0.90 \\
\bottomrule
\end{tabular}
\end{table}

As Tab.~\ref{tab:bench} shows, TAPE demonstrates notable performance compared to other methods on MMLU and ARC benchmarks. While TAPE's accuracy on MMLU is slightly lower than that of LoRA, it consistently outperforms others. On the ARC benchmark, TAPE performs comparably to other methods on the ``Easy" subset but exhibits an advantage on the ``Challenge" subset, underscoring its potential in complex reasoning tasks. Remarkably, these results are achieved using only fine-tuning, without pretraining TAPE, despite the presence of a certain degree of architectural shift.

\paragraph{Additional Evaluation in Arithmetic Learning}
We also evaluate the effectiveness of TAPE in Sec.~\ref{sec:exp:arithmetic} using a different training and testing length: 20/40 instead of 40/80. This setup is easier for the model to learn, with convergence achieved in less than half the steps. As shown in Fig.~\ref{fig:arithmetic_app}, TAPE outperforms FIRE with a marginal improvement of 5\%. However, this improvement is less pronounced compared to the case with a train/test length of 40/80, suggesting that TAPE may be more effective in tackling complex and challenging tasks than simpler ones.

\begin{figure}[htbp]
    \centering
    \includegraphics[width=\linewidth, trim=120 0 30 0, clip]{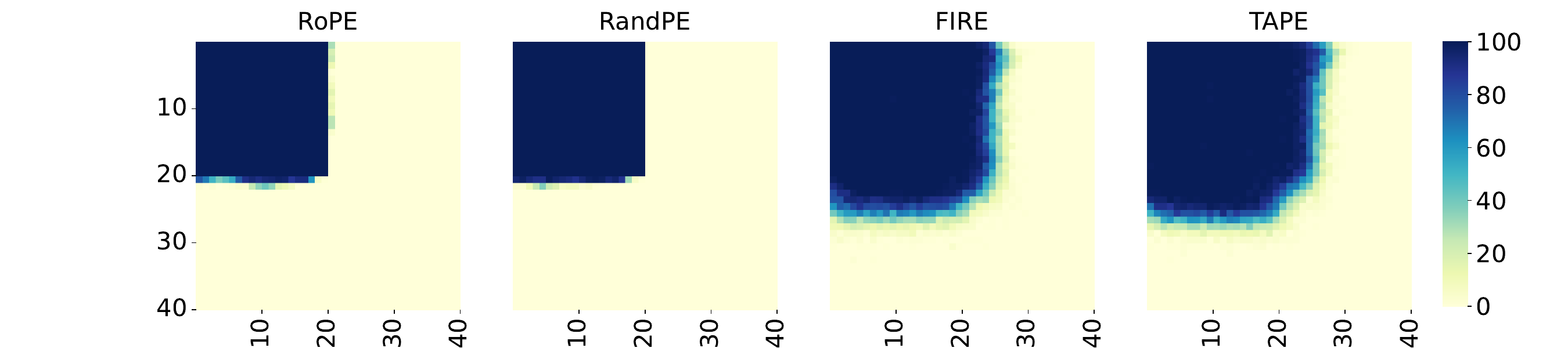}
    \caption{Accuracy on addition task 
 trained with length 20 test on 2$\times$ context length. The average accuracy across the heatmap is 26.12\%, 26.12\%, 39.44\% and 41.42\% respectively for RoPE, RandPE, FIRE and TAPE.}
    
    \label{fig:arithmetic_app}
\end{figure}
\begin{figure}[htbp]
    \centering
    \includegraphics[width=0.8\linewidth, trim=70 0 0 0, clip]{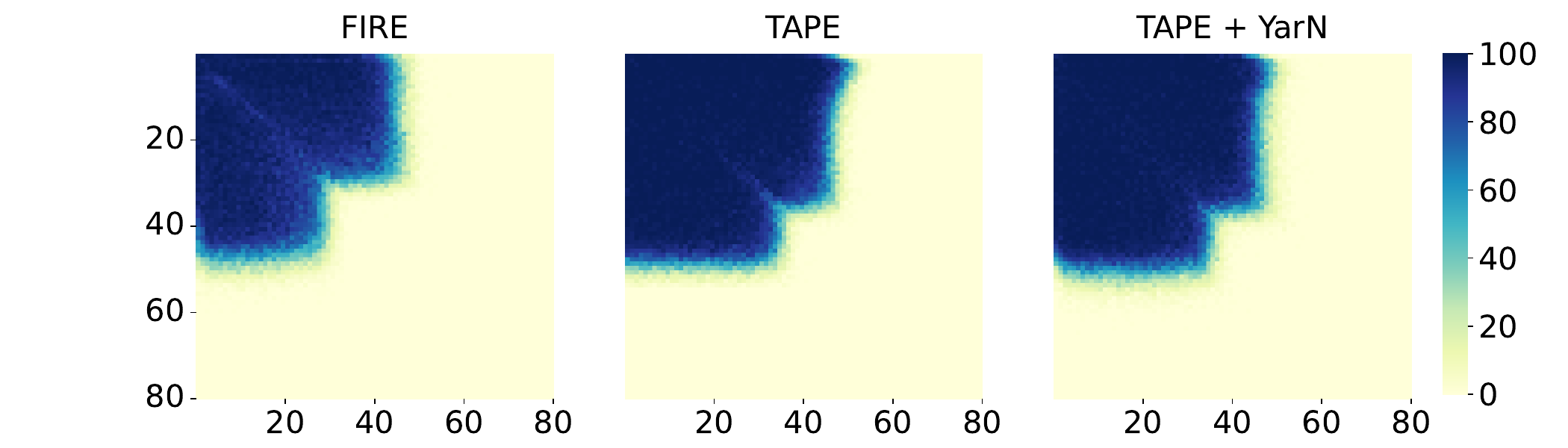}
    \caption{Accuracy on addition task on 2$\times$  context length. The average accuracy is 26.98\%, 32.82\% and 33.92\% respectively for FIRE, TAPE and TAPE + YaRN.}
    
    \label{fig:yarn}
\end{figure}

\paragraph{Integration with Extrapolation Technique.}
Inspired by the demonstrated potential of NTK-based methods~\citep{peng2023yarn} to enhance the length extrapolation ability of RoPE, we have explored integrating TAPE with such techniques when initialized as RoPE. Specifically, we selected the most recent method, YaRN~\citep{peng2023yarn}, and implemented its integration with TAPE to evaluate its performance in length extrapolation. The experiments were conducted under the same settings as described in Sec.~\ref{sec:exp:arithmetic}. 
As shown in Fig.~\ref{fig:yarn}, the diagonal region exhibits darker colors, indicating higher accuracies. Quantitatively, YaRN effectively enhances the length extrapolation performance of TAPE with RoPE initialization, achieving a modest relative improvement of 3.4\%. However, it still struggles to generalize to unseen sequences with significantly longer digit lengths.

\begin{table}[ht]
\centering
\caption{Comparing answers of different methods on example questions in QuALITY.}
\begin{tabular}{p{1cm}p{5cm}cp{4.2cm}c}
\toprule
\multirow{2}{*}{\textbf{Method}} & \multicolumn{2}{c}{\textbf{Question A}} & \multicolumn{2}{c}{\textbf{Question B}} \\ 
\cmidrule(lr){2-3} \cmidrule(lr){4-5}
& \textbf{Answer} & \textbf{EM}   & \textbf{Answer} & \textbf{EM}   \\ 
\midrule
Ground Truth
 & The secret service budget was small & \ding{51} & Only the private quarters or the office restroom & \ding{51} \\
TAPE  & The secret service budget was small & \ding{51}                   & Only the private quarters & \ding{55}  \\ 
xPos & They were all they were waiting for   & \ding{55}                  & Only a tiny part of the right of the right to leave foreverish & \ding{55}\\ 
RandPE & Their human opinion was trusted by others who have trust the services of their people & \ding{55}  & Only a handsome man & \ding{55} \\ 
RoPE & Their orless them together with their repories did not only they didn's never done was never done was never done... (repeating) & \ding{55} & The/O only the full-College All of the full-College All of the full-College... (repeating) & \ding{55} \\ 
ALiBi & Jimmy Carter is the president's de facto president & \ding{55} & Jimmy Carter is the president's de facto president & \ding{55} \\
\bottomrule
\end{tabular}
\label{tab:example_a}
\end{table}

\section{Illustrations and Interpretation}
\label{sec:inter}

\paragraph{Visualization of PE Patterns.}
To better understand the impact of TAPE, we analyze its attention and positional embedding (PE) dot-product patterns. Fig.~\ref{fig:pattern} compares the patterns of TAPE and RoPE in the last layer, while Fig.~\ref{fig:dp_layers} illustrates the evolution of TAPE’s dot-product patterns from shallow to deeper layers. The x-axis and y-axis correspond to the token positions of a sampled input sequence.

As shown in Fig.~\ref{fig:pattern}, TAPE demonstrates more evenly distributed long-range attention patterns, whereas RoPE tends to emphasize token locality. In Fig.~\ref{fig:dp_layers}, TAPE behaves similarly to RoPE in the first layer but gradually reduces the dominance of diagonal patterns as the depth increases. This transition results in the formation of grid-like patterns, indicating that the model starts to focus on distant tokens in a structured and periodic manner.
\begin{figure}[tb]
\renewcommand{\thesubfigure}{\fontsize{8}{8}\selectfont(\alph{subfigure}) }
    \centering
    \begin{subfigure}[t]{0.66\textwidth}
        \centering
        \includegraphics[width=\linewidth]{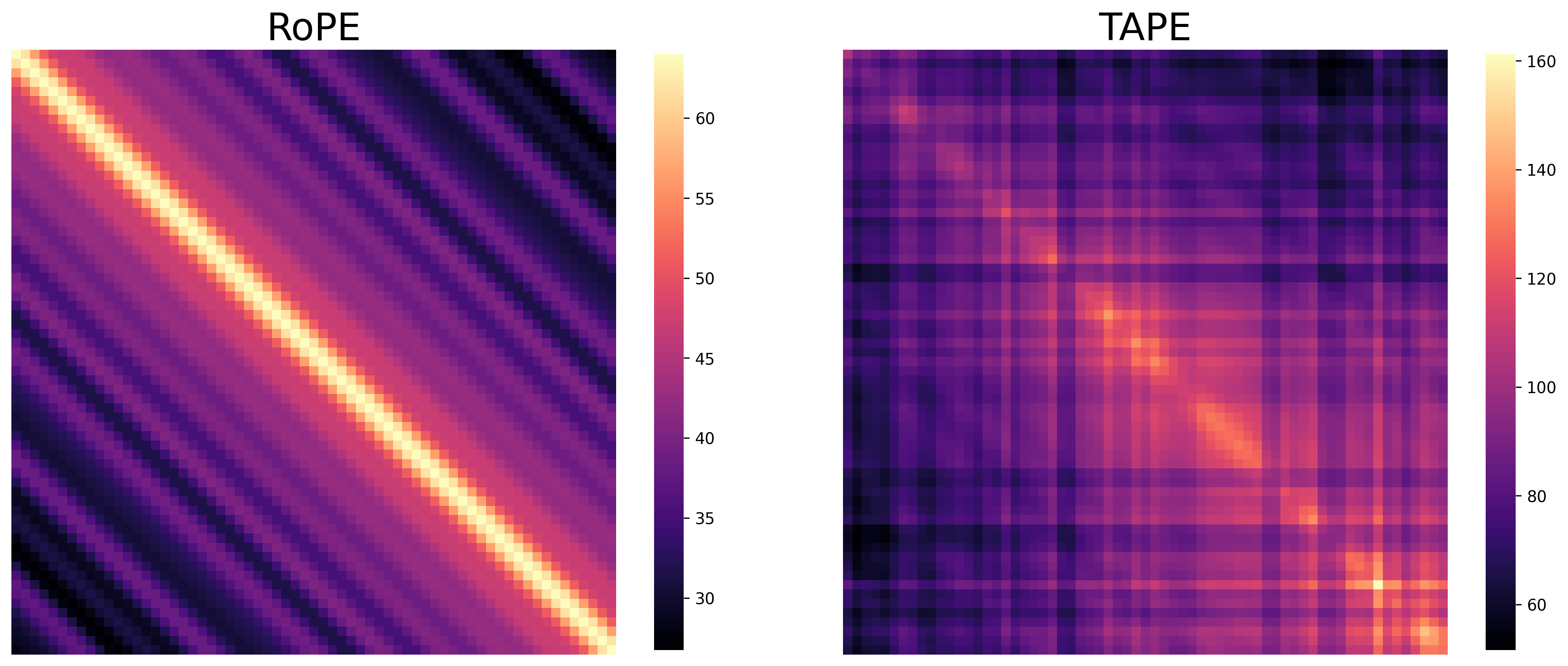}
        \caption{\fontsize{8}{10}\selectfont Dot-product patterns of positional embeddings of TAPE and RoPE. }
        \label{fig:pe_patterns}
    \end{subfigure}
    \hfill
    \begin{subfigure}[t]{0.32\textwidth}
        \centering
        \includegraphics[width=\linewidth]{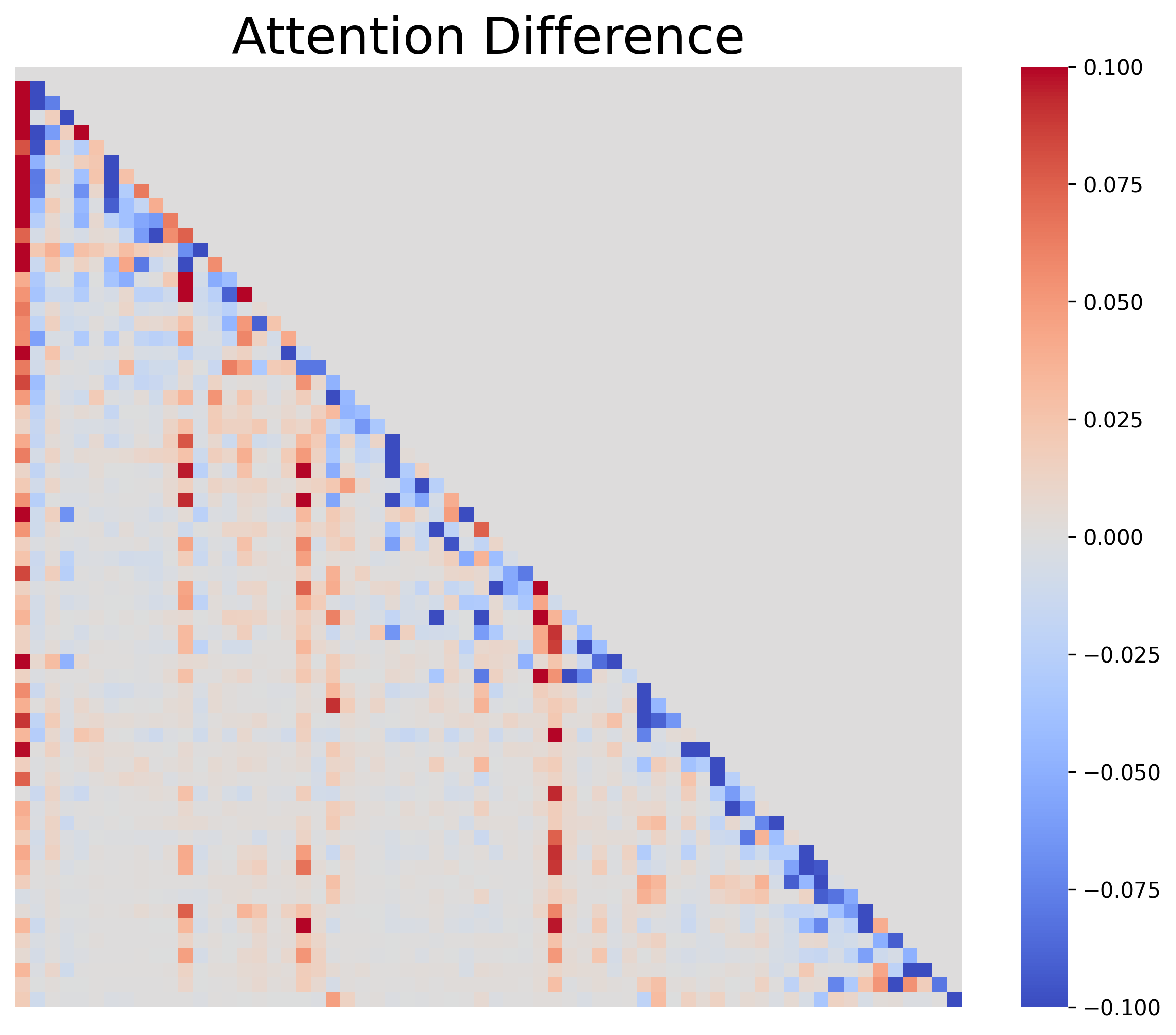}
        \caption{\fontsize{8}{10}\selectfont Difference between TAPE and RoPE}
        \label{fig:attention_difference}
    \end{subfigure}
    \caption{Comparison of TAPE and RoPE methods in terms of positional embedding dot-product patterns and their resulting attention differences. (a) TAPE demonstrates a systematic attention to surrounding tokens with relatively small dynamic ranges, whereas RoPE exhibits a highly significant diagonal pattern with distinctively black regions. (b) TAPE effectively attends to longer-range tokens, avoiding excessive attention to the self-token, in contrast to RoPE.}
    \label{fig:pattern}
\end{figure}
\begin{figure}[h!]
    \centering
    \includegraphics[width=\linewidth]{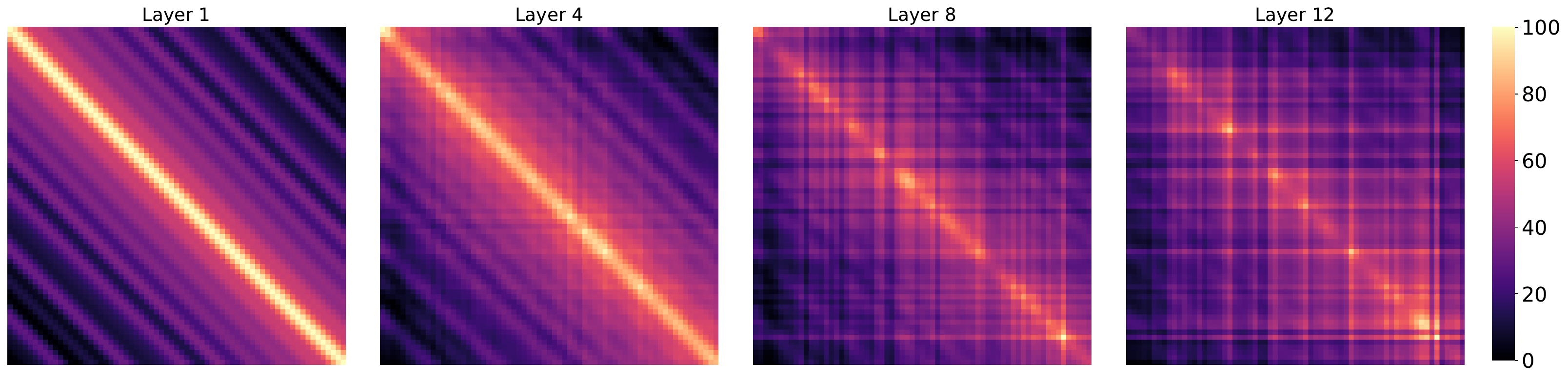}
    \caption{Dot-product patterns of positional embeddings in layers 1, 4, 8, and 12 (last) of TAPE.}
    \label{fig:dp_layers}
\end{figure}

\paragraph{Examples on QuALITY.}

To further validate TAPE's superior performance on the SCROLLS benchmark, we present two example questions from the QuALITY dataset within the SCROLLS benchmark. As shown in Tab.~\ref{tab:example_a} and the detailed questions in Tab.~\ref{tab:example_q}, TAPE consistently generates either the correct answer or a response similar to the correct answer, even if not an exact match. In contrast, xPos and RandPE produce meaningful sentences that are unrelated to the specific question. RoPE and ALiBi, however, generate incoherent outputs: RoPE tends to repeat certain phrases, while ALiBi fails to recognize the presence of a question, producing the same irrelevant answer regardless of the input.

\begin{longtable}{|p{0.4\textwidth}|p{0.5\textwidth}|}
\caption{Example Questions in QuALITY}\label{tab:example_q} \\
\hline
\textbf{Qu. A} (ID: 20007\_RZDMZJYW\_2) & \textbf{Qu. B} (ID: 20007\_RZDMZJYW\_4) \\
\hline
\endfirsthead
\multicolumn{2}{c}{\textit{Continued on next page...}} \\ %
\endfoot
\endlastfoot
\fontsize{9pt}{10pt}\selectfont

What made it easier for previous presidents to get away with adultery?
\vspace{0.2cm}

(A) Their staff did not know

(B) They always tried to hide it well

(C) The secret service budget was small

(D) The reporters never found out& 
\fontsize{9pt}{10pt}\selectfont
Where in the White House is it feasible for the president to meet a woman?
\vspace{0.2cm}

(A) Only the East Wing

(B) Only the private quarters

(C) Only the oval office, bowling alley, or East Wing

(D) Only the private quarters or the office restroom \\
\hline
\multicolumn{2}{|p{0.95\textwidth}|}{%
\textbf{Article Content:}\newline
\fontsize{8.5pt}{10pt}\selectfont
The logistics of presidential adultery.\newline
The Washington Times could hardly contain its excitement: ``A former FBI agent assigned to the White House describes in a new book how President Clinton slips past his Secret Service detail in the dead of night, hides under a blanket in the back of a dark-colored sedan, and trysts with a woman, possibly a celebrity, at the JW Marriott Hotel in downtown Washington." For Clinton-haters, Gary Aldrich's tale sounded too good to be true. And it was.\newline
The not-so-Secret-Service agent's ``source" turned out to be a thirdhand rumor passed on by Clinton scandalmonger David Brock. Those who know about White House security—Clinton staffers, the Secret Service, former aides to Presidents Reagan and Bush—demolished Aldrich's claims. Clinton couldn't give his Secret Service agents the slip (they shadow him when he walks around the White House), couldn't arrange a private visit without tipping off hotel staff, and couldn't re-enter the White House without getting nabbed. (Guards check all cars at the gate—especially those that arrive at 4 a.m.)\newline
Even so, the image resonates. For some Americans, it is an article of faith: Bill Clinton cheated on his wife when he was governor, and he cheats on her as president. But can he? Is it possible for the president of the United States to commit adultery and get away with it? Maybe, but it's tougher than you think.\newline} \\
\multicolumn{2}{|p{0.95\textwidth}|}{\fontsize{8.5pt}{10pt}\selectfont%
Historically, presidential adultery is common. Warren Harding cavorted with Nan Britton and Carrie Phillips. Franklin Roosevelt ``entertained" Lucy Rutherford at the White House when Eleanor was away. America was none the wiser, even if White House reporters were.\newline
Those who know Clinton is cheating often point to the model of John F. Kennedy, who turned presidential hanky-panky into a science. Kennedy invited mistresses to the White House for afternoon (and evening, and overnight) liaisons. Kennedy seduced women on the White House staff (including, it seems, Jackie's own press secretary).
Kennedy made assignations outside the White House, then escaped his Secret Service detail by scaling walls and ducking out back doors. If Kennedy did it, so can Clinton.\newline
Well, no. Though Clinton slavishly emulates JFK in every other way, he'd be a fool to steal Kennedy's MO d'amour. Here's why:\newline
1) Too many people would know. Kennedy hardly bothered to hide his conquests. According to Kennedy mistress (and mob moll) Judith Campbell's autobiography, those who knew about their affair included: Kennedy's personal aides and secretary (who pandered for him), White House drivers, White House gate guards, White House Secret Service agents, White House domestic staff, most of Campbell's friends, a lot of Kennedy's friends, and several Kennedy family members. Such broad circulation would be disastrous today because: \newline
2) The press would report it. Kennedy conducted his affairs brazenly because he trusted reporters not to write about them. White House journalists knew about, or at least strongly suspected, Kennedy's infidelity, but never published a story about it. Ask Gary Hart if reporters would exercise the same restraint today. Clinton must worry about this more than most presidents. Not only are newspapers and magazines willing to publish an adultery story about him, but many are pursuing it.\newline
For the same reason, Clinton would find it difficult to hire a mistress. A lovely young secretary would set off alarm bells in any reporter investigating presidential misbehavior. Says a former Clinton aide, ``There has been a real tendency to have no good-looking women on the staff in order to protect him."\newline
3) Clinton cannot avoid Secret Service protection. During the Kennedy era, the Secret Service employed fewer than 500 people and had an annual budget of about \$4 million. Then came Lee Harvey Oswald, Squeaky Fromme, and John Hinckley. Now the Secret Service payroll tops 4,500 (most of them agents), and the annual budget exceeds \$500 million (up 300 percent just since 1980). At any given time, more than 100 agents guard the president in the White House. Top aides from recent administrations are adamant: The Secret Service never lets the president escape its protection.\newline
So what's a randy president to do? Any modern presidential affair would need to meet stringent demands. Only a tiny number of trusted aides and Secret Service agents could know of it. They would need to maintain complete silence about it. And no reporters could catch wind of it. Such an affair is improbable, but—take heart, Clinton-haters—it's not impossible. Based on scuttlebutt and speculation from insiders at the Clinton, Bush, Reagan, and Ford White Houses, here are the four likeliest scenarios for presidential adultery.
1) The White House Sneak. This is a discreet variation of the old Kennedy/Campbell liaison. It's late at night. The president's personal aides have gone home. The family is away. He is alone in the private quarters. The private quarters, a.k.a. ``the residence," occupy the second and third floors of the White House. Secret Service agents guard the residence's entrances on the first floor and ground floors, but the first family has privacy in the quarters themselves. Maids and butlers serve the family there, but the president and first lady ask them to leave when they want to be alone.
The president dials a ``friend" on his private line. (Most presidents placed all their calls through the White House operators, who kept a record of each one; the Clintons installed a direct-dial line in the private quarters.) The president invites the friend over for a cozy evening at the White House. After he hangs up with the friend, he phones the guard at the East Executive Avenue gate and tells him to admit a visitor. He also notifies the Secret Service agent and the usher on duty downstairs that they should send her up to the residence.\newline
A taxi drops the woman near the East gate. She identifies herself to the guard, who examines her ID, runs her name through a computer (to check for outstanding warrants), and logs her in a database. A White House usher escorts her into the East Wing of the White House. They walk through the East Wing and pass the Secret Service guard post by the White House movie theater. The agent on duty waves them on. The usher takes her to the private elevator, where another Secret Service agent is posted. She takes the elevator to the second floor. The president opens the door and welcomes her. Under no circumstances could she enter the living quarters without first encountering Secret Service agents.\newline
Let us pause for a moment to demolish two of the splashier rumors about White House fornication. First, the residence is the only place in the White House where the president can have safe (i.e., uninterrupted) sex. He can be intruded upon or observed everywhere else—except, perhaps, the Oval Office bathroom. Unless the president is an exhibitionist or a lunatic, liaisons in the Oval Office, bowling alley, or East Wing are unimaginable. Second, the much-touted tunnel between the White House and the Treasury Department is all-but-useless to the presidential adulterer. It is too well-guarded. The president could smuggle a mistress through it, but it would attract far more attention from White House staff than a straightforward gate entry would.\newline
Meanwhile, back in the private quarters, the president and friend get comfortable in one of the 14 bedrooms (or, perhaps, the billiard room). After a pleasant 15 minutes (or two hours?), she says goodbye. Depending on how long she stays, she may pass a different shift of Secret Service agents as she departs. She exits the White House grounds, unescorted and unbothered, at the East gate.\newline
The Risks: A gate guard, an usher, and a handful of Secret Service agents see her. All of them have a very good idea of why she was there. The White House maid who changes the sheets sees other suspicious evidence. And the woman's—real—name is entered in a Secret Service computer. None of this endangers the president too much. The computer record of her visit is private, at least for several decades after he leaves office. No personal aides know about the visit. Unless they were staking out the East gate, no journalists do either. The Secret Service agents, the guard, the steward, and the maid owe their jobs to their discretion. Leaks get them fired.\newline
That said, the current president has every reason not to trust his Secret Service detail. No one seriously compares Secret Service agents (who are pros) to Arkansas state troopers (who aren't). But Clinton might not trust any security guards after the beating he took from his Arkansas posse. Also, if other Secret Service agents are anything like Aldrich, they may dislike this president. One Secret Service leak—the lamp-throwing story—already damaged Clinton. Agents could tattle again.\newline} \\
\multicolumn{2}{|p{0.95\textwidth}|}{\fontsize{8.5pt}{10pt}\selectfont%
2) The ``Off-the-Record" Visit. Late at night, after his personal aides and the press have gone home, the president tells his Secret Service detail that he needs to take an ``off-the-record" trip. He wants to leave the White House without his motorcade and without informing the press. He requests two agents and an unobtrusive sedan. The Secret Service shift leader grumbles but accepts the conditions. Theoretically, the president could refuse all Secret Service protection, but it would be far more trouble than it's worth. He would have to inform the head of the Secret Service and the secretary of the Treasury.\newline
The president and the two agents drive the unmarked car to a woman friend's house. Ideally, she has a covered garage. (An apartment building or a hotel would raise considerably the risk of getting caught.) The agents guard the outside of the house while the president and his friend do their thing. Then the agents chauffeur the president back to the White House, re-entering through the Southwest or Southeast gate, away from the press station.\newline
The Risks: Only two Secret Service agents and their immediate supervisor know about the visit. It is recorded in the Secret Service log, which is not made public during the administration's tenure. Gate guards may suspect something fishy when they see the car. A reporter or passer-by could spy the president—even through tinted windows—as the car enters and exits the White House. The friend's neighbors might spot him, or they might notice the agents lurking outside her house. A neighbor might call the police to report the suspicious visitors. All in all, a risky, though not unthinkable, venture.\newline
3) The Camp David Assignation. A bucolic, safer version of the White House Sneak. The president invites a group of friends and staffers—including his paramour but not his wife—to spend the weekend at Camp David. The girlfriend is assigned the cabin next to the president's lodge. Late at night, after the Hearts game has ended and everyone has retired to their cabins, she strolls next door. There is a Secret Service command post outside the cabin. The agents on duty (probably three of them) let her enter. A few hours later, she slips back to her own cabin.\newline
The Risks: Only a few Secret Service agents know about the liaison. Even though the guest list is not public, all the Navy and Marine personnel at Camp David, as well as the other guests, would know that the presidential entourage included an attractive woman, but not the first lady. That would raise eyebrows if it got back to the White House press room.\newline
4) The Hotel Shuffle. The cleverest strategy, and the only one that cuts out the Secret Service. The president is traveling without his family. The Secret Service secures an entire hotel floor, reserving elevators and guarding the entrance to the president's suite. The president's personal aide (a man in his late 20s) takes the room adjoining the president's. An internal door connects the two rooms, so the aide can enter the
president's room without alerting the agents in the hall. This is standard practice.
Late in the evening, the aide escorts a comely young woman back to the hotel. The
} \\
\multicolumn{2}{|p{0.95\textwidth}|}{\fontsize{8.5pt}{10pt}\selectfont%
Secret Service checks her, then waves her into the aide's room. She emerges three hours later, slightly disheveled. She kisses the aide in the hall as she leaves. Someone got lucky—but who?\newline
The Risks: The posted Secret Service agents might see through the charade. More awkwardly, the aide would be forced to play the seamy role of procurer. (He would probably do it. Kennedy's assistants performed this task dutifully.)\newline
In short, presidential adultery is just barely possible in 1996. But it would be extremely inconvenient, extremely risky, and potentially disastrous. It seems, in fact, a lot more trouble than it's worth. A president these days might be wiser to imitate Jimmy Carter, not Jack Kennedy, and only lust in his heart.
} \\
\hline
\end{longtable}

\end{document}